\documentclass[11pt,leqno]{amsart}

\usepackage{amsthm}
\usepackage{url}
\usepackage{hyperref}

\usepackage{graphicx}
\usepackage{subfigure} 
\usepackage{amsmath,esint}
\usepackage{booktabs}
\usepackage{amssymb}
\usepackage{enumerate}
\usepackage{color}
\usepackage{times}

\newtheorem{counter}{Counter}
\newtheorem{theorem}[counter]{Theorem}
\newtheorem{definition}[counter]{Definition}
\newtheorem{lemma}[counter]{Lemma}
\newtheorem{proposition}[counter]{Proposition}

\theoremstyle{remark}
\newtheorem{remark}[counter]{Remark}
\providecommand{\keywords}[1]{\textbf{\emph{Keywords: }} #1}

\newenvironment{listi}
  {\begin{list}
 {(\roman{broj1}) }
 {\usecounter{broj1}
  \setlength{\itemindent}{-1pt}
  \setlength{\listparindent}{-1pt}
  \setlength{\itemsep}{-2pt}}
  \setlength{\labelwidth}{30pt}
  \setlength{\parsep}{0pt}
  }
  {\end{list}}

\newcommand{\x}{\mathbf{x}}
\newcommand{\p}{\mathbf{p}}

\newcommand{\C}{ \mathcal{C}}

\newcommand{\X}{X}
\newcommand{\Y}{Y}
\newcommand{\G}{ \mathcal{G}}
\newcommand{\converges}[1]{ \overset{#1}{\longrightarrow}}

\newcommand{\1}{\mathbf{1}}
\newcommand{\cut}{\mathrm{Cut}}
\newcommand{\bal}{\mathrm{Bal}}

\newcommand{\rd}{\mathrm{d}}

\newcommand{\R}{\mathbb{R}}

\newcommand{\nc}{\normalcolor}
\newcommand{\real}{\mathbb{R}}

\newcommand{\N}{\mathbb{N}}

\newcommand{\te}{\textrm}

\newcommand{\veps}{\varepsilon}

\newcommand{\tTV}{\widetilde{TV}}

\newcommand{\ind}{\mathrm{Ind}}
\newcommand{\one}{\1}
\newcommand{\drho}{\rho(x) \; \rd x}
\newcommand{\D}{D}

\newcommand{\Rd}{\mathbb{R}^{d}}

\newcommand{\ra}[1]{\renewcommand{\arraystretch}{#1} }

\DeclareMathOperator{\id}{Id}

\DeclareMathOperator{\mean}{mean}

\DeclareMathOperator{\supp}{supp}

\newcommand{\Part}{\mathcal{M}}

\addtolength{\textwidth}{90pt}
\addtolength{\oddsidemargin}{-45pt}
\addtolength{\evensidemargin}{-45pt}
\addtolength{\textheight}{40pt}
\addtolength{\topmargin}{-35pt}
\numberwithin{equation}{section}



\newcounter{broj1}

\title{Consistency of Cheeger and Ratio Graph Cuts}

\author[ N. Garcia Trillos, D. Slep\v{c}ev, J. von Brecht, T. Laurent and X. Bresson]{Nicol\'as Garc\'ia Trillos$^1$, Dejan Slep\v{c}ev$^1$,  James von Brecht$^2$, Thomas Laurent$^3$, Xavier Bresson$^4$.}

\address{$^1$
Department of Mathematical Sciences, Carnegie Mellon University, Pittsburgh, PA, 15213, USA. \\
tel. +412 268-2545, 
emails: ngarciat@andrew.cmu.edu, slepcev@math.cmu.edu }

\address{$^2$ Department of Mathematics and Statistics, California State University, Long Beach
Long Beach, CA 90840, USA. \\
email: James.vonBrecht@csulb.edu
}

\address{$^3$ Department of Mathematics, Loyola Marymount University, 1 LMU Dr, Los Angeles, CA 90045, Usa.\\ email: Thomas.Laurent@lmu.edu}

\address{$^4$
Institute of Electrical Engineering,  Swiss Federal Institute of Technology (EPFL), 1015 Lausanne, Switzerland \\ email: xavier.bresson@epfl.ch  }





\begin{document}
\keywords{data clustering, balanced cut, consistency, Gamma convergence, graph partitioning}

\subjclass{62H30, 62G20, 49J55, 91C20, 68R10, 60D05}


\date{\today}
\maketitle

\begin{abstract}
This paper establishes the consistency of  a family of graph-cut-based algorithms for clustering of data clouds. We consider point clouds obtained as samples of a ground-truth measure.  We investigate approaches to clustering based on minimizing objective functionals defined on proximity graphs of the given sample. Our focus is on functionals based on graph cuts like the Cheeger and ratio cuts. We show that minimizers of the these cuts converge as the sample size increases to a minimizer of a corresponding continuum  cut (which partitions the ground truth measure). Moreover, we obtain sharp conditions on how the connectivity radius can be scaled with respect to the number of sample points for the consistency to hold. We provide results for two-way and  for multiway  cuts. Furthermore we provide numerical experiments that illustrate the results and explore the optimality of scaling in dimension two.
%
\end{abstract}

\section{Introduction}\label{sec:intro}
Partitioning data clouds in meaningful clusters is one of the fundamental tasks in data analysis and machine learning. A large class of the approaches, relevant to high-dimensional data, relies on  creating a graph out of the data cloud by connecting nearby points. This allows one to leverage the geometry of the data set and obtain high quality clustering.  Many of the graph-clustering  approaches  are based on
 optimizing an objective function which measures the quality of the partition. The basic desire to obtain clusters which are well separated leads to the introduction of objective functionals which penalize the size of cuts between clusters. The desire to have clusters of meaningful size and for approaches to be robust to outliers leads to the introduction of "balance" terms and objective functionals such as  
 Cheeger cut (closely related to edge expansion) \cite{ARV, BLasymetric, BLUV12, KanVemVet04, pro:SzlamBresson10}, ratio cut \cite{HagKah, pro:HeinSetzer11TightCheeger, tutorial:vonluxburg, WeiChe89}, normalized cut \cite{ACPP2012, art:ShiMalik00NCut, tutorial:vonluxburg}, and conductance (sparsest cut) \cite{ARV, KanVemVet04, Nibble2004}. Such functionals can be extended to 
 treat multiclass partitioning \cite{BLUV13, YuShi03}.
 The balanced cuts above have been widely studied theoretically and used computationally. The algorithms of \cite{NibblePageRank, Nibble2004, Nibble2013} utilize local clustering algorithms to compute balanced cuts of large graphs. Total variation based algorithms \cite{BLUV12, BLUV13,pro:HeinBuhler10OneSpec,pro:HeinSetzer11TightCheeger,  pro:SzlamBresson10} are also used to optimize either the conductance or the edge expansion of a graph.
 Closely related are the spectral approaches to clustering  \cite{art:ShiMalik00NCut, tutorial:vonluxburg}
 which can be seen as a relaxation of the normalized cuts.
\medskip

In this paper we consider data clouds, $\X_n = \{ \x_1, \dots, \x_n\}$, which have been obtained as i.i.d. 
samples of a measure  $\nu$ with density $\rho$ on a bounded domain $D$. The measure $\nu$ represents the ground truth that $\X_n$ is a sample of. 
 In the large sample limit, $n \to \infty$,  clustering methods should exhibit \emph{consistency}. That is, the clustering of the data sample $\X_n$ should converge as $n \to \infty$ toward a specific clustering of the underlying sample domain.   In this paper we characterize in a precise manner  when and how the minimizers of a ratio and Cheeger graph cuts  converge towards a suitable partition of the domain.  We define the discrete and continuum objective functionals considered in Subsections 
\ref{dp1} and \ref{cp1} respectively, and informally state our result in Subsection \ref{pdc}. 

An important consideration when investigating consistency of algorithms is how the graphs on $\X_n$ are constructed.
In simple terms, when building a graph on $\X_n$ one sets a length scale $\veps_n$ such that edges between vertices in $\X_n$ are given significant weights if the distance of points they connect is $\veps_n$ or less. In some way this sets the length scale over which the geometric  information is averaged when setting up the graph. Taking smaller $\veps_n$ is desirable because it is computationally less expensive and gives better resolution, but there is a price. Taking $\veps_n$ small increases the error due to randomness and in fact if $\veps_n$ is too small
the resulting graph may not represent the geometry of $D$ well and consequently the discrete graph cut may be very far from the desired one.
In our work we determine precisely how small $\veps_n$ can be taken for the consistency to hold. 
We obtain consistency results  both for two-way and multiway  cuts.

 To prove our results we use the variational notion of convergence known as the $\Gamma$-convergence.
It is one of the standard tools of modern applied analysis to consider a limit of a family of variational problems  \cite{Braides, Dalmaso}.  In the recent work \cite{GTS}, this notion was developed in the random discrete setting designed for the study of consistency of minimization problems on random point clouds. 
In particular the proof of $\Gamma$-convergence of total variation on graphs proved there provides the technical backbone of this paper. The approach we take is general and flexible and we believe suitable for the study of many problems involving large sample limits of minimization problems on graphs. 
\medskip

\textbf{Background on consistency of clustering algorithms and related problems.} 
Consistency of clustering algorithms has been considered for a number of approaches.
Pollard \cite{Pollard1981} has proved the consistency of $k$-means clustering. 
Consistency for a class of single linkage clustering algorithms was shown by Hartigan \cite{Hartigan1981}.
Arias-Castro and Pelletier have proved the consistency of maximum variance unfolding \cite{ariasMVU}.
 Consistency of spectral clustering was rigorously considered by von Luxburg, Belkin, and Bousquet \cite{LBB2004,LBB2008}. These works show the convergence of all eigenfunctions of the graph laplacian for fixed length scale $\veps_n = \veps$ which results in the limiting (as $n \to \infty$) continuum problem beeing a nonlocal one. Belkin and Niyogi  \cite{BN2006} consider the spectral problem (Laplacian eigenmaps)  and show that there exists a sequence $\veps_n \to 0$ such that in the limit the (manifold) Laplacian is recovered, however no rate at which $\veps_n$ can go to zero is provided. 
Consistency of normalized cuts was considered by Arias-Castro, Peletier, and Pudlo \cite{ACPP2012}
who provide a rate on $\veps_n \to 0$ under which the minimizers of the discrete cut functionals minimized over a specific family of subsets of $\X_n$ converge to the continuum Cheeger set. 
Our work improves on \cite{ACPP2012} in several ways. We minimize the discrete functionals over all discrete partitions on $\X_n$ as it is considered in practice and prove the result for the optimal, in terms of scaling, range of rates at which $\veps_n \to 0$ as $n \to \infty$. 

There are also a number of works which investigate how well the discrete functionals approximate the continuum ones for a particular function. Among them are
works by Belkin and Niyogi \cite{bel_niy_LB},   
Gin\'e and Koltchinskii \cite{GK},
 Hein, Audibert,  von Luxburg \cite{hein_audi_vlux05}, 
 Singer \cite{Singer} and Ting, Huang, and  Jordan \cite{THJ}.
 Maier, von Luxburg and Hein \cite{MvLH12} considered pointwise convergence  
  for Cheeger and normalized cuts, both for the
  geometric and kNN graphs and obtained a range of scalings of graph construction on $n$
  for the convergence to hold.
 While these results are quite valuable, we point out that they do not imply that the minimizers of discrete objective functionals are close to minimizers of continuum functionals.

 \subsection{Graph partitioning} \label{dp1}
The balanced cut objective functionals we consider are relevant to general graphs (not just ones obtained from point clouds). We introduce them here.
 
Given a weighted graph $\G = (\X,W)$  with the vertex set $\X=\{ \x_1, \ldots, \x_n\}$ and the  weight matrix $W = \{w_{ij}\}_{1\le i, j \le n}$, the balanced graph cut problems we consider take the form
\begin{equation}  \label{P1}
\text{Minimize \;\;\;  } \frac{ \cut(\Y,\Y^c) } { \bal(\Y,\Y^c) } := \frac{\sum_{\x_i \in \Y} \sum_{\x_j \in \Y^c} w_{ij} } { \bal(\Y,\Y^c) } \quad  \text{over all nonempty $\Y \subsetneq \X$.}
\end{equation}
That is, we consider the class of problems with $\cut(\Y,\Y^c)$ as the numerator together with different balance terms. For $Y \subset X$ let $|Y|$ be the ratio between the number of vertices in $Y$ and the number of vertices in $X$. 
Well-known balance terms include
\begin{equation}  \label{B}
\bal_{ {\rm R}}(\Y,\Y^c) = 2|\Y| |\Y^c|   \qquad \text{and} \qquad \bal_{{\rm C}}(\Y,\Y^c)= \min(|\Y|, |\Y^c|), 
\end{equation}
which correspond to  Ratio Cut  \cite{HagKah, pro:HeinSetzer11TightCheeger, tutorial:vonluxburg, WeiChe89} and Cheeger Cut   \cite{ARV,  art:Cheeger70RatioCut, book:Chung97Spectral, KanVemVet04} respectively \footnote{The factor of 2 in the definition of $\bal_{ {\rm R}}(\Y,\Y^c)$ is introduced to simplify the computations in the remainder. We remark that when using $\bal_R$,  problem \eqref{P1} is equivalent to the usual ratio cut problem.}.
 A variety of other balance terms have appeared in the literature in the context of two-class and multiclass clustering \cite{BLasymetric, pro:HeinSetzer11TightCheeger}. We refer to a pair $\{Y,\Y^c\}$ that solves \eqref{P1} as an \emph{optimal balanced cut of the graph}. Note that a given graph $\G = (\X,W)$ may have several optimal balanced cuts (although generically the optimal cut is unique).

We are also interested in multiclass balance cuts.  Specifically, in order to partition the set $X$ into $K \geq 3$ clusters, we consider the following ratio cut functional:
\begin{align}\label{eq:multi1:Body}
\underset{(\Y_1,\ldots,\Y_K)}{\text{Minimize}} \quad \sum^{K}_{k=1} \; \frac{ \cut(\Y_k,\Y^c_k) } { |\Y_k| },  \quad \Y_k \cap \Y_s = \emptyset \quad \text{if} \quad r \neq s, \quad \bigcup_{k=1}^{K} \Y_k = \X. 
\end{align}
 
 \subsection{Continuum partitioning} \label{cp1}

Given a bounded and connected open domain $D \subset \R^d$ and a probability measure $\nu$ on $D$, with positive density $\rho > 0$, we define the class of balanced domain cut problems in an analogous way. A balanced domain-cut problem takes the form
\begin{equation}  \label{P2}
\text{Minimize \;\;\;  } \frac{ \cut_\rho(A,A^c) } { \bal_\rho(A,A^c) },  \qquad  A  \subset D \; \te{ with } 0 < \nu(A) < 1. 
\end{equation} 
where $A^c = D \backslash A$. 
Just as the graph cut term $\cut(\Y,\Y^c)$ in \eqref{P1} provides a weighted (by $W$) measure of the boundary between $\Y$ and $\Y^c,$ the cut term $\cut_\rho(A,A^c)$ for a domain denotes a $\rho^2-$weighted area of the boundary between the sets $A$ and $A^c$.  If $\partial_D A := \partial A \cap D$ (the boundary between $A$ and $A^c$) is a smooth curve (in 2d), surface (in 3d) or manifold (in 4d$+$) then we define
\begin{equation}\label{eq:cut_def}
 \cut_\rho(A,A^c) \;\; :=  \;\;  \int_{\partial_D A} \rho^{2}(x)  \; \rd S(x). 
\end{equation}
For our results and analysis we need the notion of continuum cut which is defined for sets with less regular boundary. We present the required notions of geometric measure theory and 
the rigorous and mathematically precise formulation of problem \eqref{P2} in
Subsection \ref{sec:TV}.

If $\rho(x) = 1$ then $\cut_\rho(A,A^c)$ simply corresponds to arc-length (in 2d) or surface area (in 3d). 
In the general case, the presence of $\rho^{2}(x)$ in \eqref{eq:cut_def} indicates that the regions of low density are easier to cut, so $\partial A$ has a tendency to pass through regions in $D$ of low density. As in the graph case, we consider balance terms
\begin{equation} \label{contbal}
\bal_\rho(A,A^c)= 2|A| |A^c| \qquad \te{and} \qquad \bal_\rho(A,A^c)= \min(|A|, |A^c|), 
\end{equation}
 which correspond to weighted continuous equivalents of the Ratio Cut and the Cheeger Cut. In the continuum setting $|A|$ stands for the total $\nu$-content of  the set $A$, that is,
\begin{equation} \label{mass}
|A|=\nu(A) = \int_A \rho(x) \; \rd x. 
\end{equation}
We refer to a pair $\{A,A^c\}$ that solves \eqref{P2} as an \emph{optimal balanced cut of the domain}. 
\medskip

The continuum equivalent of the multiway cut problem \eqref{eq:multi1:Body} reads
\begin{align}\label{eq:multi1}
\underset{(A_1,\ldots,A_R)}{\text{Minimize}} \quad \sum^{R}_{r=1} \; \frac{ \cut_\rho(A_r,A^c_r) } { |A_r| },  \quad A_r \cap A_s = \emptyset \quad \text{if} \quad r \neq s, \quad  \bigcup_{r=1}^{R} A_r = D.
\end{align}

\subsection{Consistency of partitioning of data clouds} \label{pdc}
We consider the sample $\X_n= \{ \x_1, \dots, \x_n \}$ consisting of i.i.d. random points drawn from an underlying ground-truth measure $\nu$. We assume that $D$ is a bounded, open set with Lipschitz boundary. Furthermore we assume that $\nu$ has continuous density $\rho$ and that 
$ \lambda \leq \rho \leq \Lambda$ on $D$. 

To extract the desired information about the point cloud one builds a graph by connecting the nearby points. More precisely consider a kernel $\eta : \R^d \to [0, \infty)$ to be radially symmetric, radially decreasing,  and decaying to zero sufficiently fast. We introduce a parameter $\veps$ which basically describes over which length scale the data points are connected.
%
We assign for $i,j \in \{1, \dots, n \}$ the weights by
\begin{equation} \label{edgew}
 w_{i,j} = \eta \left(\frac{ \x_i - \x_j}{\veps} \right).
\end{equation}

As more data points are available one takes smaller $\veps$ to obtain increased resolution. 
That is, one sets the length scale $\veps_n$ based on the number of available data points. 
We investigate under what scaling of $\veps_n$ on $n$ the optimal balanced cuts 
(that is minimizers of \eqref{P1}) of the graph $\G_n = (\X_n, W_n)$ 
converge towards optimal balanced cuts in the continuum setting (minimizers of \eqref{P2}).
On Figure \ref{fig:beans}, we illustrate partitioning a data cloud sampled from the uniform distribution for the given domain $D$.



\begin{figure}[ht]
    \centering
\subfigure[A sample of $n=120$ points.]{\includegraphics[width=0.38 \textwidth]{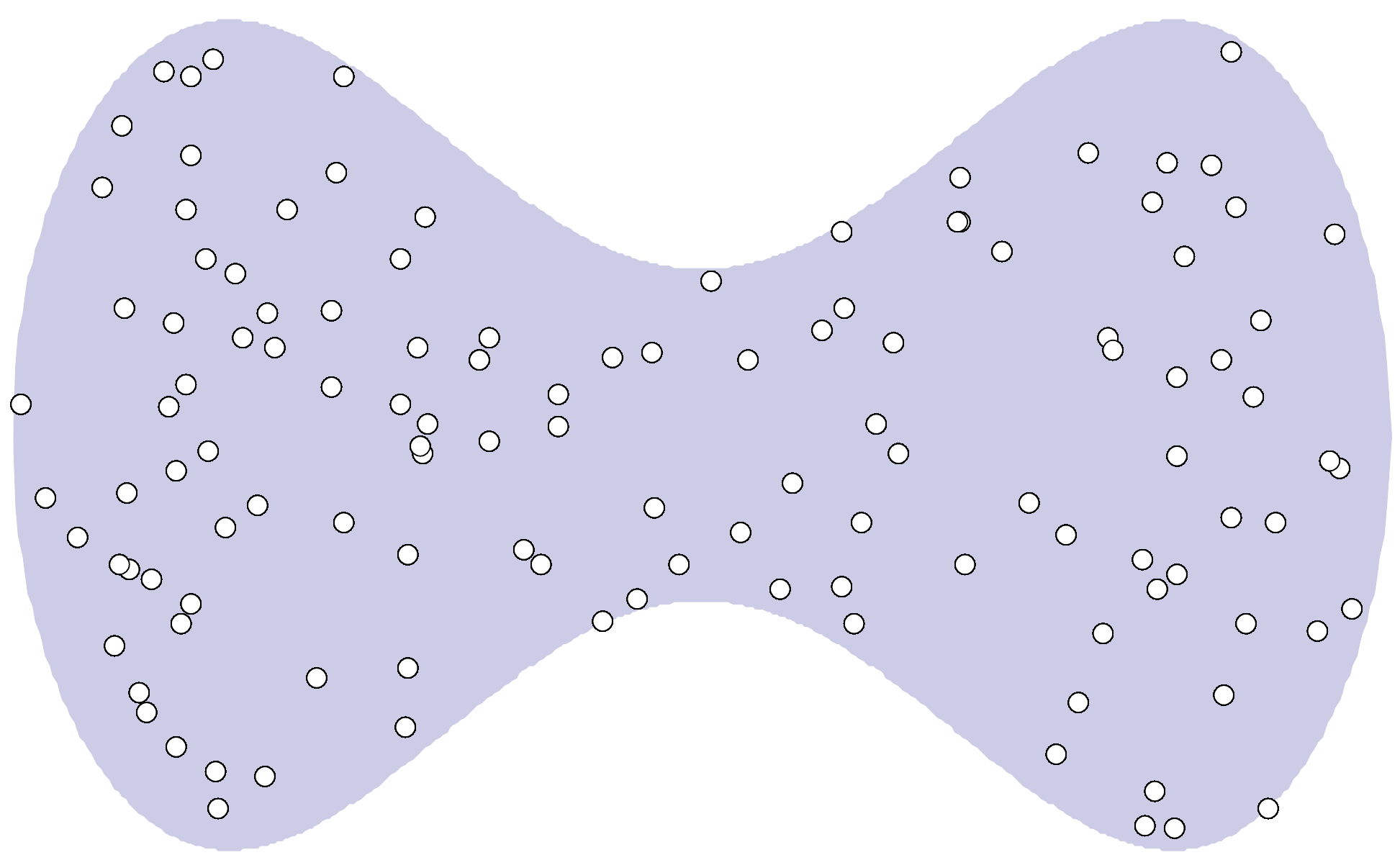}}  \hspace*{40pt} \nolinebreak
\subfigure[Geometric graph with $\veps=0.3$.]{\includegraphics[width=0.38 \textwidth]{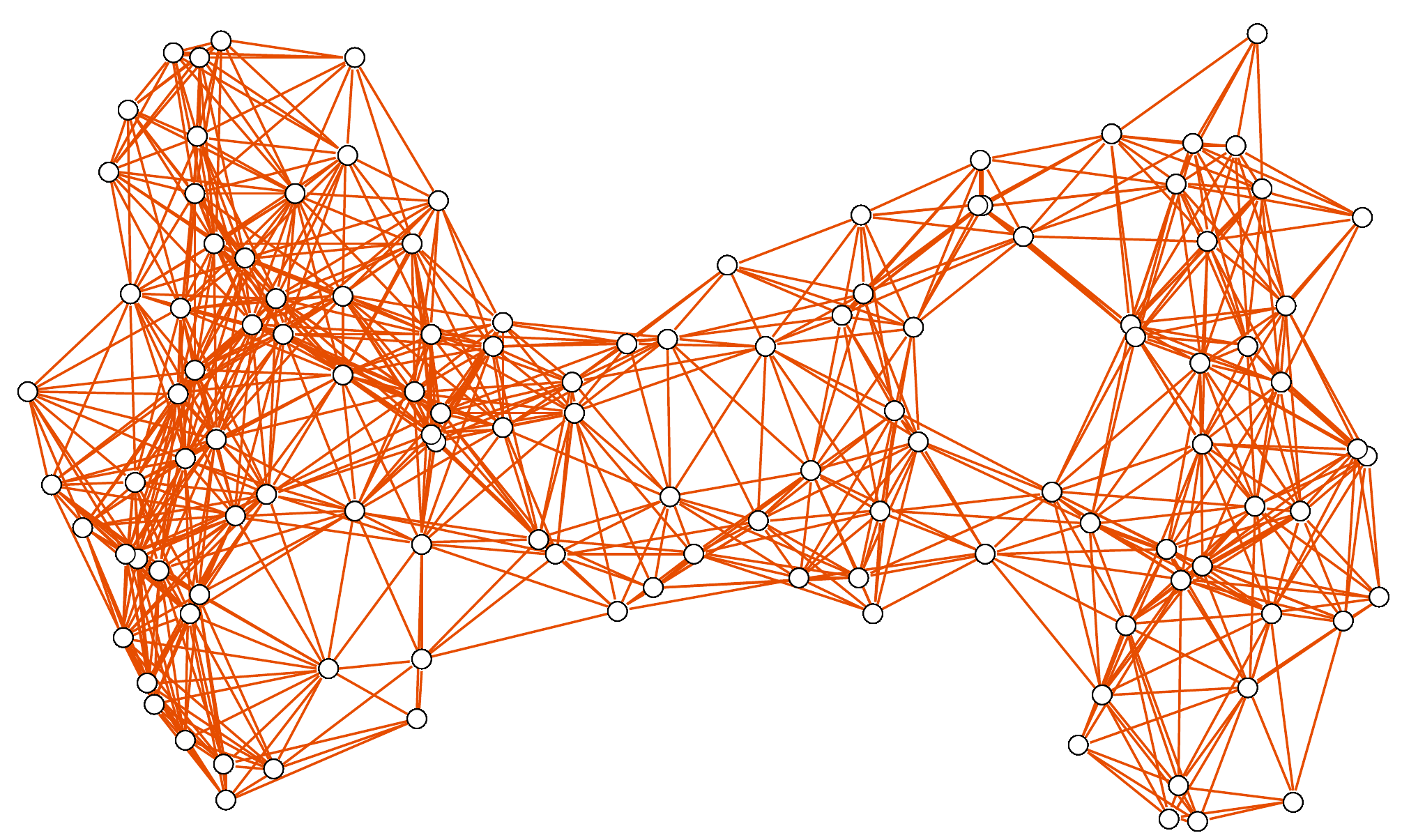}  } \\
        \medskip
\subfigure[Minimizer of Cheeger graph cut.]{\includegraphics[width=0.38 \textwidth]{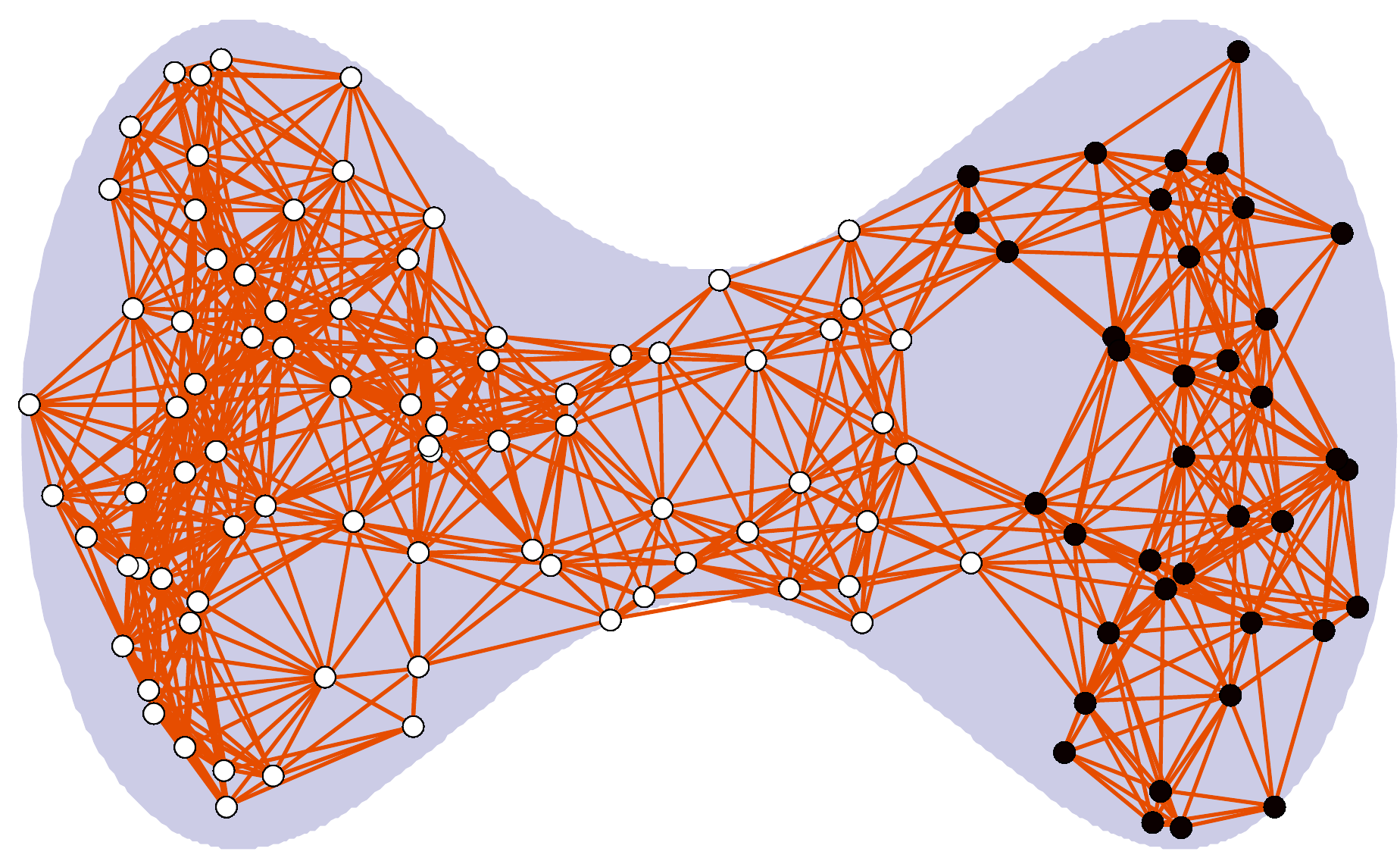}  }\hspace*{40pt} \nolinebreak
\subfigure[Minimizer of continuum Cheeger cut.]{\includegraphics[width=0.38 \textwidth]{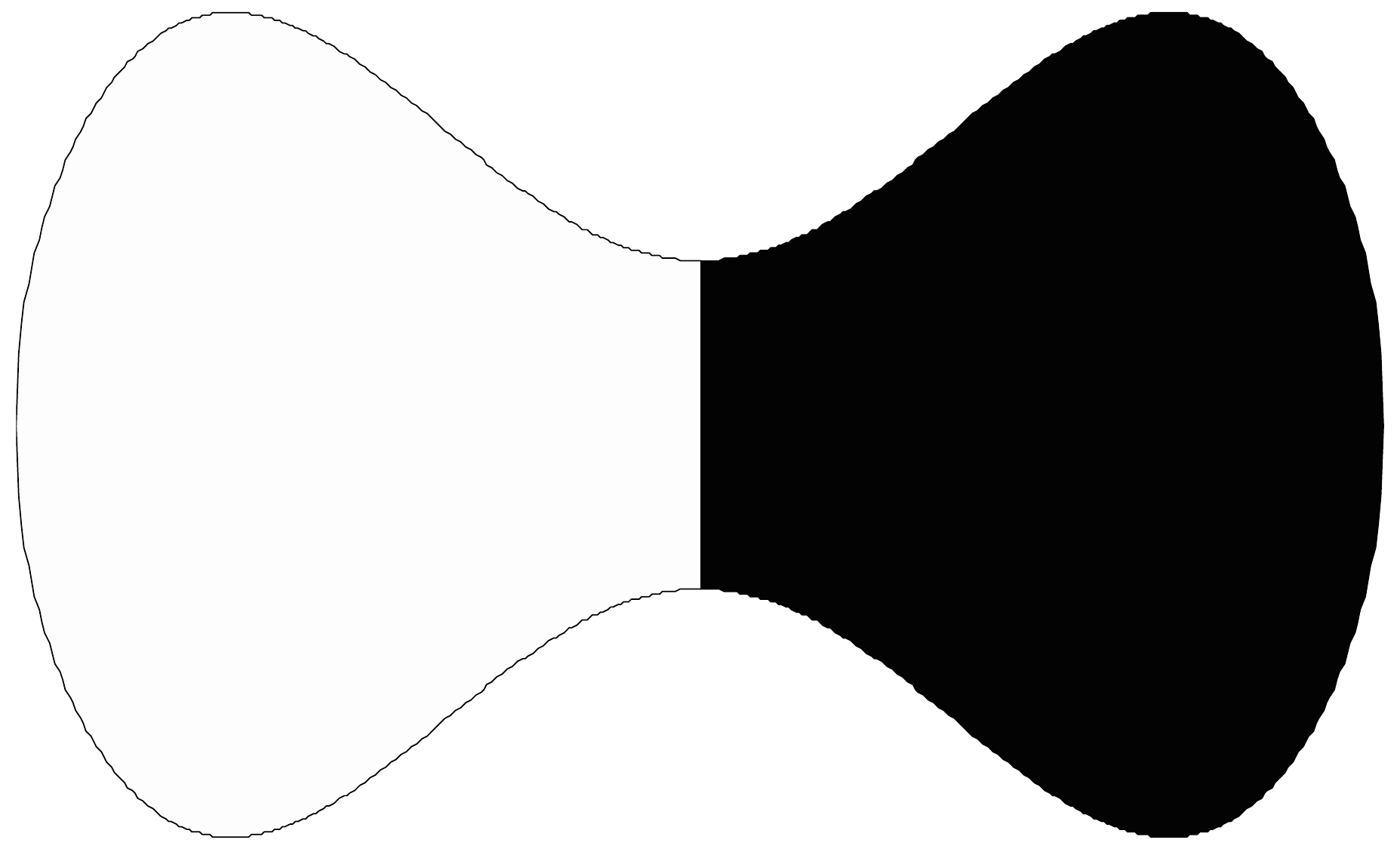}  }
\caption{Given the sample of Figure (a), graph is constructed using $\eta(z) = \one_{\{ |z|\leq 1\} }$  and $\veps = 0.3$, as illustrated on Figure (b). On Figure (c) we present the solution to the Cheeger graph-cut problem obtained using \cite{BLUV12}. A solution to the continuum Cheeger-cut problem is illustrated in Figure (d). }
\label{fig:beans}
\end{figure}

\medskip

\textbf{Informal statement of (a part of) the main results.} \emph{
Consider $d \geq 2$ and assume the continuum balanced cut \eqref{P2} has a unique minimizer $\{A,A^c\}$. Consider  $\veps_n>0$ such that $\lim_{n \to \infty} \veps_n=0$ and 
$$\lim_{n \to \infty}\frac{ (\log n)^{p_d}}{n^{1/d}} \frac{1}{\veps_n} =0,$$
where $p_d=1/d$ for $d \geq 3$ and $p_2=3/4$. Then almost surely the minimizers, $\{ \Y_n, \Y_n^c \}$, of 
the balanced cut \eqref{P1} of the graph $\mathcal G_n$ , converge to $\{A,A^c\}$.} Moreover, after appropriate rescaling, almost surely the minimum of problem \eqref{P1} converges to the minimum of \eqref{P2}.
\emph{
The result also holds for multiway cuts. That is the minimizers of \eqref{eq:multi1:Body} converge towards minimizers of \eqref{eq:multi1}.   }
\medskip

Let us make the notion of convergence of discrete partitions $\{\Y_n, \Y_n^c \}$ to continuum partitions  $\{A,A^c\}$ precise.
Let $\one_{Y_n} : \X_n \to \{0,1\}$ be the characteristic function of $Y_n$ on the set $X_n$. 
Let $\p_{i,1} = (\x_i, \one_{Y_n}(\x_i))$ and $\p_{i,2} = (\x_i, \one_{Y_n^c}(\x_i))$  for $i=1, \dots, n$ be the points on the graphs of $\one_{Y_n}$ and $\one_{Y_n^c}$ respectively.

%

Consider the probability measures on the graphs of $\one_{Y_n}$ and $\one_{Y_n^c}$: that is let
$ \gamma_{n,1}:= \frac{1}{n} \sum_{i=1}^n \delta_{\p_{i,1}} $ and $\quad  \gamma_{n,2} := \frac{1}{n} \sum_{i=1}^n \delta_{\p_{i,2}}$.
Let $\gamma$ be the push-forward of the measure $\nu$ to the graph of $\one_A$, that is  $\gamma = (Id \times \one_A)_\sharp \nu = \nu_{\llcorner  A} \times \delta_1 + \nu_{\llcorner  A^c} \times \delta_0$ where $Id$ is the identity mapping from $\R^d$ to $\R^d$ and
 $ \nu_{\llcorner  A}$ is the restriction of the measure $\nu$ to the set $A$ . 
We say that $\{Y_n, Y_n^c\}$ converge towards $\{A, A^c\}$ as $n \to \infty$ if there is a sequence of indices $I : \N \to \{1,2\}$ such that
\begin{equation} \label{altTLconv}
\gamma_{n, I(n)} \overset{w}{\rightharpoonup} \gamma\quad  \te{ as } n \to \infty,
\end{equation}
which is to be read as $\gamma_{n, I(n)}$ converges weakly to $\gamma$ (see \cite{DudleyBook} for the definition of weak convergence of probability measures.) In other words the convergence of discrete towards continuum partitions is defined as the weak convergence of graphs, considered as probability measures. 

In Section \ref{sec:trans} we discuss this topology in more detail and present a more general and conceptually clearer picture. In particular we point out that the weak convergence of measures on the space of graphs of functions is stronger than it may look and actually corresponds to $L^1$ convergence of functions. 

\begin{remark}[{Optimality of scaling of $\veps_n$} for $d \geq3$] \label{rem:intro}
If $d \geq 3$ then the rate presented in the statement above is 
sharp in terms of scaling. Namely for $D=(0,1)^d$, $\nu$ being the Lebesgue measure on $D$ and $\eta$ compactly supported, it is known from graph theory (see \cite{Goel,GuptaKumar, Penrose1}) that there exists a constant $c>0$ such that if $\veps_n < c \frac{(\log n)^{1/d}}{n^{1/d}}$ then the weighted graph associated to $(\X_n, W_n)$ is disconnected with high probability.
The resulting optimal discrete cuts have zero energy, but may be very far from the optimal continuum cuts.
While the above example demonstrates the optimality of our results, we caution that there 
may be settings relevant to machine learning in which the convergence of minimizers of appropriate functionals involving perimeter may hold even when $\frac{1}{n^{1/d}} \ll \veps_n  < c \frac{(\log n)^{1/d}}{n^{1/d}}$. 
\end{remark}
\begin{remark}In case $d=2$ the connectivity threshold for a random geometric graph is $\veps_n = c \frac{\log(n)^{1/2}}{n^{1/2}}$, which is below the rate for which we can establish the consistency of balanced cuts. Thus, an interesting open problem is to determine if the consistency results we present in this paper are still valid when the parameter $\veps_n$ is taken below the rate $\frac{\log(n)^{3/4}}{n^1/2}$ we obtained the proof for, but above the connectivity rate. 
In particular we are interested in determining if connectivity is the determining factor in order to obtain consistency of balance graph cuts. 
We numerically explore this problem in Section \ref{sec:numerics}. 
\label{Non-Optimald=2}
\end{remark}

We also remark that, despite the fact that for a general graphs the problems \eqref{P1} and \eqref{eq:multi1:Body} are NP hard, in practice when the graph is obtained by sampling from a measure $\nu$ as above, such minimization problems can be effectively approached \cite{BLUV12, BLUV13}. In fact, by choosing an appropriate initialization, the algorithms (see \cite{BLUV12, BLUV13}), 
give very good results in clustering real-world data. 
\nc

\nc
\subsection{Outline}
In Section \ref{sec:trans} we introduce the notion of convergence we use to bridge between discrete and continuum partitions. It relies on some of the notions of the theory of optimal transportation which we recall. Finally we recall results on optimal min-max matching which are needed in the proof of the convergence.
 In Section \ref{sec:CPTV} we study more carefully continuum partitioning \eqref{P2}.
We introduce the notion of total variation of functions on $D$ in Subsection \ref{sec:TV} and recall some of its basic properties. It enables us to introduce, in Subsection \ref{sec:cont_par2},
the general setting for problem \eqref{P2} where desirable properties such as lower semicontinuity and existence of minimizers hold.  
In Section \ref{sec:main}  we give the precise statement of the consistency result, both for the two-way cuts and the multiple-way cuts. Proving that minimizers of discrete balanced cuts converge to continuum balanced cuts relies on a notion of variational convergence known as $\Gamma$-convergence.
In Section \ref{sec:gamma-conv} we recall the definition of $\Gamma$ convergence and its basic properties. In Subsection \ref{sec:GGTV} we recall the results on $\Gamma$-convergence of graph total variation which provide the backbone for our result.
Section \ref{sec:proof1} contains the proof of the Theorem \ref{main} and Section \ref{sec:TheoremMulti-Class} the proof of Theorem \ref{main2}. 
Finally, in Section \ref{sec:numerics} we present numerical experiments which illustrate our results and we also investigate the issues related to Remark \ref{Non-Optimald=2}.  

\nc

 \section{From Discrete to Continuum} \label{sec:trans}
 
 For the two-class case, our main result shows that a sequence of partitions $\{\Y_n,\Y_n^c \}$ of the sample points $\X_n=\{\x_1, \ldots, \x_n \} \subset D$ converges toward a continuum partition $\{A,A^c\}$ of the domain $D$.
In this section we expand on the notion of convergence introduced in Subsection \ref{pdc} to compare the discrete and continuum partitions. We give an equivalent definition for such type of convergence  which turns out to be more useful for the computations in the remainder.

Associated to the partitions $\{\Y_n,\Y_n^c \}$ of $\X_n = \{ \x_1, \dots, \x_n\}$ there are characteristic functions of $\Y_n$ and $\Y_n^c$, namely $\one_{Y_n} : \X_n \to \{0,1\}$ and $\one_{Y_n}^c : \X_n \to \{0,1\}$. 
Let $\nu_n = \frac{1}{n} \sum_{i=1}^n \delta_{\x_i}$ be the empirical measures associated to $\X_n$
Note that $\one_{Y_n}$, $\one_{Y_n^c} \in L^1(\nu_n)$.
Likewise a continuum partition of $D$ by measurable sets $A$ and $A^c = D \backslash A$ can be described via the characteristic functions $\one_{A} : D \to \{0,1\}$ and  $\one_{A^c} : D \to \{0,1\}$. These too can be considered as $L^1$ functions, but with respect to the measure $\nu$ rather than $\nu_n$.

We compare partitions the $\{\Y_n,\Y_n^c \}$ and $\{A,A^c\}$ by comparing the associated characteristic functions. To do so, we need a way of comparing $L^1$ functions with respect to different measures. 
We follow the approach of \cite{GTS}. 
We denote by $\mathcal{B}(D)$ the Borel $\sigma$-algebra on $D$ and by $\mathcal{P}(D)$ the set of Borel probability measures on $D$.
The set of objects of our interest is
\[ TL^1(D) := \{ (\mu, f) \; : \:  \mu \in \mathcal P(D), \, f \in L^1(\mu) \}. \]
Note that $(\nu_n, \one_{Y_n})$ and  $(\nu, \one_{A})$ both belong to $TL^1$.
To compare functions defined with respect to different measures, say $(\mu,f)$ and $(\theta,g)$ in $TL^1$,  we need a way to say for  which $(x,y) \in \supp(\mu) \times \supp(\theta)$ should we compare $f(x)$ and $g(y)$. 
The notion of {\em coupling} (or \emph{transportation plan})  between $\mu$ and $\theta$, provides a way to do that. A coupling between $\mu, \theta \in \mathcal{P}(D)$ is a probability measure $\pi$ on the product space $D \times D$, such that the marginal on the first variable is $\mu$ and the marginal on the second variable is $\theta$. The set of couplings $\Gamma(\mu, \theta)$ is thus
\[ \Gamma(\mu, \theta) = \{ \pi \in \mathcal{P}(D \times D) \::\: (\forall U \in \mathcal{B}(D)) \;\:
\pi(U \times D) = \mu(U) \te{ and } \pi(D \times U) = \theta(U)\}. \]
For $(\mu,f)$ and $(\theta,g)$ in $TL^1(D)$ we define the distance
\begin{equation} \label{TL1}
d_{TL^1}((\mu,f), (\theta,g)) =
   \inf_{\pi \in \Gamma(\mu, \theta)} \iint_{D \times D} |x-y| + |f(x)-g(y)|  \rd\pi(x,y). 
\end{equation}
This is the distance that we use to compare $L^1$ functions with respect to different measures. To understand it better we focus on the case that one of the measures, say $\mu$, is absolutely continuous with respect to the Lebesgue measure, as this case is relevant for us when passing from discrete to continuum.
In this case the convergence in $TL^1$ space can be formulated in simpler ways using 
 transportation maps instead of couplings to match the measures.
Given a Borel map $T: D \rightarrow D$ and $\mu \in \mathcal{P}(D)$ the \emph{push-forward} of $\mu$ by $T$, denoted by $T_{\sharp} \mu \in \mathcal{P}(D)$ is given by:
\begin{equation*}
T_{\sharp} \mu(A):= \mu\left( T^{-1}(A) \right), \: A \in \mathfrak{B}(D).
\end{equation*}
A Borel map $T : D \rightarrow D$ is a \emph{transportation map} between the measures $\mu\in \mathcal{P}(D)$ and $\theta \in \mathcal{P}(D)$ if $\theta= T_\sharp \mu$. Associated to a transportation map $T$, there is a plan $\pi_T \in \Gamma(\mu , \theta)$ given by 
$\pi_T:= (\id \times T)_{\sharp}\mu$,
where $(\id \times T)(x) = \left(x, T(x) \right)$. 

We note that if $\theta = T_\sharp \mu$ then the following change of variables formula holds for any $f \in L^1(\theta)$
\begin{equation} \label{ChangeOfVariables}
\int_D f(y) d\theta(y) = \int_D f(T(x)) d \mu(x).
\end{equation}


In order to give the desired interpretation of convergence in $TL^1$ we also need the notion of a stagnating sequence of transportation maps. A sequence  $\left\{ T_n \right\}_{n \in \N}$  of transportation maps between $\mu$ and  $\left\{ \mu_n \right\}_{n \in \N}$ (i.e. ${T_n}_{\sharp} \mu = \mu_n$) is \emph{stagnating} if
\begin{equation}
  \int_{D} |x - T_n(x)| \rd \mu(x) \rightarrow 0 \qquad \te{as } n \to \infty.
  \label{def:StagnatingSeq}
\end{equation}
This notion is relevant to our considerations since for the measure $\nu$ and its empirical measures $\nu_n$ there exists (with probability one) a sequence of stagnating transportation maps ${T_n}_\sharp \nu = \nu_n$. The idea is that as $n \to \infty$ the mass from $\nu$ needs to be moved only very little to be matched with the mass of $\nu_n$. We make this precise in Proposition \ref{thm:InifinityTransportEstimate}

We now provide the desired interpretation of the convergence in $TL^1$, which is a  part of Proposition 3.12 in \cite{GTS}.
\begin{proposition} \label{prop:TLchar}
Consider a measure $\mu \in \mathcal{P}(D)$ which is absolutely continuous with respect to the Lebesgue measure.
Let $(\mu,f) \in TL^1(D)$ and let
 $\left\{ \left(\mu_n , f_n \right) \right\}_{n \in \N}$ be a sequence in $TL^1(D)$. The following statements are equivalent:
\begin{listi}
\item  $ \left(\mu_n , f_n \right)  \overset{{TL^1}}{\longrightarrow} (\mu, f)$ as $n \rightarrow \infty$.
\item  $\mu_n \overset{w}{\rightharpoonup}\mu$ and there exists a stagnating sequence of transportation maps ${T_n}_\sharp \mu = \mu_n$ such that:
\begin{equation}
\int_{D} \left| f(x) - f_n\left(T_n(x)\right) \right| d\mu(x) \rightarrow 0, \:  as \: n \rightarrow \infty.
\label{convergentTLpMap}
\end{equation}
\item $\mu_n \overset{w}{\rightharpoonup} \mu$ and for any stagnating sequence of transportation maps  ${T_n}_\sharp \mu = \mu_n$ convergence  \eqref{convergentTLpMap} holds.
\end{listi}
\label{EquivalenceTLp}
\end{proposition}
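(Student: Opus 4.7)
The plan is to establish the equivalences cyclically, in the order (ii) $\Rightarrow$ (i) $\Rightarrow$ (iii) $\Rightarrow$ (ii). The two short implications are (ii) $\Rightarrow$ (i) and (iii) $\Rightarrow$ (ii). For the former, given a stagnating sequence $T_n$ with ${T_n}_\sharp \mu = \mu_n$ and $\int |f(x) - f_n(T_n(x))|\,d\mu(x) \to 0$, the pushforward plan $\pi_n := (\id \times T_n)_\sharp \mu \in \Gamma(\mu, \mu_n)$ satisfies, by the change-of-variables formula \eqref{ChangeOfVariables},
\[
\int_{D \times D} \bigl(|x-y| + |f(x)-f_n(y)|\bigr)\,d\pi_n(x,y) = \int_D \bigl(|x - T_n(x)| + |f(x) - f_n(T_n(x))|\bigr)\,d\mu(x) \to 0,
\]
so $d_{TL^1}((\mu,f),(\mu_n,f_n)) \to 0$. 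For the latter, since $\mu$ is absolutely continuous with respect to the Lebesgue measure on the bounded domain $D$ and $\mu_n \rightharpoonup \mu$ by hypothesis, classical transport theory (an $L^2$ Brenier map combined with Cauchy--Schwarz, or an $L^1$ Monge map directly) produces transport maps $T_n$ with ${T_n}_\sharp \mu = \mu_n$ and $\int |x - T_n(x)|\,d\mu \to 0$; applying (iii) to this stagnating sequence yields (ii).

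The core of the argument is (i) $\Rightarrow$ (iii). First, $\mu_n \rightharpoonup \mu$ follows from near-optimal couplings $\pi_n \in \Gamma(\mu,\mu_n)$: their marginal cost $\int |x-y|\,d\pi_n \to 0$ forces $W_1(\mu_n,\mu) \to 0$. Fix an arbitrary stagnating $T_n$ and $\delta > 0$; by density of Lipschitz functions in $L^1(\mu)$, choose a Lipschitz $\tilde f : \bar D \to \R$ with Lipschitz constant $L$ and $\int |f - \tilde f|\,d\mu < \delta$. Then decompose
\begin{align*}
\int_D |f(x) - f_n(T_n(x))|\,d\mu(x) &\le \int_D |f - \tilde f|\,d\mu + \int_D |\tilde f(x) - \tilde f(T_n(x))|\,d\mu(x) \\
&\qquad {}+ \int_D |\tilde f(T_n(x)) - f_n(T_n(x))|\,d\mu(x).
\end{align*}
The first summand is below $\delta$; the second is at most $L \int |x - T_n(x)|\,d\mu \to 0$ by the stagnating hypothesis; and the third equals $\int |\tilde f - f_n|\,d\mu_n$ by \eqref{ChangeOfVariables}. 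To control this last integral, use a near-optimal $\pi_n$ for $d_{TL^1}$ and insert $\tilde f(x)$ and $f(x)$: from the pointwise bound $|\tilde f(y) - f_n(y)| \le |\tilde f(y) - \tilde f(x)| + |\tilde f(x) - f(x)| + |f(x) - f_n(y)|$, integrating against $\pi_n$ gives
\[
\int_D |\tilde f - f_n|\,d\mu_n \le L \int |x-y|\,d\pi_n + \int |\tilde f - f|\,d\mu + \int |f(x) - f_n(y)|\,d\pi_n,
\]
and the limsup of the right-hand side is at most $\delta$ by (i). Combining, $\limsup_n \int |f - f_n \circ T_n|\,d\mu \le 2\delta$, and sending $\delta \to 0$ concludes.

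The main obstacle is conceptual rather than computational: a near-optimal coupling $\pi_n$ for $d_{TL^1}$ is generally not concentrated on the graph of any map and bears no direct algebraic relationship to a prescribed stagnating $T_n$. Because the discrete functions $f_n$ live in $L^1(\mu_n)$ with no available modulus of continuity, one cannot directly change variables between the ``$y$'' of $\pi_n$ and the ``$T_n(x)$'' of the given map. Approximation of $f$ by Lipschitz functions is precisely the regularization that allows the triangle inequality to absorb the mismatch between the two couplings, and it is the one genuinely nontrivial ingredient in the proof.
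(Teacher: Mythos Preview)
The paper does not actually prove this proposition; it simply cites it as part of Proposition~3.12 in \cite{GTS}. Your argument is correct and self-contained: the cycle (ii)$\Rightarrow$(i)$\Rightarrow$(iii)$\Rightarrow$(ii) is valid, the Lipschitz approximation of $f$ is exactly the right tool to reconcile an arbitrary stagnating map $T_n$ with a near-optimal coupling $\pi_n$, and the existence of a stagnating sequence in (iii)$\Rightarrow$(ii) is correctly justified by absolute continuity of $\mu$ together with $W_1(\mu,\mu_n)\to 0$ on the bounded domain.

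One minor point worth making explicit: in (iii)$\Rightarrow$(ii) you invoke ``classical transport theory'' to produce the maps $T_n$. This is fine, but note that for the $L^1$ Monge problem the existence of an optimal map when the source is merely absolutely continuous is a nontrivial result (Sudakov--Ambrosio--Caffarelli--Feldman--McCann--Trudinger--Wang); the $L^2$/Brenier route you mention is cleaner here since $D$ is bounded and Cauchy--Schwarz immediately gives $\int|x-T_n(x)|\,d\mu \le W_2(\mu,\mu_n)\to 0$. Either way the step goes through.
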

The previous proposition implies that in order to show that $(\mu_n, f_n)$ converges to $(\mu,f)$ in the $TL^1$-sense, it is enough to find
a sequence of stagnating transportation maps ${T_n}_\sharp \mu = \mu_n$ and then show the $L^1$
convergence of $f_n \circ T_n$ to $f$. 
An important feature of Proposition \ref{EquivalenceTLp} is that there is complete freedom on what sequence of transportation maps $\left\{T_n \right\}_{n \in \N}$ to take, as long as it is stagnating.
In particular this shows that if $\mu_n = \mu$ for all $n$ then the convergence in $TL^1$ is equivalent to convergence in $L^1$.
\medskip

To apply the above to our setting we need a stagnating sequence of transportation maps between $\nu$ and $\left\{ \nu_n \right\}_{n \in \N}$. The results on optimal transportation
provide such a sequence with precise information on the rate at which \eqref{def:StagnatingSeq} occurs.
For some of our considerations it is  useful to have  the control of $x - T_n(x)$ in the stronger $L^\infty$-norm , rather than in the $L^1$-norm. The following result of \cite{W8L8} provides such transportation maps with optimal scaling of the norm on $n$.
\begin{proposition} \label{thm:InifinityTransportEstimate}
Let $D$ be an open, connected and bounded subset of $\R^d$ which has Lipschitz boundary. Let $\nu$ be a probability measure on $D$ with density $\rho$ which is bounded from below and from above by positive constants. Let  $\x_1, \dots, \x_n, \dots$ be a sequence of independent random points  distributed on $D$ according to measure $\nu$ and let $\nu_n = \frac{1}{n} \sum_{i=1}^n \delta_{\x_i}$.
Then there is a constant $C>0$ such that with probability one there exists a sequence of transportation maps $\left\{ T_n \right\}_{n \in \N}$ from $\nu$ to $\nu_n$   ($T_{n \sharp} \nu = \nu_n$) and  such that: 
\begin{equation}
\label{}
\limsup_{n \rightarrow \infty}   \frac{n^{1/d} \|Id - T_n\|_\infty }{(\log n)^{p_d}}  \leq C,
\end{equation}
where the power $p_d$ is equal to $1/d$ if $d\geq 3$ and equal to $3/4$ if $d=2$.
\end{proposition}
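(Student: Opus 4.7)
The plan is to bound the $\infty$-Wasserstein distance
\[
W_\infty(\nu, \nu_n) := \inf_{T:\, T_\sharp \nu = \nu_n} \| \id - T \|_{L^\infty(\nu)},
\]
from which the existence of the claimed map $T_n$ follows because the infimum is attained under our assumptions. I would proceed in three steps: a reduction to the uniform measure on the unit cube, a one-scale grid matching for $d \geq 3$, and a multiscale dyadic matching for $d = 2$, finishing with Borel--Cantelli.

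Step 1 (Reduction). Since $\rho$ is continuous, satisfies $\lambda \leq \rho \leq \Lambda$, and $D$ has Lipschitz boundary, a Dacorogna--Moser construction (on a reference cube, combined with a bi-Lipschitz straightening of $D$) yields a bi-Lipschitz map $\Phi : \dom \to D$ with $\Phi_\sharp \mathrm{Leb}|_{\dom} = \nu$. Given a transport map $\tilde T_n$ from Lebesgue measure on $\dom$ to the pulled-back empirical measure $\tilde \nu_n := \Phi^{-1}_\sharp \nu_n$, the map $T_n := \Phi \circ \tilde T_n \circ \Phi^{-1}$ transports $\nu$ to $\nu_n$ and satisfies $\| \id - T_n \|_\infty \leq \te{Lip}(\Phi)\,\| \id - \tilde T_n \|_\infty$. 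It therefore suffices to prove the estimate for the uniform measure on $\dom$.

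Step 2 (Dimension $d \geq 3$). Let $\veps_n = C_0 (\log n / n)^{1/d}$. Partition $\dom$ into a grid of $N \asymp \veps_n^{-d}$ cubes of side $\veps_n$. For each such cube $Q$, Bernstein's inequality yields
\[
\Prob\bigl( |\nu_n(Q) - \nu(Q)| \geq \tfrac12 \nu(Q) \bigr) \leq 2 \exp\bigl( -c\, n\, \nu(Q) \bigr) = 2 \exp\bigl( -c\, C_0^d \log n \bigr),
\]
so for $C_0$ large a union bound over the $N \leq n$ cubes shows that, with probability $\geq 1 - n^{-2}$, every cube contains between $\tfrac12 n \veps_n^d$ and $\tfrac32 n \veps_n^d$ sample points. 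A Hall-type matching on the adjacency graph of the grid then produces a measure-preserving transportation map whose displacement is bounded by a constant multiple of $\veps_n$.

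Step 3 (Dimension $d=2$). The one-scale argument fails because the fluctuation $\sqrt{\nu(Q) \log n / n}$ saturates the cube mass $\nu(Q)$ when $\veps_n \sim \sqrt{\log n / n}$. Instead, following Leighton--Shor, I would use nested dyadic partitions $\mathcal{Q}_k$ at scales $2^{-k}$ for $k = 0, \ldots, k_{\max}$ with $2^{-k_{\max}} \asymp n^{-1/2}$, and at each scale rebalance the empirical mass between neighboring cells. Bernstein plus union bound over the $O(4^k)$ cells of scale $2^{-k}$ controls the worst-case discrepancy; the incremental displacement at level $k$ is of order $2^{-k}$ times this discrepancy, and a careful sharper estimate at the intermediate scales, summed over the $O(\log n)$ levels, produces the extra $(\log n)^{1/4}$ factor over the naive grid bound, yielding the target rate $(\log n)^{3/4}/n^{1/2}$.

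The main obstacle is Step 3: the sharp exponent $p_2 = 3/4$ is genuinely more delicate than the covering bound of Step 2, and the two-dimensional case requires the full Leighton--Shor style multiscale analysis of dyadic discrepancies rather than a one-shot matching. In both cases, once the failure probability at scale $n$ is summable (decaying like $n^{-2}$), Borel--Cantelli upgrades the high-probability bound to the almost-sure $\limsup$ claimed in the statement.
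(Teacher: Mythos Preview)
The paper does not prove this proposition at all; it is simply quoted from the companion paper \cite{W8L8} (Garc\'ia Trillos and Slep\v{c}ev, \emph{On the rate of convergence of empirical measures in $\infty$-transportation distance}). So there is no ``paper's own proof'' to compare against here. Your sketch is, however, broadly aligned with the strategy actually used in \cite{W8L8}: a bi-Lipschitz reduction to the uniform measure on the cube, followed by a combinatorial matching that is one-scale for $d\geq 3$ and multiscale (Ajtai--Koml\'os--Tusn\'ady / Leighton--Shor style) for $d=2$, with Borel--Cantelli to pass from high probability to almost sure.

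Two places where your outline is thinner than a proof would require. In Step~2, the sentence ``A Hall-type matching on the adjacency graph of the grid then produces a measure-preserving transportation map whose displacement is bounded by a constant multiple of $\veps_n$'' hides the real content: knowing that every cube has between $\tfrac12$ and $\tfrac32$ times its expected count does not by itself bound the transport distance by $O(\veps_n)$, since surpluses and deficits could in principle require moving mass across many cubes. One needs an additional argument (a flow/expansion estimate exploiting the favorable surface-to-volume ratio in $d\geq 3$, as in Shor--Yukich or Talagrand) to ensure that the redistribution stays local. In Step~3 you are candid that the sharp exponent $p_2=3/4$ needs the full Leighton--Shor multiscale machinery; your description of ``a careful sharper estimate at the intermediate scales'' is a placeholder rather than an argument, and this is indeed the step where most of the work in \cite{W8L8} lies.
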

\medskip

Having defined the $TL^1$-convergence for functions, we turn to defining the $TL^1$-convergence for partitions. When formalizing a notion of convergence for sequences of partitions $\{\Y_1^n,\dots,\Y_R^n\},$ we need to address the inherent ambiguity that arises from the fact that both $\{\Y_1^n,\dots,\Y_R^n \}$ and $\{\Y_{P(1)}^n,\dots,\Y_{P(R)}^n \}$ refer to the same partition for any permutation $P$ on the set $\left\{1, \dots, R \right\}$. Having the previous observation in mind, the convergence of partitions is defined in a natural way. 
\begin{definition}\label{def:TL1Partitions}
The sequence $ \left\{\Y_1^n,\dots,\Y_R^n \right\}_{n \in \N}$, where $\{\Y_1^n,\dots,\Y_R^n\}$ is a partition of $\X_n$, converges in the $TL^1$-sense to the partition $\{A_1, \dots, A_R\}$ of $D$, if there exists a sequence of permutations $\left\{ P_n \right\}_{n \in \N}$ of the set $\left\{  1, \dots, R\right\}$, such that for every $r \in \left\{1, \dots, R\right\}$, 
\[    \left(\nu_n, \one_{\Y_{P_n(r)}^n} \right)  \converges{TL^1} (\nu, \one_{A_r}) \qquad \te{as } n \to \infty. \]
\end{definition}
We note that the definition above is equivalent to the definition in \eqref{altTLconv} which we gave in Subsection \eqref{pdc} when discussing the main result. The equivalence follows from the fact that the $TL^1$ metric \eqref{TL1} can
be seen as the distance between the graphs of functions, considered as measures. Namely given $(\mu,f), (\theta,g) \in TL^1(D)$, let
$\Gamma_f = (\id \times f)_\sharp \mu$ and $\Gamma_g = (\id \times g)_\sharp \theta$ be the 
measures representing the graphs. Consider $d(\Gamma_f , \Gamma_g) := d_{TL^1}((\mu,f), (\theta,g)$. Proposition 3.3 in \cite{GTS} implies that this distance metrizes the weak convergence of measures on the family of graph measures. Therefore the convergence of partitions of Definition \ref{def:TL1Partitions} is equivalent to one given in \eqref{altTLconv}.
\medskip

We end this section by making some remarks about why the $TL^1$-metric is a suitable metric for considering consistency problems. On one hand if one considers a sequence of minimizers $\{ \Y_n, \Y_n^c \}$ of  the graph balanced cut \eqref{P1} the topology needs to be weak enough for the sequence of minimizers to be guaranteed to converge (at least along a subsequence). Mathematically speaking the 
topology needs to be weak enough for the sequence to be pre-compact. On the other hand the topology has to be strong enough for one to be able to conclude that the limit of a sequence of minimizes is a
minimizer of the continuum balanced cut energy. In Proposition \ref{prop:gamma} and Lemma \ref{com} we establish that the $TL^1$-metric satisfies both of the desired properties. 

Finally we point out that our approach from discrete to continuum can be interpreted as an extrapolation or extension approach, as opposed to restriction viewpoint. Namely when comparing $(\mu_n, f_n)$ and $(\mu,f)$ where $\mu_n$ is discrete and $\mu$ is absolutely continuous with respect to the Lebesgue measure we end up comparing 
two $L^1$ functions with respect to the Lebesgue measure, namely $f_n \circ T_n$ and $f$, in \eqref{convergentTLpMap}. Therefore $f_n \circ T_n$ used in Proposition \ref{prop:TLchar} can be seen as a continuum representative (extrapolation) of the discrete $f_n$. 
We think that this approach is more flexible and suitable for the task than the, perhaps more common, approach of comparing the discrete and continuum by restricting the continuum object to the discrete setting (this would correspond to considering $f|_{\supp(\mu_n)}$ and comparing it to $f_n$).

\section{Continuum partitioning: rigorous setting}
\label{sec:CPTV}

We first recall the general notion of (weighted) total variation and some notions of analysis and geometric measure theory.

\subsection{Total Variation} \label{sec:TV}

Let $D$ be an open and bounded domain in $\R^{d}$ with Lipschitz boundary and let $\rho: D \rightarrow (0,\infty)$ be a continuous density function. We let $\nu$ be the measure with density $\rho$. We assume that $\rho$ is bounded above and below by positive constants, that is, $ \lambda\leq \rho \leq\Lambda $ on $D$ for some $\Lambda \geq\lambda>0$. If needed, we consider an extension of $\rho$ to the whole $\R^d$ by setting $\rho(x)= \lambda$ for $x \in \R^d \setminus D$.
 This extension is a lower semi-continuous function and has the same lower and upper bounds that the original $\rho$ has. 
  
Given a function $u \in L^{1}(\nu)$, we define the weighted (by weight $\rho^2$) total variation of $u$ by:
\begin{equation}
TV(u; D) := \sup \left\{  \int_{D}  u(x) \mathrm{div}(\Phi(x)) \; \rd x :  \Phi(x) \in C^{1}_{c}(D: \R^d), \quad |\Phi(x)| \leq \rho^{2}(x) \right\}.
\label{TVWeighted}
\end{equation}
If $u$ is regular enough then the weighted total variation can be written as 
\begin{equation}
TV(u; D) = \int_D |\nabla u| \rho^2(x) \; \rd x.
\label{TVsmoothFunctions}
\end{equation}
Also, given that $\rho: D \rightarrow \R$ is continuous, if $u = \one_A$ is the characteristic function of a set $A \subseteq \R^d$ with $C^1$ boundary, then 
\begin{equation}
  TV(\one_{A} ; D ) = \int_{\partial A \cap D}   \rho^2(x) \;\rd \mathcal{H}^{d-1}(x),   
  \label{TVsmoothSets}
\end{equation}
where $\mathcal{H}^{d-1}$ represents the $(d-1)$-dimensional Hausdorff measure in $\R^d$.  
In case $\rho$ is a constant ( $\nu$ is the uniform distribution), the functional $TV(\cdot; D)$ reduces to a multiple of the classical total variation and in particular \eqref{TVsmoothSets} reduces to a multiple of the surface area of the portion of $\partial A$ contained in $D$.

Since $\rho$ is bounded above and below by positive constants, a function $u \in L^1(\nu)$ has finite weighted total variation if and only if it has finite classical total variation. Therefore, if $u \in L^1(\nu)$ with $TV(u; D) < \infty$, then $u$ is a BV function and hence it has a distributional derivative $Du$ which is a Radon measure (see Chapter 13 in \cite{Leoni}). We denote by $|Du|$ the total variation of the measure $Du$ and denote by $|Du|_{\rho^2}$ the measure determined by 
\begin{equation}
 \rd |Du|_{\rho^2}  = \rho^2(x) \rd |Du|.
 \label{WeightedDistributionalDerivativeVSSitributionalDerivative}
 \end{equation}
By Theorem 4.1 in \cite{Baldi}
\begin{equation}
 TV(u;D)  = |Du|_{\rho^2}(D) = \int_{D} \rho^2(x) \;\rd |Du|(x) .
 \end{equation}

A simple consequence of the definition of the weighted $TV$ is its lower semicontinuity with respect to $L^1$-convergence. More precisely, if $u_k \converges{L^1(\nu)} u$ then
\begin{equation} \label{TVlsc}
   TV(u ;D) \leq \liminf_{k \rightarrow \infty} TV(u_k;D).  
\end{equation}

Finally, for $u \in BV(D)$, the co-area formula 
$$ TV(u;D) = \int_{\R} TV(\one_{\left\{ u >  t \right\}}; D) \;\rd t,   $$
relates the weighted total variation of $u$ with the weighted total variation of its level sets. A proof of this formula can be found in \cite{BBF}. For a proof of the formula in case $\rho$ is constant see \cite{Leoni}.

In the remainder of the paper, we write $TV(u)$ instead of $TV(u;D)$ when the context is clear.

\subsection{Continuum partitioning} \label{sec:cont_par2}

We use the total variation to rigorously formulate the continuum partitioning problem \eqref{P2}. 
The precise definition of the $\cut_\rho(A, A^c)$ in functional in \eqref{P2} is
\[ \cut_\rho(A,A^c) = TV(\one_A; D), \]
where $TV(\one_A;D)$ is defined in \eqref{TVWeighted}. We note that $TV(\one_{A}; D)$ is equal to $TV(\one_{A^c}; D)$, and is the perimeter of the set $A$ in $D$ weighted by $\rho^2$. 

We also formulate the balance terms, defined by \eqref{contbal} and \eqref{mass}, using characteristic functions.
In fact, we start by extending the balance term to arbitrary functions  $u \in L^1(\nu)$: 
\begin{gather}
B_{ {\rm R}} (u)= \int_{D} |u(x) - \mean_\rho(u) | \rho(x) \;\rd x  \quad \text{and} \quad B_{ {\rm C}}(u)=  \min_{c \in \real}\int_{D} |u(x) - c | \rho(x) \;\rd x,  \label{balance}
\end{gather}
where $\mean_\rho(u)$ denotes the mean/expectation of $u(x)$ with respect to the measure $\rd \nu = \rho \rd x$. From here on, we use $B$ to represent either $B_{ {\rm R} }$ or $B_{ {\rm C} }$ depending on the context.
We have the relations:
\begin{gather} \label{frou}
 B_{ {\rm R} }(\1_{A}) = \bal_{ {\rm R}}(A,A^c), \;\;  B_{ {\rm C} }(\1_{A}) = \bal_{ {\rm C}}(A,A^c),
\end{gather}
for every measurable subset $A$ of $D$. We also consider \emph{normalized indicator functions} $\tilde{\1}_A$ given by
$$ \tilde{\1}_A:= \frac{ \one_{A}}{B(\one_A)} , \quad A \subseteq D,  $$
and consider the set
\begin{equation}
\mathrm{Ind}(D) :=  \left\{ u \in L^{1}(\nu):  u=\tilde{\1}_{A} 
   \text{ for some measurable set }  A \subseteq D \;\text{ with } \; B(\one_{A}) \neq 0   \right\}.  \label{ind}
\end{equation}
Then for $u = \tilde{\1}_A \in \mathrm{Ind}(D)$
\begin{equation}
TV(u) = TV(\tilde{\1}_{A}  )=TV\left( \frac{\1_{A}}{B(\one_{A})}\right)=  \frac{TV(\1_{A})}{B(\one_{A})}=\frac{2\cut(A,A^c)}{\bal(A,A^c) } .
\label{eqn:TvNormalizedCont}
\end{equation}

Thus, we deduce that problem \eqref{P2} is equivalent to :
\begin{equation}  \label{PfinalCont}
\text{Minimize \;\;\;  }  E(u) := \begin{cases} 
TV(u) & \text{ if } u  \in  \mathrm{Ind}(D) \\
+ \infty & \text{ otherwise.}
\end{cases} 
\end{equation}
Before we show that both the continuum ratio cut and Cheeger cut indeed have a minimizer  we need the following lemma:
\begin{lemma} \label{Bcont} 
\begin{itemize}
\item[(i)] The balance functions $B$ are continuous on $L^1(\nu)$. 
\item[(ii)] The set $\ind(D)$ is closed in $L^1(D)$. 
\end{itemize}
\end{lemma}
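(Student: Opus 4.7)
For \textbf{(i)} I would handle $B_R$ and $B_C$ separately. For $B_R(u)=\|u-\mean_\rho(u)\1\|_{L^1(\nu)}$, the map $u\mapsto \mean_\rho(u)=\int u\rho\,dx$ is a bounded linear functional on $L^1(\nu)$ of operator norm at most $1$ (since $\nu$ is a probability measure), so $u\mapsto u-\mean_\rho(u)\1$ is Lipschitz from $L^1(\nu)$ into itself, and composing with the norm gives Lipschitz continuity of $B_R$. For $B_C(u)=\inf_{c\in\R}\|u-c\|_{L^1(\nu)}$, the elementary bound $\bigl|\|u-c\|_{L^1(\nu)}-\|v-c\|_{L^1(\nu)}\bigr|\leq\|u-v\|_{L^1(\nu)}$ holds uniformly in $c$ and therefore passes to the infimum, yielding $|B_C(u)-B_C(v)|\leq\|u-v\|_{L^1(\nu)}$.

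For \textbf{(ii)}, since $\rho$ is bounded above and below by positive constants, $L^1(D)$ and $L^1(\nu)$ have equivalent norms and it suffices to work in $L^1(\nu)$. I take $u_n=\tilde{\1}_{A_n}=\alpha_n\one_{A_n}$ with $\alpha_n:=1/B(\one_{A_n})$ and assume $u_n\to u$ in $L^1(\nu)$. The explicit formulas $B_R(\one_A)=2\nu(A)\nu(A^c)$ and $B_C(\one_A)=\min(\nu(A),\nu(A^c))$ immediately give $B(\one_{A_n})\leq 1/2$, hence $\alpha_n\geq 2$. The heart of the argument is to rule out $\alpha_n\to\infty$ along any subsequence: if it held, then $B(\one_{A_n})\to 0$ and so $\min(\nu(A_n),\nu(A_n^c))\to 0$. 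Computing $\|u_n\|_{L^1(\nu)}=\alpha_n\nu(A_n)$, I find that either the $L^1$ norms diverge (when $\nu(A_n^c)\to 0$), contradicting the norm-boundedness of $L^1$-convergent sequences, or the $L^1$ norms tend to a positive constant while $u_n$ is supported on sets of vanishing $\nu$-measure (when $\nu(A_n)\to 0$), contradicting the uniform integrability of $L^1$-convergent sequences via Vitali. With $\{\alpha_n\}$ bounded, I pass to a subsequence with $\alpha_n\to\alpha\geq 2$; then $\one_{A_n}=u_n/\alpha_n\to u/\alpha$ in $L^1(\nu)$, and a further a.e.-convergent subsequence forces $u/\alpha$ to be $\{0,1\}$-valued, so $u/\alpha=\one_A$ for some measurable $A\subseteq D$. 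Applying part (i) gives $\alpha_n=1/B(\one_{A_n})\to 1/B(\one_A)$, identifying $\alpha=1/B(\one_A)$ and $u=\tilde{\1}_A\in\ind(D)$.

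The main obstacle is the boundedness step for $\alpha_n$ in (ii); everything else is a routine continuity-and-subsequence bookkeeping. The key tension is that \emph{highly unbalanced} indicator functions, for which $B(\one_{A_n})\to 0$, cannot converge in $L^1$: their normalized $L^1$ norms either blow up or manifest a concentration incompatible with $L^1$ convergence. The dichotomy $\nu(A_n)\to 0$ versus $\nu(A_n^c)\to 0$ is handled by invoking, respectively, equi-integrability and norm boundedness of $L^1$-convergent sequences, and the closed forms for $B_R$ and $B_C$ reduce each side of the dichotomy to a short arithmetic check.
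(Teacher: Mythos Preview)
Your proof is correct. For part (i) your argument and the paper's coincide in spirit; your treatment of $B_C$ via ``an infimum of $1$-Lipschitz functions is $1$-Lipschitz'' is slightly slicker than the paper's argument through medians, but the content is the same.

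For part (ii) the routes genuinely differ. The paper exploits the $1$-homogeneity of $B$: since $B(u_n)=B(\one_{A_n})/B(\one_{A_n})=1$ for all $n$, continuity from (i) gives $B(u)=1$, and then an a.e.-convergent subsequence argument (left somewhat implicit in the paper) forces $u$ to be two-valued, say $u=\alpha\one_A$, whence $1=B(u)=\alpha B(\one_A)$ identifies $\alpha$. Your approach instead bounds $\alpha_n$ directly through a dichotomy on which of $\nu(A_n),\nu(A_n^c)$ vanishes, using the explicit closed forms for $B_R$ and $B_C$ together with an equi-integrability argument. The paper's route is shorter and applies verbatim to any $1$-homogeneous continuous balance term without needing formulas; your route is more explicit about \emph{why} highly unbalanced normalized indicators cannot converge in $L^1$, and it avoids the slightly compressed ``$u$ must be of the form $\alpha\one_A$'' step that the paper states without justification.
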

\begin{proof}
 Let us start by proving \rm (i). We first consider the balance term $B_{ {\rm C}}(u)$ that corresponds to the Cheeger Cut. Suppose that $u_k \to u$ in $L^1(\nu)$, and let $c_{k},c_{\infty}$ denote medians of $u_k$ and $u$ respectively. By definition, $c_k$ and $c$ satisfy
$$
c_{k} \in \underset{c \in \R}{\mathrm{argmin}} \; \int_{D} |u_{k}(x) - c| \; \rho(x) \; \rd x, \qquad c_{\infty} \in \underset{c \in \R}{\mathrm{argmin}} \; \int_{D} |u(x) - c| \; \rho(x) \; \rd x
$$ 
This implies that 
$$
\int |u_k(x)-c_k| \rho(x) \; \rd x  \le    \int|u_k(x)- c| \rho(x) \; \rd x$$
for any $c \in \R$, so that in particular we have
\begin{align*}
& \int |u_k-c_k| \rho(x) \; \rd x   -   \int|u-c_{\infty}| \; \rho(x) \; \rd x \\ &\le  \int |u_k-c_{\infty}| \drho  -   \int|u-c_{\infty}| \drho  \le  \int  |u_k -u| \drho = \|u_k - u\|_{L^1(\nu)}
\end{align*}
 Exchanging the role of $u_k$ and $u$ in this argument implies that the inequality
\begin{align*}
& \int |u-c_{\infty}| \drho -   \int|u_k-c_k| \drho \le  \int  |u -u_k| \drho \leq\|u_k - u\|_{L^1(\nu)}
\end{align*}
also holds. Combining these inequalities shows that $|B(u_k)-B(u) | \le \|u_k-u\|_{L_1(\nu)} \to 0$ as desired. Now consider the balance term $B_{ {\rm R}}(u)$ that corresponds to the ratio Cut. For the ratio cut, the inequality $||a|-|b|| \le |a-b|$ immediately implies
\begin{align*}
& \left| \int |u_k-\mean_\rho(u_k)| \drho -   \int|u-\mean_\rho(u)| \drho  \right|  \\
& \leq \int |u_k -u| \drho  + \int | \mean_\rho(u_k) - \mean_\rho(u)| \drho  \\
& \leq \int |u_k -u| \drho +  | \mean_\rho(u_k) - \mean_\rho(u)|. 
\end{align*}
Since $u_k \to u$ in $L^1(\nu)$ we have that $\mean_\rho(u_k) \to  \mean_\rho(u)$ and therefore $|B(u_k)-B(u) | \le \|u_k-u\|_{L_1(\nu)} +    | \mean_\rho(u_k) - \mean_\rho(u)|  \to 0$ as desired.

In order to prove (\rm{ii}) suppose that $\left\{u_k\right\}_{n \in \N}$ is a sequence in $\ind(D)$ converging in $L^1(\nu)$ to some $u \in L^1(\nu)$, we need to show that $u \in \ind(D)$. By (\rm{i}) we know that $B(u_k) \rightarrow B(u) $ as $k \rightarrow \infty$. Since $u_k \in \ind(D)$, in particular $B(u_k)=1$. Thus, $B(u)=1$. On the other hand,  $u_k \in \ind(D)$ implies that $u_k$ has the form $u_k=\alpha_k \1_{A_k}$. Since this is true for every $k$, in particular we must have that $u$ has the form $u= \alpha \1_{A}$for some real number $\alpha$ and some measurable subset $A$ of $D$. Finally, the fact that $B$ is 1-homogeneous implies that $1= B(u)= \alpha B(\1_{A}) $. In particular $B(\1_{A}) \not =0$ and $\alpha = \frac{1}{B(\1_{A})}$. Thus $u= \tilde{\1}_A$ with $B(\1_A) \not =0$ and hence $u \in \ind(D)$.
\end{proof}

\begin{lemma} \label{lem:exist_cont}
Let $D$ and $\nu$ be as stated at the beginning of this section. There exists a measurable set $A \subseteq D$ with $0< \nu(A) < 1$ such that $\tilde \one_A$ minimizes \eqref{PfinalCont}.
\end{lemma}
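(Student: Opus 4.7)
The plan is a direct method in the calculus of variations: take a minimizing sequence, extract an $L^1(\nu)$-convergent subsequence using BV compactness, then use lower semicontinuity of $TV$ and closedness of $\ind(D)$ to conclude the limit is a minimizer.

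First I would check that the infimum $m := \inf_{u \in \ind(D)} TV(u)$ is finite and nonnegative. Pick any open ball $B \subset\subset D$ with $0 < \nu(B) < 1$ (possible since $D$ is open and $\rho$ is bounded above and below by positive constants, so $\nu(B) > 0$ for any nonempty open $B$ and $\nu(B) < 1$ if $B \neq D$). Then $\tilde{\one}_B \in \ind(D)$ and, using \eqref{TVsmoothSets} together with the bound $\rho \leq \Lambda$, we get $TV(\tilde{\one}_B) < \infty$. So $0 \leq m < \infty$, and we may take a minimizing sequence $u_k = \tilde{\one}_{A_k} \in \ind(D)$ with $TV(u_k) \to m$.

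The main step, and the main obstacle, is proving precompactness of $\{u_k\}$ in $L^1(\nu)$. The danger is that $B(\one_{A_k}) \to 0$, in which case $u_k$ blows up. To rule this out I would use the relative isoperimetric inequality on the bounded Lipschitz domain $D$ (which, together with $\lambda \leq \rho \leq \Lambda$, gives $\min(\nu(A), \nu(A^c))^{(d-1)/d} \leq C \cdot TV(\one_A)$ for any measurable $A \subseteq D$). Combined with \eqref{eqn:TvNormalizedCont} and the formulas in \eqref{balance}--\eqref{frou}, this yields
\begin{equation*}
TV(u_k) \;=\; \frac{TV(\one_{A_k})}{B(\one_{A_k})} \;\geq\; \frac{c}{\min(\nu(A_k),\nu(A_k^c))^{1/d}}
\end{equation*}
for both $B_R$ and $B_C$ (in the ratio case one uses $\nu(A_k)\nu(A_k^c) \leq \min(\nu(A_k),\nu(A_k^c))$). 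Since $TV(u_k)$ is bounded, this forces $\min(\nu(A_k),\nu(A_k^c)) \geq \delta > 0$ along the sequence, hence $B(\one_{A_k}) \geq \delta_1 > 0$. Consequently $\|u_k\|_{L^\infty(\nu)} = 1/B(\one_{A_k}) \leq 1/\delta_1$ and $\|u_k\|_{L^1(\nu)} \leq 1/\delta_1$.

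Now $\{u_k\}$ is bounded in $L^1(D)$ (since $\rho \geq \lambda$) and has bounded weighted $TV$, hence bounded classical $TV$ (since $\rho \leq \Lambda$), so $\{u_k\}$ is bounded in $BV(D)$. By the compact embedding $BV(D) \hookrightarrow L^1(D)$, valid since $D$ is bounded with Lipschitz boundary, a subsequence (not relabeled) converges in $L^1(D)$, and therefore in $L^1(\nu)$, to some $u \in L^1(\nu)$. By the lower semicontinuity \eqref{TVlsc}, $TV(u) \leq \liminf_k TV(u_k) = m$. By Lemma \ref{Bcont}(ii), $\ind(D)$ is closed in $L^1(\nu)$, so $u \in \ind(D)$; write $u = \tilde{\one}_A$ with $B(\one_A) \neq 0$, which forces $0 < \nu(A) < 1$. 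Therefore $E(u) = TV(u) \leq m$, and $u$ is a minimizer of \eqref{PfinalCont}, giving the desired set $A$.
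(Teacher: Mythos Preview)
Your proof is correct and follows the same direct-method strategy as the paper's. The only substantive difference is in the precompactness step: the paper simply invokes Theorem~5.1 in \cite{Baldi}, whereas you supply an explicit argument via the relative isoperimetric inequality to show that $B(\one_{A_k})$ stays bounded away from zero (and hence the minimizing sequence is bounded in $L^1$). Your version is more self-contained on this point and makes transparent why the normalizing factors cannot degenerate---an issue the paper's proof leaves implicit in the citation.

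One small slip: when you pass from ``bounded weighted $TV$'' to ``bounded classical $TV$'', the relevant inequality comes from the \emph{lower} bound $\rho \geq \lambda$ (so that $|Du|(D) \leq \lambda^{-2}\, TV(u;D)$), not from $\rho \leq \Lambda$ as you wrote. The argument is unaffected.
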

\begin{proof}
The statement follows by the direct method of the calculus of variations. 
Since the functional is bounded from below it suffices to show that it is lower semicontinuous with respect to the $L^1(\nu)$ norm and that a minimizing sequence is precompact in $L^1(\nu)$.
To show lower semi-continuity it is enough to consider a sequence  $u_n = \one_{A_n} \in \mathrm{Ind}(D)$ converging in $L^1(\nu)$ to $u \in L^1(\nu)$. 
From Lemma \ref{Bcont} it follows that $u \in \mathrm{Ind(D)}$ and hence $u = \tilde \one_A$ for some $A$ with $B(A) > 0$.
Therefore $\one_{A_n} \to \one_A$ as $n \to \infty$ in $L^1(\nu)$.
The lower semi-continuity then follows from the lower semi-continuity of the total variation \eqref{TVlsc}, the continuity of $B$ and 
the fact that since $B(\one_A)>0$, $1/B(\one_{A_n}) \to 1/B(\one_A) $ as $n \to \infty$. 

The pre-compactness of any minimizing sequence of \eqref{PfinalCont} follows directly from Theorem 5.1 in \cite{Baldi}, which completes the proof.
\end{proof}

\section{Assumptions and statements of main results.} \label{sec:main}

Here we present the precise hypotheses we use and  state precisely the main results of this paper.
Let $D$ be an open, bounded, connected subset of $\R^d$ with Lipschitz boundary, and let $\rho:D \rightarrow \R$ be a continuous density which is bounded below and above by positive constants, that is, for all $x \in D$
\begin{equation}
\label{eq:ConditionsRho}
 \lambda\leq \rho(x) \leq\Lambda 
\end{equation} 
for some $\Lambda \geq\lambda>0$. We let $\nu$ be the measure $\rd \nu = \rho \rd x$. Let $\eta: [0, \infty) \to [0, \infty)$ be a similarity kernel, that is, a function satisfying:
\begin{itemize} \addtolength{\itemsep}{3pt}
\item[\textbf{(K1)}] $\eta(0)>0$ and $\eta$ is continuous at $0$.  
\item[\textbf{(K2)}] $\eta$ is non-increasing.
\item[\textbf{(K3)}] $ \sigma_\eta:=  \int_{\Rd} \eta(|x|)|\langle x , e_{1} \rangle| \; \rd x <  \infty. $
\end{itemize}
We refer to the quantity $\sigma_\eta$ as the \textit{surface tension} associated to $\eta$.  These hypotheses on $\eta$ hold for the standard similarity functions used in clustering contexts, such as the Gaussian similarity function $\eta(r) = \exp(-r^2)$  and the proximity
 similarity kernel  (i.e. $\eta(r) = 1$ if $r\le1$ and $\eta(r) = 0$ otherwise). 
 
The main result of our paper is:
\begin{theorem}[Consistency of cuts]  \label{main}
Let domain $D$,  probability measure $\nu$ and kernel $\eta$ satisfy the conditions above. 
  Let $\veps_n$ denote any sequence of positive numbers converging to zero that satisfy
$$
\lim_{n \to 0}\frac{ (\log n)^{3/4}}{n^{1/2}} \frac{1}{\veps_n} =0 \quad (d = 2), \qquad  \lim_{n \to 0}\frac{ (\log n)^{1/d}}{n^{1/d}} \frac{1}{\veps_n} =0 \quad (d \ge 3).
$$
Let  $\{\x_j\}_{j\in\N}$ be an i.i.d. sequence of points in $D$ drawn from the density $\rho(x)$ and  let $\X_n = \{\x_{1},\ldots,\x_{n}\}$. Let $\G_n = (\X_n,W_n)$ denote the graph whose edge weights are
$$
w^n_{ij} := \eta\left(\frac{|\x_i-\x_j|}{\veps_n}\right) \qquad  1 \le i,j \le n.
$$
Finally, let $\{\Y_n^*, {\Y_n^*}^c \}$ denote any optimal balanced cut of $\G_n$ (solution of problem \eqref{P1}).   If $\{A^*,{A^*}^c\}$ is the unique optimal balanced cut of the domain $D$ (solution of problem \eqref{PfinalCont}) then with probability one the sequence $\{\Y^*_n,{\Y^*_n}^c \}$ converges to $\{A^*,{A^*}^c\}$ in the $TL^1$-sense. 
If there is more than one optimal continuum balanced cut \eqref{PfinalCont} then $\{\Y_n^*, {\Y_n^*}^c \}$ converges along a subsequence to an optimal continuum balanced cut.

Additionally, $\C_n$, the minimum balance cut of the graph  $\G_n$  (the minimum of \eqref{P1}), satisfies
\begin{equation}
\lim_{n \rightarrow \infty} \frac{\C_n}{n^2 \veps_n^{d+1}} = \sigma_\eta \C ,  
\label{ConvergenceCheegerConstants}
\end{equation}
where $\sigma_\eta $ is the surface tension associated to the kernel $\eta$ and $\C$ is the minimum of \eqref{P2}.

\end{theorem}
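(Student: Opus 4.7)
The plan is to recast the result as a $\Gamma$-convergence statement on the $TL^1$ space set up in Section~\ref{sec:trans}. Using the normalized-indicator reformulation of~\eqref{eqn:TvNormalizedCont}, the discrete problem~\eqref{P1} is the minimization of a rescaled graph total variation over normalized discrete indicator functions on $\X_n$, while the continuum problem~\eqref{P2} is the minimization of $E$ from~\eqref{PfinalCont} over $\ind(D)$. I would prove that these rescaled discrete functionals, extended by $+\infty$ off the set of normalized discrete indicators, $\Gamma$-converge in $TL^1(D)$ to $\sigma_\eta E$, and separately establish that sequences of uniformly bounded energy are precompact in $TL^1$. The fundamental theorem of $\Gamma$-convergence will then yield both the $TL^1$-convergence of the minimizers $\{\Y_n^*, {\Y_n^*}^c\}$ and the asymptotic identity~\eqref{ConvergenceCheegerConstants} simultaneously.

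\textbf{Liminf.} For a sequence of normalized discrete indicators $u_n$ with $(\nu_n, u_n) \converges{TL^1} (\nu, u)$, I would invoke the $\Gamma$-liminf inequality for graph total variation recalled in Subsection~\ref{sec:GGTV}; this is precisely where the scaling hypothesis $\veps_n \gg (\log n)^{p_d}/n^{1/d}$ enters, through the optimal $L^\infty$-transport estimate of Proposition~\ref{thm:InifinityTransportEstimate}. The resulting bound pushes the discrete energy below $\sigma_\eta\, TV(u;D)$. To pass this inequality to the normalized problem one further needs to verify that the limit $u$ itself lies in $\ind(D)$; this follows from Lemma~\ref{Bcont}(ii) together with Proposition~\ref{prop:TLchar}, which reduces $TL^1$-convergence (with the fixed limiting measure $\nu$) to ordinary $L^1(\nu)$-convergence after composing with a stagnating transportation map.

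\textbf{Recovery sequence and compactness.} For a target $u = \tilde{\1}_A \in \ind(D)$ with $TV(u) < \infty$, I would construct a recovery sequence by applying the graph-TV recovery sequence from Subsection~\ref{sec:GGTV} to $\one_A$ and renormalizing by the corresponding discrete balance. Continuity of $B$ on $L^1(\nu)$ (Lemma~\ref{Bcont}(i)) combined with Proposition~\ref{prop:TLchar} ensures that the discrete balances converge to $B(\one_A) > 0$, so the renormalization is well defined and the energy of the renormalized sequence converges to $\sigma_\eta E(u)$. For compactness, a uniform bound on the rescaled graph total variation of $u_n$ yields $TL^1$-precompactness of $(\nu_n, u_n)$ by the compactness half of the graph-TV convergence theorem of Subsection~\ref{sec:GGTV}; closedness of $\ind(D)$ (Lemma~\ref{Bcont}(ii)) then forces any subsequential limit point to itself be a normalized indicator.

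\textbf{Main obstacle and wrap-up.} The most delicate point is controlling the division by the balance term $B$, so that it neither degenerates along the recovery sequence nor along a subsequential limit of discrete minimizers. Both issues hinge on the continuity and closedness properties of Lemma~\ref{Bcont}, but combining them with graph-TV $\Gamma$-convergence in the regime where $\veps_n$ is only barely above $(\log n)^{p_d}/n^{1/d}$ is the sharpest technical input; this is where Proposition~\ref{thm:InifinityTransportEstimate} is exploited to its full strength. Once $\Gamma$-convergence and compactness are established, every $TL^1$-accumulation point of $\{\Y_n^*, {\Y_n^*}^c\}$ is an optimal continuum balanced cut; the uniqueness hypothesis promotes subsequential convergence to full convergence, and convergence of minimum values of the $\Gamma$-converging functionals yields~\eqref{ConvergenceCheegerConstants}.
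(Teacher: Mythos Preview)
Your approach is essentially the paper's: reformulate via normalized indicators, prove $\Gamma$-convergence of $E_n$ to $\sigma_\eta E$ in $TL^1$ by combining the graph-TV results of Subsection~\ref{sec:GGTV} with the continuity properties of Lemma~\ref{Bcont}, and conclude via Proposition~\ref{comp_gen}. The liminf and limsup sketches match the paper's Proposition~\ref{prop:gamma} closely.

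There is, however, a genuine gap in your compactness step. The graph-TV compactness theorem (Theorem~\ref{compact}) requires \emph{both} a uniform bound on $GTV_{n,\veps_n}(u_n)$ \emph{and} a uniform bound on $\|u_n\|_{L^1(\nu_n)}$. For the normalized minimizer $u_n^* = \one_{\Y_n^*}/B_n(\one_{\Y_n^*})$ the second bound is not automatic: it is equivalent to showing that $B_n(\one_{\Y_n^*})$ stays bounded away from zero, which is exactly the degeneracy you flag as the main obstacle. Lemma~\ref{Bcont} alone does not supply this, since continuity of $B$ along a $TL^1$-convergent sequence presupposes convergence, whereas here convergence is what you are trying to prove.

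The paper closes this gap with a bootstrap (Lemma~\ref{com}). One first replaces $u_n^*$ by an auxiliary sequence $v_n$ with $\|v_n\|_{L^1(\nu_n)} = 1$ by construction: for the Cheeger balance take the normalized indicator of whichever of $\Y_n^*,\Y_n^{*c}$ is the smaller set, and for the Ratio balance subtract the empirical mean. Theorem~\ref{compact} then applies to $v_n$; any subsequential limit is a minimizer of $E$ by the $\Gamma$-convergence already established, hence lies in $\ind(D)$ and has strictly positive balance. Only at this point does continuity of the balance (Lemma~\ref{continuity}) kick in to show $B_n(\one_{\Y_n^*}) \to B(\one_A) > 0$ along the subsequence, which retroactively yields the missing $L^1$ bound on both $u_n^*$ and $u_n^{**}$. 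Your proposal should incorporate this two-step mechanism rather than invoking Theorem~\ref{compact} directly on $u_n^*$.
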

As we discussed in Remark \ref{rem:intro} for $d \geq 3$ the scaling of $\veps$  on $n$ is essentially the best possible.

The proof of Theorem \ref{main} relies on establishing a variational convergence of discrete balanced cuts to continuum balanced cuts called the $\Gamma$-convergence which we recall in Subsection \ref{sec:gamma-conv}. The proof utilizes the results obtained in \cite{GTS}, where the notion of $\Gamma$-convergence is introduced in the context of data analysis problems, and in particular the $\Gamma$-convergence of the graph total variation is considered.   
The $\Gamma$-convergence, together with a compactness result, provides sufficient conditions for the convergence of minimizers of a given family of functionals to the minimizers of a limiting functional. 

\begin{remark}
\label{rem:main}
A few remarks help clarify the hypotheses and conclusions of our main result. The scaling condition $\veps_n \gg (\log n)^{p_d}n^{-1/d}$ comes directly from the existence of transportation maps from Proposition \eqref{thm:InifinityTransportEstimate}. This means that $\veps_n$ must decay more slowly  than the maximal distance a point in $D$ has to travel to match its corresponding data point in $\X_n$. In other words, the similarity graph $\G_n$ must contain information on a larger scale than that on which the intrinsic randomness operates. 
Lastly, the conclusion of the theorem still holds if the partitions $\{\Y^*_{n}, {\Y_n^*}^c\}$ only approximate an optimal balanced cut, that is if the energies of $\{\Y^*_{n}, {\Y_n^*}^c\}$ satisfy 
$$\lim_{n \to \infty}   \left(\frac{\cut({\Y_n}^*,{\Y_n^*}^c)}{\bal(\Y_n^*, {\Y_n^*}^{c})} - \min_{\Y  \subsetneq \X_n}  \frac{\cut(\Y,\Y^c)}{\bal(\Y, \Y^c)} \right)= 0.$$
This important property follows from a general result on $\Gamma$-convergence which we recall in Proposition \ref{comp_gen}.
\end{remark}


We also establish the following multiclass equivalent to Theorem \ref{main}.

\begin{theorem}  \label{main2}
Let domain $D$, measure $\nu$, kernel $\eta$, sequence $\{\veps_n\}_{n \in \N}$,
sample points $\{\x_i\}_{i \in N}$, and graph $\G_n$
 satisfy the assumptions of Theorem \ref{main}.
Let $({\Y^*}^n_{1}, \ldots,  {\Y^*}^n_R )$ denote any optimal balanced cut of $\G_n$, that is a minimizer of
\eqref{eq:multi1:Body}. If $  (A^*_{1}, \ldots,  A^*_{R} )  $ is the unique optimal balanced cut of $D$
(i.e. minimizer of \eqref{eq:multi1})
 then with probability one the sequence
$ ({\Y^*}^n_{1}, \ldots,  {\Y^*}^n_R ) $ converges to $  (A^*_{1}, \ldots,  A^*_{R} )  $ in the $TL^1$-sense.
If the optimal continuum balanced cut is not unique then the convergence to a minimizer holds along subsequences.  
 Additionally, $\C_n$, the minimum of \eqref{eq:multi1:Body}, satisfies
$$ \lim_{n \rightarrow \infty} \frac{\C_n}{n^2 \veps_n^{d+1}} = \sigma_\eta \C ,  $$
where $\sigma_\eta $ is the surface tension associated to the kernel $\eta$ and $\C$ is the minimum of \eqref{eq:multi1}.
\end{theorem}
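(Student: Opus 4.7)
The plan is to extend the $\Gamma$-convergence argument from Theorem \ref{main} to the vector-valued multiway setting. Lift the problem to $R$-tuples of indicator functions: define the continuum functional
\[ F_\infty(\vec{v}) := \sum_{k=1}^R \frac{TV(v_k)}{\int_D v_k \, \rd \nu} \]
on tuples $\vec{v} = (v_1, \dots, v_R)$ such that each $v_k = \one_{A_k}$ with $\nu(A_k)>0$ and $\sum_k v_k = \one_D$; set $F_\infty = +\infty$ otherwise. The discrete analog $F_n$ (with $TV_n$ rescaled by $n^2\veps_n^{d+1}$) is constrained so that the supports of its components partition $\X_n$. The natural topology on tuples is componentwise $TL^1$-convergence modulo permutation of labels, as in Definition \ref{def:TL1Partitions}.

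For the $\Gamma$-liminf inequality, fix permutations so that $\vec{v}_n \to \vec{v}$ componentwise in $TL^1$. Since the energy decomposes additively across components, apply the scalar $\Gamma$-liminf for graph total variation from Subsection \ref{sec:GGTV} to each coordinate; combined with the fact that componentwise $TL^1$-convergence implies convergence of the denominators (from weak convergence of the graph measures), this gives $\sigma_\eta F_\infty(\vec{v}) \le \liminf_n F_n(\vec{v}_n)$. For the $\Gamma$-limsup, given an admissible $\vec{v}$ with $v_k = \one_{A_k}$, take the recovery sequence $v_{n,k} := \one_{A_k \cap \X_n}$. Since $\{A_k\}$ partitions $D$, the sets $\{A_k \cap \X_n\}$ automatically partition $\X_n$; componentwise $TL^1$-convergence follows from Proposition \ref{thm:InifinityTransportEstimate} together with $\nu(\partial A_k) = 0$ for sets of finite perimeter. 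Coordinate-wise application of the scalar $\Gamma$-limsup from Subsection \ref{sec:GGTV}, combined with convergence of the denominators, produces the matching upper bound.

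For compactness, suppose $F_n(\vec{v}_n)$ is uniformly bounded. Since each denominator $\int v_{n,k}\,\rd\nu_n \le 1$, each scalar numerator $TV_n(v_{n,k})/(n^2\veps_n^{d+1})$ is uniformly bounded, and the scalar compactness result from Subsection \ref{sec:GGTV} combined with a diagonal argument yields componentwise $TL^1$-convergence of a subsequence to some $\vec{v}$. The indicator structure and the partition property pass to the $L^1$ limit. The limit is admissible provided each $\int v_k \,\rd\nu > 0$, and this is the main obstacle: one must rule out the degenerate case in which some component of a near-minimizing sequence loses mass. This is handled by comparing the minimum $\C_n$ to the upper bound obtained by plugging in any recovery sequence for a fixed admissible continuum partition; the resulting uniform upper bound on $F_n$ forces $\liminf_n \int v_{n,k}\,\rd\nu_n > 0$ for each $k$ along a near-minimizing sequence. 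With $\Gamma$-convergence and compactness in hand, the standard consequence recalled in Proposition \ref{comp_gen} delivers both the convergence of minimizers in the $TL^1$-sense and the convergence of the minimum values asserted in Theorem \ref{main2}.
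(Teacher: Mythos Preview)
Your liminf argument and the overall $\Gamma$-convergence strategy match the paper, but there is a genuine gap in your limsup step. You propose $v_{n,k}:=\one_{A_k\cap X_n}$ and then write ``coordinate-wise application of the scalar $\Gamma$-limsup from Subsection \ref{sec:GGTV} \ldots\ produces the matching upper bound.'' The scalar result (Theorem \ref{CorollaryDiscreteGamma}) only asserts the \emph{existence} of some recovery sequence $Y_n\subseteq X_n$; it does not say that the restriction $A\cap X_n$ is one. If you instead invoke that theorem coordinate-wise to get recovery sequences $Y_n^k$, those sequences need not partition $X_n$, so the orthogonality/covering constraints of $\Part_n(D)$ can fail. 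In the paper this is exactly the obstacle singled out as ``significantly more involved than the two-class argument.''

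What is actually needed, and what the paper supplies, is a two-stage argument. First, for partitions $\{A_1,\dots,A_R\}$ induced by sets with \emph{piecewise smooth} boundary, one shows directly that $Y_n^r=A_r\cap X_n$ works: the key estimate $\|\one_{A_r^n}-\one_{A_r}\|_{L^1(\nu)}\le C_0(B_r)\|Id-T_n\|_\infty$ comes from tubular-neighborhood volume bounds (Weyl), and this is what feeds into the nonlocal-TV comparison to get $\limsup_n GTV_{n,\veps_n}(\one_{Y_n^r})\le\sigma_\eta TV(\one_{A_r})$. This step genuinely uses the regularity of $\partial A_r$; for an arbitrary finite-perimeter set one has no such $O(\|Id-T_n\|_\infty)$ control. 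Second, one needs a density result: every finite-perimeter partition can be approximated in $L^1$ and in energy by partitions with piecewise smooth pieces. This is nontrivial because, unlike the two-class case, triple junctions prevent approximation by smooth partitions; the paper handles it via mollification, Sard-type level-set selection, and a combinatorial lemma (Lemma \ref{LemmaPermutation}) that reorders the pieces so that making them disjoint does not increase perimeter. Your proposal skips both the regularity requirement in the first stage and the entire density argument in the second.
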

The proof of Theorem \ref{main2} involves modifying the geometric measure theoretical results from \cite{GTS}. This leads to a substantially longer and more technical proof than the proof of Theorem \ref{main}, but the overall spirit of the proof remains the same in the sense that the $\Gamma$-convergence plays the leading role. Finally, we remark that analogous observations to the ones presented in Remark \ref{rem:main} apply to Theorem \ref{main2}.

\section{Background on $\Gamma$-convergence}
\label{sec:gamma-conv}

We recall and discuss the notion of $\Gamma$-convergence. In the literature $\Gamma$-convergence is defined for deterministic functionals. Nevertheless, the objects we are interested in are random and thus we decided to introduce this notion of convergence in this non-deterministic setting.

Let $(X,d_X)$ be a metric space and let $(\Omega, \mathfrak{F}, \mathbb{P})$ be a probability space. Let  $F_n: X  \times \Omega \rightarrow[0,\infty] $ be a sequence of random functionals. 
\begin{definition} \label{def:Gamma}
The sequence $\left\{ F_n \right\}_{n \in \mathbb{N}} $  $ \Gamma$-converges with respect to metric  $d_X$ to the deterministic functional $F: X \rightarrow  [0, \infty]$ as $n \rightarrow \infty$ if with $\mathbb{P}$-probability one the following conditions hold simultaneously:
\begin{enumerate}
\item \textbf{Liminf inequality:} For every $x \in X$ and every sequence $\left\{ x_n \right\}_{n \in \mathbb{N}}$ converging to $x$,
\begin{equation*}
\liminf_{n \rightarrow \infty} F_n(x_n) \geq F(x),
\end{equation*}
\item  \textbf{Limsup inequality:} For every $x \in X$ there exists a sequence $\left\{ x_n \right\}_{n \in \N}$ converging to $x$ satisfying
\begin{equation*}
\limsup_{n \rightarrow \infty} F_n(x_n) \leq F(x).
\end{equation*}
\end{enumerate}
We say that $F$ is the $\Gamma$-limit of the sequence of functionals $\left\{F_n \right\}_{n \in \N}$ (with respect to the metric $d_X$).
\label{defGamma}
\end{definition}
\begin{remark}
In most situations one does not prove the limsup inequality for all $x \in X$ directly. Instead, one proves the inequality for all $x$ in a dense subset $X'$ of $X$  where it is somewhat easier to prove, and then deduce from this that the inequality holds for all $x \in X$.  To be more precise, suppose that the limsup inequality is true for every $x$ in a subset $X'$ of $X$ and the set $X'$ is such that for every $x \in X$ there exists a sequence $\left\{ x_k \right\}_{k \in \N}$ in $X'$  converging to $x$ and such that $F(x_k) \rightarrow F(x)$ as $k \rightarrow \infty$, then the limsup inequality is true for  every $x \in X$. It is enough to use a diagonal argument to deduce this claim. This property is not related to the randomness of the functionals in any way.

\label{DenseGamma}
\end{remark}

\begin{definition}
We say that  the sequence of nonnegative random functionals $\left\{ F_n\right\}_{n \in \mathbb{N}}$ satisfies the compactness property if with $\mathbb{P}$-probability one,  the following statement holds:
any sequence $\left\{x_n\right\}_{n\in \N}$ bounded in $X$ and for which
\begin{equation*}
\sup_{k \in \N} F_{n}(x_n) < +\infty,
\end{equation*} 
 is relatively compact in $X$.
\label{defCompac}
\end{definition}
\begin{remark}
The boundedness assumption of $\left\{ x_n \right\}_{n \in \N}$ in the previous definition is a necessary condition for relative compactness and so it is not restrictive.
\end{remark}

The notion of $\Gamma$-convergence is particularly useful when the functionals $\left\{ F_n \right\}_{n \in \N}$ satisfy the compactness property. This is because it guarantees that with $\mathbb{P}$-probability one,  minimizers (or approximate minimizers) of $F_n$  converge to minimizers of $F$ and it also guarantees convergence of the minimum energy of $F_n$ to the minimum energy of $F$ (this statement is made precise in the next proposition). This is the reason why $\Gamma$-convergence is said to be a variational type of convergence. The next proposition can be found in \cite{Braides, Dalmaso}. We present its proof  for completeness and for the benefit of the reader. We also want to highlight the way this type of convergence works as ultimately this is one of the essential tools used to prove the main theorems of this paper.

\begin{proposition} \label{comp_gen}
Let $F_n : X \times \Omega \rightarrow [0, \infty]$ be a sequence of random nonnegative functionals which are not identically equal to $+\infty$, satisfying the compactness property and $\Gamma$-converging to the deterministic functional $F: X \rightarrow [0, \infty]$ which is not identically equal to $+\infty$. If it is true that with $\mathbb{P}$-probability one, there is a bounded sequence $\left\{x_n  \right\}_{n \in \N}$ satisfying 
\begin{equation}
\lim_{n \rightarrow \infty} \left(F_n(x_n) - \inf_{x \in X} F_n(x)  \right)= 0 
\label{AlmostMin}
\end{equation}
Then, with $\mathbb{P}$-probability one the following statements hold
\begin{equation}
\lim_{n \rightarrow \infty} \inf_{x \in X}F_n(x) = \min_{x \in X} F(x),
\label{ConvMinEnergy}
\end{equation}
furthermore, every bounded sequence $\left\{ x_n \right\}_{n \in \N}$  in $X$ satisfying \eqref{AlmostMin} is relatively compact and each of its cluster points is a minimizer of $F$. In particular, if $F$ has a unique minimizer, then a bounded sequence $\left\{ x_n \right\}_{n \in \N}$ satisfying \eqref{AlmostMin} converges to the unique minimizer of $F$.
\label{VariationalGamma}
\end{proposition}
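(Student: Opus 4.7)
The plan is to work on the probability-one event on which both $\Gamma$-convergence conditions hold and the near-minimizing sequence $\{x_n\}$ exists, so that the argument reduces to a deterministic one. Set $m := \inf_{x \in X} F(x)$; since $F$ is not identically $+\infty$ and $F \ge 0$, we have $0 \le m < \infty$. The first step is to bound $F_n(x_n)$ asymptotically by $m$: for any test point $y \in X$ with $F(y) < \infty$ the limsup inequality supplies a recovery sequence $y_n \to y$ with $\limsup_n F_n(y_n) \le F(y)$. Combining this with $F_n(x_n) \le F_n(y_n) + (F_n(x_n) - \inf_X F_n)$ and \eqref{AlmostMin} yields $\limsup_n F_n(x_n) \le F(y)$, and taking the infimum over admissible $y$ gives $\limsup_n F_n(x_n) \le m$. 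Since $F_n(x_n) - \inf_X F_n \to 0$, the same bound applies to $\inf_X F_n$.

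Next I would extract cluster points. Because $\{x_n\}$ is bounded and $\sup_n F_n(x_n) < \infty$ (by the previous step), the compactness property gives relative compactness of $\{x_n\}$. Let $x^*$ be any cluster point, with $x_{n_k} \to x^*$. The liminf inequality then gives
\begin{equation*}
F(x^*) \le \liminf_{k \to \infty} F_{n_k}(x_{n_k}) \le \limsup_{n \to \infty} F_n(x_n) \le m,
\end{equation*}
so $F(x^*) = m$ and $x^*$ is a minimizer of $F$. In particular the infimum of $F$ is attained, so we may write $\min_{x \in X} F(x) = m$.

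It remains to promote $\limsup_n \inf_X F_n \le m$ to a limit. For the matching lower bound I would take any subsequence along which $\inf_X F_{n_k}$ converges to $\liminf_n \inf_X F_n$; the corresponding sequence $\{x_{n_k}\}$ is bounded with $F_{n_k}(x_{n_k})$ bounded, so by compactness it has a further subsequence (still denoted $x_{n_k}$) converging to some $x^{**} \in X$. The liminf inequality gives $F(x^{**}) \le \liminf_k F_{n_k}(x_{n_k}) = \liminf_n \inf_X F_n$, while $F(x^{**}) \ge m$. Hence $\liminf_n \inf_X F_n \ge m$, which combined with the upper bound proves \eqref{ConvMinEnergy}. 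The uniqueness statement then follows automatically: if $F$ has a unique minimizer $x_\infty$, then every subsequence of a bounded near-minimizing sequence has a sub-subsequence converging to $x_\infty$, and a standard Urysohn-type argument yields convergence of the whole sequence.

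The only conceptually delicate point is that the liminf inequality is only guaranteed along sequences, so I must make sure that compactness can be invoked before passing to the limit; this is exactly why the hypothesis that $\{x_n\}$ is bounded matters, and why one first has to establish $\sup_n F_n(x_n) < \infty$ via the recovery sequence before the compactness property can be applied. Everything else is bookkeeping: intersecting the two almost-sure events on which $\Gamma$-convergence and the existence of the near-minimizing sequence hold, and noting that all conclusions are deterministic statements on that intersection.
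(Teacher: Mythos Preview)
Your argument is correct and follows essentially the same route as the paper: reduce to the almost-sure event, use a recovery sequence to obtain $\limsup_n F_n(x_n)\le \inf_X F$, invoke the compactness property to extract cluster points, and apply the liminf inequality to show they minimize $F$ and to close the convergence of infima. Your treatment of the lower bound for $\liminf_n \inf_X F_n$ via a subsequence--sub-subsequence argument is in fact slightly more explicit than the paper's, which simply says the limit follows from the two displayed inequalities; both proofs share the same harmless imprecision of passing from $\limsup_n F_n(x_n)<\infty$ to the hypothesis $\sup_n F_n(x_n)<\infty$ needed for the compactness property (easily fixed by working with the tail).
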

\begin{proof}
Consider $\Omega'$ a set with $\mathbb{P}$-probability one for which all the statements in the definition of $\Gamma$-convergence together with the statement of the compactness property hold. We also assume that for every $\omega \in \Omega'$, there exists a bounded sequence $\left\{ x_n \right\}_{n \in \N}$ satisfying \eqref{AlmostMin}. We fix such $\omega \in \Omega'$ and in particular we can assume that $F_n$ is deterministic for every $n \in \N$.

Let $\left\{x_n \right\}_{n \in \N}$ be a sequence as the one described above. Let $\tilde{x} \in X$ be arbitrary. By the limsup inequality we know that there exists a sequence $\left\{\tilde{x}_n  \right\}_{n \in \N}$  with $\tilde{x}_n \to \tilde{x}$ and such that
$$  \limsup_{n \rightarrow \infty} F_n(\tilde{x}_n )  \leq F(\tilde{x}).$$
By \ref{AlmostMin} we deduce that
\begin{equation}
  \limsup_{n \rightarrow \infty}  F(x_n) = \limsup_{n \rightarrow \infty} \inf_{x \in X} F_n(x) \leq  \limsup_{n \rightarrow \infty} F_n(\tilde{x}_n )  \leq F(\tilde{x}),
  \label{AuxGamma1}  
  \end{equation}
and since $\tilde{x}$ was arbitrary we conclude that
\begin{equation}
 \limsup_{n \rightarrow \infty}  F_n(x_n) \leq \inf_{x \in X} F(x) . 
 \label{AuxGamma2}
\end{equation}
The fact that $F$ is not identically equal to $+\infty$ implies that the term on the right hand side of the previous expression is finite and thus $ \limsup_{n \rightarrow \infty}  F_n(x_n) < +\infty$. Since the sequence $\left\{x_n \right\}_{n \in \N}$ was assumed bounded, we conclude from the compactness property for the sequence of functionals $\left\{F_n \right\}_{n \in \N}$ that $\left\{x_n \right\}_{n \in \N}$ is relatively compact. 

Now let $x^*$ be any accumulation point of the sequence $\left\{ x_n \right\}_{n \in \N}$ ( we know there exists at least one due to compactness), we want to show that $x^*$ is a minimizer of $F$. Working along subsequences, we can assume without the loss of generality that $x_{n} \to x^*$. By the liminf inequality, we deduce that
\begin{equation}
 \inf_{x \in X} F(x) \leq F(x^*) \leq \liminf_{n \rightarrow \infty}F(x_{n}).
 \label{AuxGamma3} 
\end{equation} 
The previous inequality and \eqref{AuxGamma1} imply that 
$$  F(x^*) \leq F(\tilde{x}),  $$ 
where $\tilde{x}$ is arbitrary. Thus, $x^*$ is a minimizer of $F$ and in particular $\inf_{x \in X} F(x) = \min_{x \in X} F(x)$. 
Finally, to establish \eqref{ConvMinEnergy} note that this follows from \eqref{AuxGamma2} and \eqref{AuxGamma3}.
\end{proof}

\subsection{$\Gamma$-convergence of graph total variation} \label{sec:GGTV}

Of fundamental importance in obtaining our results is the $\Gamma$-convergence of the \textit{graph total variation} proved in \cite{GTS}. Let us describe this functional and also let us state the results we use. Given a point cloud $\X_n:= \left\{\x_1, \dots, \x_n \right\} \subseteq D$ where $D$ is a domain in $\R^d$, we denote by $GTV_{n , \veps_n}: L^1(\nu_n) \rightarrow [0,\infty]$ the functional:
\begin{equation}
GTV_{n , \veps_n}(u_n):= \frac{1}{n^2\veps_n^{d+1} } \sum_{i,j=1}^{n}\eta\left( \frac{|\x_i- \x_j|}{\veps_n}\right)  | u_n(\x_i)- u_n (\x_j)|, 
\label{def:GraphTV}
\end{equation}
where $\eta$ is a Kernel satisfying conditions \textbf{(K1)-(K3)}. The connection of the functional $GTV_{n, \veps_n}$ to problem \eqref{P1} is the following: if  $\Y_n$  is a subset of $\X_n$, then the graph total variation of the indicator function $\1_{\Y_n}$ is equal to a rescaled version of the graph cut of $\Y_n$, that is, 
$$   GTV_{n, \veps_n}(\1_{\Y_n}) =\frac{2 \cut(\Y_n,\Y_n^c)}{n^2 \veps_n^{d+1}} .$$
Now we present the results obtained in \cite{GTS}.

\begin{theorem}[Theorem 1.1 in \cite{GTS} ]
Let domain $D$, measure $\nu$, kernel $\eta$, sequence $\{\veps_n\}_{n \in \N}$,
sample points $\{\x_i\}_{i \in N}$, and graph $\G_n$
 satisfy the assumptions of Theorem \ref{main}. Then, $ GTV_{n,\veps_n}$, defined by \eqref{def:GraphTV}, $\Gamma$-converge to $\sigma_\eta TV$ as $n \rightarrow \infty$ in the $TL^1$ sense, where $\sigma_\eta$ is the surface tension associated to the kernel $\eta$  (see condition (\textbf{K3}))
 and $TV$ is the weighted (by $\rho^2$) total variation functional defined in \eqref{TVWeighted}.
 \label{DiscreteGamma}
\end{theorem}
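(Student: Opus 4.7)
The plan is to prove the two one-sided inequalities that define $\Gamma$-convergence in the $TL^1$-topology, with the transportation maps from Proposition \ref{thm:InifinityTransportEstimate} serving as the bridge that turns a discrete, random problem into an analytic one on $D$. Throughout, fix a realization for which such a sequence $T_n$ exists with $\|Id-T_n\|_\infty \ll \veps_n$ (this is exactly what the scaling assumption on $\veps_n$ is designed to guarantee). By Proposition \ref{EquivalenceTLp}, convergence $(\nu_n,f_n)\to(\nu,f)$ in $TL^1$ is equivalent to weak convergence of the measures together with $f_n\circ T_n \to f$ in $L^1(\nu)$.

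For the liminf inequality, I would use the change of variables \eqref{ChangeOfVariables} to rewrite
\[
GTV_{n,\veps_n}(f_n) = \frac{1}{\veps_n^{d+1}}\iint_{D\times D}\eta\!\left(\frac{|T_n(x)-T_n(y)|}{\veps_n}\right)|f_n(T_n(x))-f_n(T_n(y))|\,\rd\nu(x)\,\rd\nu(y).
\]
Set $\tilde f_n:=f_n\circ T_n$ and, given a small $\delta>0$, choose a slightly smaller kernel $\eta_\delta \le \eta$ which is still continuous and compactly supported, with the property that $\eta_\delta(|z|/(1+\delta))$ is dominated by $\eta(|(z+h)/\veps_n|)\veps_n^{d}$ for all $|h|\le 2\|Id-T_n\|_\infty$; this is possible because $\|Id-T_n\|_\infty/\veps_n\to 0$ and $\eta$ is continuous at $0$ and non-increasing. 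This replaces the transported kernel by an honest convolution kernel and yields
\[
GTV_{n,\veps_n}(f_n) \ge \frac{1}{\tilde\veps_n^{d+1}}\iint_{D\times D}\eta_\delta\!\left(\frac{|x-y|}{\tilde\veps_n}\right)|\tilde f_n(x)-\tilde f_n(y)|\,\rd\nu(x)\,\rd\nu(y) + o(1),
\]
with $\tilde\veps_n=(1+\delta)\veps_n$. Since $\tilde f_n\to f$ in $L^1(\nu)$ and $\tilde\veps_n\to 0$, I would invoke the non-local-to-local asymptotics of Bourgain--Brezis--Mironescu / Ponce type (applied to the weighted measure $\rho\,\rd x$) to conclude that the right-hand side converges to $\sigma_{\eta_\delta}\,TV(f)$ with the weight $\rho^2$ emerging from the product $\rho(x)\rho(y)$ in the double integral as $y\to x$. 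Letting $\delta\downarrow 0$ so that $\sigma_{\eta_\delta}\uparrow\sigma_\eta$ finishes this half.

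For the limsup inequality, by Remark \ref{DenseGamma} it suffices to build a recovery sequence on a class that is dense in $TL^1$ in the energy sense. I would take $f\in C^1(\bar D)$ first and define $f_n(\x_i):=f(\x_i)$, so that $f_n\circ T_n\to f$ uniformly. A direct expansion of $|f_n(\x_i)-f_n(\x_j)|$ to first order in the difference, using the smoothness of $f$, together with the law of large numbers applied to the double sum (handled again via the transportation maps to compare it with its $\nu\otimes\nu$ expectation), gives
\[
\lim_{n\to\infty} GTV_{n,\veps_n}(f_n) \;=\; \int_D |\nabla f(x)|\rho^2(x)\,\rd x \!\cdot\!\sigma_\eta \;=\; \sigma_\eta TV(f).
\]
For general $f\in L^1(\nu)$ with $TV(f)<\infty$ I would approximate by smooth functions (using convolution inside $D$, together with the co-area formula to ensure that $TV$ is continuous along the approximation, or by using \eqref{TVlsc} with an equality argument) and apply the diagonal procedure from Remark \ref{DenseGamma}.

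The main obstacle is the liminf step: one must control the non-trivial distortion of the kernel under the random map $T_n$. The argument works precisely because the scaling hypothesis makes the transportation displacement an $o(\veps_n)$ quantity in sup-norm; if one only had the $L^1$-rate one could not dominate $\eta(|T_n(x)-T_n(y)|/\veps_n)$ by $\eta_\delta(|x-y|/(\tilde\veps_n))$ in a pointwise fashion. A secondary difficulty is that $f_n$ is only defined on $\X_n$, so every analytic manipulation (mollification, co-area) must be pushed through the transportation map and paired with a weak convergence argument that handles behavior near $\partial D$; the Lipschitz regularity of $\partial D$ and the lower bound $\rho\ge\lambda$ are what allow the boundary terms to be absorbed into the $o(1)$.
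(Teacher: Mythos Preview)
The paper does not prove this statement at all: Theorem \ref{DiscreteGamma} is quoted verbatim as ``Theorem 1.1 in \cite{GTS}'' and used as a black box, so there is no in-paper proof to compare your proposal against.

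That said, your outline is essentially the strategy carried out in \cite{GTS}. The central mechanism---pulling the graph functional back to a non-local double integral on $D\times D$ via the transportation maps $T_n$, then using $\|Id-T_n\|_\infty=o(\veps_n)$ together with monotonicity of $\eta$ to sandwich the transported kernel between honest convolution kernels at slightly perturbed scales, and finally invoking non-local-to-local compactness results of Bourgain--Brezis--Mironescu/Ponce type---is exactly the backbone of that paper. Two places where your sketch is a bit loose: (i) the kernel inequality you wrote contains a spurious $\veps_n^d$ factor; the correct comparison is simply $\eta(|T_n(x)-T_n(y)|/\veps_n)\ge \eta\bigl((|x-y|+2\|Id-T_n\|_\infty)/\veps_n\bigr)$ by monotonicity, after which one dilates; (ii) the liminf for the non-local functional must hold along a \emph{varying} sequence $\tilde f_n\to f$ in $L^1(\nu)$, which is genuinely stronger than the pointwise BBM statement for a fixed $f$ and requires the compactness/lower-semicontinuity argument developed in \cite{GTS}. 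For the limsup, your density step needs strict BV approximation on $D$ (so that $TV(f_k;D)\to TV(f;D)$, not merely $\liminf\ge$), and this is where the Lipschitz-boundary hypothesis is actually used.
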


Moreover, we have the following compactness result.
\begin{theorem}[Theorem 1.2 in \cite{GTS}]Under the same hypothesis of Theorem 1.1 in \cite{GTS}, the sequence of functionals $\left\{GTV_{n, \veps_n} \right\}_{n \in \N}$ satisfies the compactness property. Namely, if a sequence $\left\{u_n \right\}_{n \in \N}$  with $u_n \in L^1(\nu_n)$ satisfies 
\begin{equation*}
\sup_{n \in \N}  \|u_{n}\|_{L^1(\nu_{n})} < \infty,
\end{equation*}
and
\begin{equation*}
\sup_{n \in \N}  GTV_{n , \veps_n}(u_{n}) < \infty,
\end{equation*}
then $\{ u_n \}_{n \in N}$ is $TL^1$-relatively compact. 
\label{compact}
\end{theorem}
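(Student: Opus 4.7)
The plan is to reduce discrete compactness to standard $L^1(\nu)$ compactness by transporting each $u_n$ to the reference measure $\nu$, and then invoke a nonlocal-to-local compactness principle. \textbf{Step 1 (transport).} By Proposition \ref{thm:InifinityTransportEstimate}, almost surely there exist transport maps $T_n$ with ${T_n}_\sharp\nu=\nu_n$ and $\delta_n:=\|\id-T_n\|_\infty\le C(\log n)^{p_d}n^{-1/d}$. The scaling hypothesis on $\veps_n$ forces $\delta_n/\veps_n\to 0$. Setting $\tilde u_n:=u_n\circ T_n$, the change-of-variables formula \eqref{ChangeOfVariables} gives $\|\tilde u_n\|_{L^1(\nu)}=\|u_n\|_{L^1(\nu_n)}$, uniformly bounded by hypothesis. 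By Proposition \ref{EquivalenceTLp}, $\{(\nu_n,u_n)\}$ is $TL^1$-relatively compact if and only if $\{\tilde u_n\}$ is $L^1(\nu)$-relatively compact, so it suffices to prove the latter.

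\textbf{Step 2 (comparison with a continuum nonlocal TV).} Define
\[ E_\veps(v):=\frac{1}{\veps^{d+1}}\int_D\!\int_D\eta\!\left(\frac{|x-y|}{\veps}\right)|v(x)-v(y)|\,\rho(x)\rho(y)\,\rd x\,\rd y. \]
Rewriting $GTV_{n,\veps_n}(u_n)$ as an integral against $(T_n\otimes T_n)_\sharp(\nu\otimes\nu)$ and using the bound $\bigl||T_n(x)-T_n(y)|-|x-y|\bigr|\le 2\delta_n$ together with the monotonicity \textbf{(K2)} of $\eta$ and the positive lower bound $\rho\ge\lambda$, I would derive
\[ E_{\tilde\veps_n}(\tilde u_n)\le C\,GTV_{n,\veps_n}(u_n) \]
for some constant $C>0$ and some scale $\tilde\veps_n=\veps_n(1-o(1))\to 0$. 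For a compactly supported kernel one may simply take $\tilde\veps_n=\veps_n-2\delta_n$; for kernels with super-polynomial decay an AM-GM-type argument on the exponent yields a similar comparison. Either way, $\sup_n E_{\tilde\veps_n}(\tilde u_n)<\infty$.

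\textbf{Step 3 (nonlocal-to-local compactness).} Apply the classical nonlocal compactness principle of Bourgain-Brezis-Mironescu/Ponce type, which is central also to \cite{GTS}: a sequence $\{v_n\}\subset L^1(\nu)$ with $\sup_n\|v_n\|_{L^1(\nu)}<\infty$ and $\sup_n E_{\tilde\veps_n}(v_n)<\infty$ for some $\tilde\veps_n\to 0$ is relatively compact in $L^1(\nu)$. This is proved via the Riesz-Fr\'echet-Kolmogorov criterion by showing uniform smallness of $\|\tau_h v_n-v_n\|_{L^1}$ as $|h|\to 0$; the bound on $E_{\tilde\veps_n}$ yields control of one-dimensional slice variations, and the Lipschitz regularity of $\partial D$ together with the two-sided bound on $\rho$ handles boundary effects via reflection or extension.

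The main obstacle is Step 2. The scaling condition $\veps_n\gg(\log n)^{p_d}n^{-1/d}$ is crucial and, per Remark \ref{rem:intro}, essentially sharp: it is precisely what allows the $L^\infty$ transport perturbation $\delta_n$ to be absorbed into an asymptotically negligible modification of the kernel radius, so that the continuum nonlocal functional we control still operates at scale $\asymp\veps_n\to 0$. Once this kernel comparison is in place, Steps 1 and 3 rely on standard tools, and randomness enters the argument only through the existence of the $\infty$-optimal transport maps of Proposition \ref{thm:InifinityTransportEstimate}.
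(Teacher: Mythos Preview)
The paper does not contain a proof of this statement: it is quoted verbatim as Theorem 1.2 of \cite{GTS} and used as a black box (see Subsection \ref{sec:GGTV}), so there is no ``paper's own proof'' to compare against. That said, your outline is essentially the strategy carried out in \cite{GTS}: push $u_n$ to $L^1(\nu)$ via the $\infty$-transport maps of Proposition \ref{thm:InifinityTransportEstimate}, absorb the transport error $\delta_n=\|\id-T_n\|_\infty$ into a perturbed radius $\tilde\veps_n$ using monotonicity of $\eta$, and then invoke a nonlocal compactness result of Ponce/Bourgain--Brezis--Mironescu type for the continuum functional $E_{\tilde\veps_n}$.

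One small point on Step 2: your treatment of non-compactly-supported kernels via an ``AM-GM-type argument on the exponent'' is vague and unnecessary. Assumption \textbf{(K1)} already gives $\eta\ge c_0\,\one_{[0,r_0]}$ for some $c_0,r_0>0$, so you can simply bound $GTV_{n,\veps_n}$ from below by the graph total variation associated to this compactly supported minorant and run your clean compact-support argument. This is how the reduction is typically done, and it avoids any kernel-specific decay estimates.
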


Finally, Corollary 1.3 in \cite{GTS} allows us to restrict the functionals $GTV_{n , \veps_n}$ and $TV$ to characteristic functions of sets and still obtain $\Gamma$-convergence. 

\begin{theorem}[Corollary 1.3 in \cite{GTS}]
Under the assumptions of Theorem 1.1 in \cite{GTS},  with probability one the following statement holds: for every $A \subseteq D$ measurable, there exists a sequence of sets $\left\{ \Y_n \right\}_{n \in \N}$ with $\Y_n \subseteq \X_n$ such that,
$$  \one_{\Y_n } \converges{TL^1} \one_{A}  $$
and
$$  \limsup_{n \rightarrow \infty} GTV_{n, \veps_n}(\one_{\Y_n})  \leq \sigma_\eta TV(\one_{A}). $$
\label{CorollaryDiscreteGamma}
\end{theorem}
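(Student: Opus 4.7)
The plan is to extract a set-valued recovery sequence from the general $L^1$ recovery sequence provided by Theorem \ref{DiscreteGamma} via thresholding together with a discrete coarea identity. Applying the limsup inequality in Theorem \ref{DiscreteGamma} to $u = \one_A$ produces $u_n \in L^1(\nu_n)$ with $u_n \converges{TL^1} \one_A$ and $\limsup_{n} GTV_{n,\veps_n}(u_n) \leq \sigma_\eta TV(\one_A)$. After replacing $u_n$ by $(u_n \vee 0) \wedge 1$, which is $1$-Lipschitz in the argument and hence cannot increase $GTV_{n,\veps_n}$ while preserving the $TL^1$ limit, I may assume $u_n$ takes values in $[0,1]$.

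Next, inserting the layer-cake identity $|a-b| = \int_0^1 |\one_{\{a>t\}} - \one_{\{b>t\}}|\,dt$, valid for $a,b \in [0,1]$, inside \eqref{def:GraphTV} and interchanging summation with integration yields the discrete coarea identity
\begin{equation*}
GTV_{n,\veps_n}(u_n) \;=\; \int_0^1 GTV_{n,\veps_n}\bigl(\one_{\{u_n > t\}}\bigr)\, dt.
\end{equation*}
For any $\delta \in (0, 1/2)$, nonnegativity of the integrand combined with the elementary comparison of infimum against average on $[\delta, 1-\delta]$ yields a threshold $t_n(\delta) \in [\delta, 1-\delta]$ with
$GTV_{n,\veps_n}(\one_{\{u_n > t_n(\delta)\}}) \leq (1-2\delta)^{-1}\, GTV_{n,\veps_n}(u_n)$.
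Set $\Y_n^\delta := \{\x_i : u_n(\x_i) > t_n(\delta)\} \subseteq \X_n$. To verify $\one_{\Y_n^\delta} \converges{TL^1} \one_A$, fix stagnating transport maps $T_n$ guaranteed by Proposition \ref{thm:InifinityTransportEstimate}; Proposition \ref{EquivalenceTLp} gives $u_n \circ T_n \to \one_A$ in $L^1(\nu)$, and a Chebyshev estimate then bounds the $\nu$-measure of the symmetric difference $\{u_n \circ T_n > t_n(\delta)\} \triangle A$ by $\delta^{-1}\|u_n \circ T_n - \one_A\|_{L^1(\nu)} \to 0$, uniformly over the specific choice of $t_n(\delta) \in [\delta, 1-\delta]$; a second application of Proposition \ref{EquivalenceTLp} delivers the desired $TL^1$-convergence.

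Thus for each $\delta > 0$ I obtain a sequence $(\Y_n^\delta)_n$ with $\one_{\Y_n^\delta} \converges{TL^1} \one_A$ and $\limsup_n GTV_{n,\veps_n}(\one_{\Y_n^\delta}) \leq (1-2\delta)^{-1} \sigma_\eta TV(\one_A)$. A standard diagonal argument along $\delta_k \downarrow 0$, choosing indices $N_1 < N_2 < \cdots$ so that for $n \geq N_k$ both $d_{TL^1}\bigl((\nu_n, \one_{\Y_n^{\delta_k}}), (\nu, \one_A)\bigr) < 1/k$ and $GTV_{n,\veps_n}(\one_{\Y_n^{\delta_k}}) < (1-2\delta_k)^{-1} \sigma_\eta TV(\one_A) + 1/k$, and setting $\Y_n := \Y_n^{\delta_k}$ for $N_k \leq n < N_{k+1}$, produces a single sequence with $\one_{\Y_n} \converges{TL^1} \one_A$ and $\limsup_n GTV_{n,\veps_n}(\one_{\Y_n}) \leq \sigma_\eta TV(\one_A)$. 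The main subtlety, and the reason the factor $(1-2\delta)^{-1}$ appears, is that the thresholds must be kept strictly inside $[\delta, 1-\delta]$ to ensure the level sets continue to converge in $TL^1$; once this is arranged the discrete coarea identity and truncation bound are purely combinatorial.
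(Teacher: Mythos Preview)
Your argument is correct, and its ingredients---truncation to $[0,1]$, the discrete layer-cake/coarea identity for $GTV_{n,\veps_n}$, a Chebyshev bound to control the $TL^1$ convergence of thresholded level sets, and a diagonal extraction along $\delta_k \downarrow 0$---are assembled cleanly. Note, however, that the paper under review does not supply its own proof of this statement: it is quoted verbatim as Corollary~1.3 of \cite{GTS} and used as a black box. So there is no in-paper proof to compare against.

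That said, your route is a natural and self-contained derivation from Theorem~\ref{DiscreteGamma} alone, which is pedagogically valuable: it shows that the set-valued limsup inequality is not an independent input but a formal consequence of the function-valued $\Gamma$-limsup plus the one-homogeneity and coarea structure of $GTV_{n,\veps_n}$. One minor point worth making explicit for a polished write-up: the case $TV(\one_A)=+\infty$ is trivial (any sequence with $\one_{\Y_n}\converges{TL^1}\one_A$ works, e.g.\ $\Y_n = T_n^{-1}(A)\cap \X_n$), and your diagonal step as written implicitly assumes $TV(\one_A)<\infty$; it would be cleaner to dispose of the infinite case in one sentence at the outset.
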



\section{Consistency of two-way balanced cuts} \label{sec:proof1}

Here we prove  Theorem \ref{main}. 

\subsection{Outline of the proof}
\label{OutlineTwo-Class}

Before proving that $\{\Y_n^*,{\Y_n^*}^c\}$ converges to $\{A^*,{A^*}^c\}$ in the sense of Definition \ref{def:TL1Partitions}, we first pause to outline the main ideas. Rather than directly working with the sets $\Y_n^*$ and ${\Y_n^*}^c$, it proved easier to work with their indicator functions $  \1_{\Y_n^*}(x)$ and $\1_{{\Y_n^*}^c}(x)$ instead.  We first show, by an explicit construction in Subsection \ref{subsec:ApproxFuncTwo-class}, that
\begin{equation} \label{l1-min}
u_n^*:=  \tilde{\1}_{\Y_n^*}(x),\;\;  u_n^{**}:= \tilde{\1}_{{\Y_n^*}^c}(x)  \qquad \text{ minimize } \quad   E_n(u_n) \;\; \text{ over all } \;\; u_n \in L^1(\nu_n),
\end{equation}
where $E_n$ denotes a suitable objective function defined on $L^1(\nu_n)$, the set of functions defined over $\X_n$. Each function $\tilde{\1}_{\Y_n}(x) := \alpha_n \1_{\Y_n}(x)$ is simply a rescaled version of the original indicator function (for some explicit coefficient $\alpha_n$ that we will define later). Similarly, in Subsection \ref{sec:cont_par2} we showed that the normalized indicator functions
\begin{equation} \label{l1-min2}
u^*:=\tilde{\1}_{A^*}(x),\;\; u^{**}:= \tilde{\1}_{{A^*}^c}(x)  \quad \text{ minimize } \quad   E(u) \;\; \text{ over all }\;\; u \in L^1(\nu),
\end{equation}
where $E$ is defined by \eqref{PfinalCont} . 

In Subsection \ref{subsec:gamma} we show that the approximating functionals $E_n$ $\Gamma$-converge to $\sigma_\eta E$ in the $TL^1$-sense. In Lemma \ref{com}  we establish that  $u_n^*$ and  $u_n^{**}$ exhibit the required compactness.  Thus, they must converge toward the normalized indicator functions $\tilde{\1}_{A^*}(x)$ and  $\tilde{\1}_{{A^*}^c}(x)$ up to relabeling  (see Proposition \ref{comp_gen}). 
If $\{A^*, A^{*c}\}$ is the unique minimizer, the convergence of the whole sequence follows. 
The convergence of the partition $\{\Y_n^*,{\Y_n^*}^c\}$ toward the partition $\{A^*,{A^*}^c\}$ in the sense of Definition \ref{def:TL1Partitions} is a direct consequence.
The convergence \eqref{ConvergenceCheegerConstants} follows from \eqref{ConvMinEnergy} in Proposition \ref{comp_gen}. 

\subsection{Functional description of discrete cuts}
\label{subsec:ApproxFuncTwo-class}

We introduce functionals that  describe the discrete ratio and Cheeger cuts in terms of functions on $\X_n$, rather than in terms of subsets of $\X_n$. This mirrors the description of continuum partitions provided in 
Subsection \ref{sec:cont_par2}. 
For $u_n \in L^1(\nu_n)$, we start by defining
\begin{gather}
B^n_{ {\rm R}} (u_n):= \frac{1}{n} \sum_{i=1}^n |u_n(\x_i) - \mean_n(u_n) | \quad \text{and} \quad B^n_{ {\rm C}}(u_n):=  \min_{c \in \real} \frac{1}{n} \sum_{i=1}^n  |u_n(\x_i) - c |.  \label{balancen}
\end{gather}
Here $\mean_n(u_n) = \frac{1}{n} \sum_{i=1}^n u_n(\x_i)$. A straightforward computation shows that for $\Y_n \subseteq \X_n$ 
\begin{gather} \label{froun}
B^n_{ {\rm R} }(\1_{\Y_n}) = \bal_{ {\rm R}}(\Y_n,\Y^c_n), \;\;  B^n_{ {\rm C} }(\1_{\Y_n}) =  \bal_{ {\rm C}}(\Y_n,\Y^c_n).
\end{gather}
From here on we write $B_n$ to represent either $B^n_{ {\rm R} }$ or $B^n_{ {\rm C} }$ depending on the context.

Instead of defining $E_{n}(u_n)$ simply as the ratio $GTV_{n, \veps_n}(u_n)/B_n(u_n),$ which is the direct analogue of \eqref{P1}, it proves easier to work with suitably normalized indicator functions.
Given $\Y_n \subseteq \X_n$ with $B_n(\one_{\Y_n}) \neq 0$, the  \emph{normalized indicator function} $\tilde{\1}_{\Y_n}(x)$ is defined by
$$
\tilde{\1}_{\Y_n}(x) =  \one_{\Y_n}(x)/B^n_{ {\rm C} }(\one_{\Y_n})  \text{\quad or \quad } \tilde{\1}_{\Y_n}(x) =  \one_{\Y_n}(x)/B^n_{ {\rm R} }(\one_{\Y_n}).
$$
Note that $B_n(\tilde{\1}_A)=1$. We also restrict the minimization of $E_{n}(u)$ to the set 
\begin{equation}
\mathrm{Ind}_n(D) :=  \{ u_n \in L^{1}(\nu_n):  u_n=\tilde{\1}_{\Y_n} 
   \text{ for some }  \Y_n \subseteq \X_n \;\text{ with } \; B_n(\one_{\Y_n}) \neq 0   \}.  \label{ind_n}
\end{equation}

Now, suppose that $u_n \in \mathrm{Ind}_n(D)$, i.e.  that $u_n=\tilde{\1}_{\Y_n}$ for some set $\Y_n$ with $B_n(\1_{\Y_n})>0$. Using \eqref{frou} together with the fact that $GTV_{n,\veps_n}$ (defined in \eqref{def:GraphTV}) is one-homogeneous implies, as in \eqref{eqn:TvNormalizedCont}
\begin{equation}
GTV_{n, \veps_n}(u_n) =\frac{2}{n^2 \veps_n^{d+1} } \frac{\cut(\Y_n,\Y^c_n)}{\bal(\Y_n,\Y^c_n) } .
\label{eqn:TvNormalizedDiscrete}
\end{equation}
Thus, minimizing $GTV_{n, \veps_n}$ over all $u_n \in \mathrm{Ind}_{n}(D)$ is equivalent to the balanced graph-cut problem \eqref{P1} on the graph $\G_n = (\X_n,W_n)$ constructed from the first $n$ data points. We have therefore arrived at our  destination, i.e. a proper reformulation of \eqref{P1} defined over functions $u_n \in L^1(\nu_n)$ instead of subsets of $\X_n$:
\begin{equation}  \label{Pfinal}
\text{Minimize \;\;\;  }  E_n(u_n) := \begin{cases} 
GTV_{n, \veps_n}(u_n) & \text{ if } u_n  \in  \mathrm{Ind}_{n}(D) \\
+ \infty & \text{ otherwise.}
\end{cases} \\
\end{equation}

\subsection{$\Gamma$-Convergence}
\label{subsec:gamma}

\begin{proposition}{\rm ($\Gamma$-Convergence)}\label{prop:gamma}
Let domain $D$, measure $\nu$, kernel $\eta$, sequence $\{\veps_n\}_{n \in \N}$,
sample points $\{\x_i\}_{i \in N}$, and graph $\G_n$
 satisfy the assumptions of Theorem \ref{main}.
Let $E_n$ be as defined in \eqref{Pfinal} and $E$ as in \eqref{PfinalCont}.
Then 
\[ E_n \overset{\Gamma}{\longrightarrow} \sigma_n E \quad \te{ with respect to } TL^1 \te{ metric as }  n \to \infty \]
where $\sigma_\eta$ is the surface tension defined in assumption (\textbf{K3}). That is
\begin{enumerate}
\item For {any} $u \in L^1(\nu)$ and {  any} sequence $\left\{ u_{n} \right\}_{n \in \N}$ with $u_n \in L^1(\nu_n)$ that converges to $u$ in $TL^1$, 
\begin{equation}\label{eq:liminf}
\sigma_\eta E(u) \leq \liminf_{n \to \infty} E_{n}(u_n).
\end{equation} 
\item For {  any} $u \in  L^1(\nu)$ there exists {  at least one} sequence $\left\{ u_{n} \right\}_{n \in \N}$ with $u_n \in L^1(\nu_n)$ which converges to $u$ in $TL^1$ and also satisfies
\begin{equation}\label{eq:limsup}
\limsup_{n \to \infty} E_{n}(u_n) \leq \sigma_\eta E(u).
\end{equation} 
\end{enumerate}

\end{proposition}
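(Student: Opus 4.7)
The plan is to deduce Proposition \ref{prop:gamma} from the $\Gamma$-convergence of the (unnormalized) graph total variation (Theorem \ref{DiscreteGamma}) together with the explicit recovery sequence provided by Theorem \ref{CorollaryDiscreteGamma}. The additional ingredient one needs to bridge these results with the indicator-constraint formulations $E_n$ and $E$ is a discrete-to-continuum analogue of Lemma \ref{Bcont}(i): namely, if $u_n \to u$ in $TL^1$ with $\sup_n \|u_n\|_{L^1(\nu_n)} < \infty$, then $B_n(u_n) \to B(u)$. This balance continuity will be the workhorse for both inequalities.

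To prove the continuity claim, I would invoke Proposition \ref{EquivalenceTLp}: fixing a stagnating sequence of transport maps with ${T_n}_\sharp \nu = \nu_n$ (guaranteed almost surely by Proposition \ref{thm:InifinityTransportEstimate}), the $TL^1$ convergence becomes $u_n \circ T_n \to u$ in $L^1(\nu)$. Using the change of variables formula \eqref{ChangeOfVariables}, the discrete mean $\mean_n(u_n) = \int u_n \circ T_n \, d\nu$ converges to $\mean_\rho(u)$, and the triangle inequality gives $B_n^R(u_n) \to B_R(u)$ verbatim as in Lemma \ref{Bcont}. For the Cheeger balance $B_n^C$, letting $c_n$ denote discrete medians, boundedness of $c_n$ (inherited from the $L^1$-bound on $u_n$) lets us extract a subsequential limit $c_n \to c^*$; the $\liminf$ bound $\int |u - c^*| \, d\nu \geq B_C(u)$ and the $\limsup$ bound via testing against a continuum median of $u$ then pinch $B_n^C(u_n) \to B_C(u)$.

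For the liminf inequality \eqref{eq:liminf} I may assume $\liminf_n E_n(u_n) < \infty$ and pass to a subsequence with $u_n \in \mathrm{Ind}_n(D)$, so that $u_n = \alpha_n \1_{\Y_n}$ with $B_n(u_n) = 1$. By the continuity just established, $B(u) = 1 > 0$, so $u$ is not identically zero in $L^1(\nu)$. Passing to a further subsequence on which $u_n \circ T_n \to u$ pointwise $\nu$-a.e., I observe that on the positive-measure set $\{u > 0\}$ one must have $u_n(T_n(x)) = \alpha_n$ eventually; hence $\alpha_n$ is bounded and, along yet another subsequence, $\alpha_n \to \alpha > 0$. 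The pointwise limit then lies in $\{0,\alpha\}$, forcing $u = \alpha \1_A$ for a measurable $A$, and the continuity of $B$ gives $\alpha = 1/B(\1_A)$, so $u = \tilde{\1}_A \in \mathrm{Ind}(D)$. An application of the $\Gamma$-liminf from Theorem \ref{DiscreteGamma} then yields $\sigma_\eta E(u) = \sigma_\eta TV(u) \leq \liminf_n GTV_{n,\veps_n}(u_n) = \liminf_n E_n(u_n)$.

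The limsup inequality \eqref{eq:limsup} is more direct: when $E(u) < \infty$, write $u = \tilde{\1}_A$ with $B(\1_A) > 0$, and apply Theorem \ref{CorollaryDiscreteGamma} to obtain $\Y_n \subseteq \X_n$ with $\one_{\Y_n} \to \one_A$ in $TL^1$ and $\limsup_n GTV_{n,\veps_n}(\one_{\Y_n}) \leq \sigma_\eta TV(\one_A)$. The balance continuity gives $B_n(\one_{\Y_n}) \to B(\one_A) > 0$, so the normalized candidates $u_n := \one_{\Y_n}/B_n(\one_{\Y_n})$ are well defined for large $n$, lie in $\mathrm{Ind}_n(D)$, and converge in $TL^1$ to $u$; one-homogeneity of $GTV_{n,\veps_n}$ then gives $\limsup_n E_n(u_n) \leq \sigma_\eta TV(\one_A)/B(\one_A) = \sigma_\eta E(u)$. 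The main obstacle in the whole argument is the liminf step, specifically ruling out $\alpha_n \to \infty$ and recovering the scaled-indicator structure of $u$; both rely crucially on the balance-continuity lemma, which is where the delicate interplay between the discrete $\nu_n$-average and the continuum $\nu$-average must be handled via the transport maps.
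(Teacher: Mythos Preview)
Your proposal is correct and follows essentially the same route as the paper: both rely on Theorem \ref{DiscreteGamma} and Theorem \ref{CorollaryDiscreteGamma} for the underlying graph total variation, together with the $TL^1$-continuity of the balance terms proved via stagnating transport maps (this is exactly the paper's Lemma \ref{continuity}). The only organizational difference is in the liminf: the paper argues by contrapositive, using the closedness of $\ind(D)$ in $L^1(\nu)$ (Lemma \ref{Bcont}(ii)) together with the observation that $u_n \in \ind_n(D)$ iff $u_n\circ T_n \in \ind(D)$, whereas you unpack this closedness directly via the pointwise argument on $\alpha_n$; your version is in fact more explicit about why the limit must be a scaled indicator, a step the paper's Lemma \ref{Bcont}(ii) asserts without detail.
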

We leverage Theorem \ref{DiscreteGamma} to prove this claim. We first need a preliminary lemma which allows us to handle the presence of the additional balance terms in \eqref{Pfinal} and \eqref{PfinalCont}.

\begin{lemma} \label{continuity} 
\begin{itemize}
\item[(i)] If $\left\{ u_n \right\}_{n \in \N}$ is a sequence with $u_n \in L^1(\nu_n)$ and $u_n \converges{TL^1} u$ for some $u \in L^1(\nu)$, then $B_n(u_n) \rightarrow B(u)$.
\item[(ii)] If $u_n = \tilde{\1}_{\Y_n}$, where $\Y_n \subset \X_n$, converges to $u = \tilde{\1}_{A}$ in the $TL^1$-sense, then $\one_{\Y_n} $ converges to $\one_{A}$ in the $TL^1$-sense.
\end{itemize}
\end{lemma}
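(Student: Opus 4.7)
The plan is to reduce both parts to statements in $L^1(\nu)$ via Proposition \ref{prop:TLchar}. Fix a stagnating transportation map $T_n$ with ${T_n}_\sharp \nu = \nu_n$ (which exists by Proposition \ref{thm:InifinityTransportEstimate}); the hypothesis $u_n \converges{TL^1} u$ is then equivalent to $u_n \circ T_n \converges{L^1(\nu)} u$, and the role of each part becomes to propagate this $L^1(\nu)$ convergence into the appropriate conclusion.

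For part (i), I would apply the change-of-variables formula \eqref{ChangeOfVariables} to rewrite the discrete mean as $\mean_n(u_n) = \mean_\rho(u_n \circ T_n)$, and analogously handle the infimum over $c \in \R$ defining the Cheeger-type balance. A one-line computation then shows that both $B^n_{\mathrm{R}}(u_n)$ and $B^n_{\mathrm{C}}(u_n)$ literally equal $B_{\mathrm{R}}(u_n \circ T_n)$ and $B_{\mathrm{C}}(u_n \circ T_n)$, respectively. Continuity of $B$ on $L^1(\nu)$, already established in Lemma \ref{Bcont}(i), then immediately yields $B_n(u_n) \to B(u)$.

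For part (ii), write $\tilde{\1}_{\Y_n} = \alpha_n^{-1} \one_{\Y_n}$ and $\tilde{\1}_A = \alpha^{-1} \one_A$ with $\alpha_n := B_n(\one_{\Y_n})$ and $\alpha := B(\one_A)$, and set $E_n := T_n^{-1}(\Y_n)$. The hypothesis becomes $\alpha_n^{-1} \one_{E_n} \converges{L^1(\nu)} \alpha^{-1} \one_A$. The crucial structural observation is that for both the ratio and the Cheeger balances one has the elementary bound $B(\one_{Y}) \le 1/2$, hence $\alpha_n^{-1}, \alpha^{-1} \ge 2$. Decomposing the $L^1(\nu)$ integral over $E_n \setminus A$, $A \setminus E_n$, and $E_n \cap A$, each of $\nu(E_n \setminus A)$ and $\nu(A \setminus E_n)$ contributes a term bounded below by $2 \nu(E_n \setminus A)$ and $2 \nu(A \setminus E_n)$ respectively, and these must vanish in the limit. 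Therefore $\nu(E_n \triangle A) \to 0$, and since $\one_{\Y_n} \circ T_n = \one_{E_n}$, this is exactly the $TL^1$ convergence $\one_{\Y_n} \to \one_A$ by Proposition \ref{prop:TLchar}.

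The main obstacle is the potential degeneration in (ii): a priori $\alpha_n$ could tend to zero, which would allow $\tilde{\1}_{\Y_n}$ to take enormous values on a possibly vanishing set and prevent us from reading off the measure-theoretic convergence of $E_n$ to $A$ from the $L^1$ convergence of the scaled indicators. The uniform lower bound $\alpha_n^{-1} \ge 2$, specific to the ratio and Cheeger balance terms, rules out this pathology. Note that trying instead to deduce $\alpha_n \to \alpha$ by applying part (i) directly to the sequence $\one_{\Y_n}$ would be circular, since such an application already requires $\one_{\Y_n} \to \one_A$ in $TL^1$ — precisely the conclusion sought.
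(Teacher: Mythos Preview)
Your proof of part (i) is exactly the paper's argument: transport $u_n$ to $u_n\circ T_n$ via the change-of-variables formula \eqref{ChangeOfVariables}, observe $B_n(u_n)=B(u_n\circ T_n)$, and invoke Lemma~\ref{Bcont}(i). For part (ii) the paper simply declares the claim ``straightforward'' and gives no argument, whereas you supply one. Your key observation --- that both balance terms satisfy $B(\one_Y)\le 1/2$, forcing $\alpha_n^{-1},\alpha^{-1}\ge 2$ --- is precisely what makes the decomposition
\[
\|\alpha_n^{-1}\one_{E_n}-\alpha^{-1}\one_A\|_{L^1(\nu)}
=\alpha_n^{-1}\,\nu(E_n\setminus A)+\alpha^{-1}\,\nu(A\setminus E_n)+|\alpha_n^{-1}-\alpha^{-1}|\,\nu(E_n\cap A)
\]
yield $\nu(E_n\triangle A)\to 0$, and your remark about the circularity of applying (i) to $\one_{\Y_n}$ is well taken. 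The argument is correct; it is presumably close to what the authors had in mind but left implicit.
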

\begin{proof}
To prove (\rm{i}), suppose that $u_n \in L^1(\nu_n)$ and that $u_n \converges{TL^1} u$. Let us consider $\left\{T_n \right\}_{n \in \N}$ a stagnating sequence of transportation maps between $\nu$ and $\left\{\nu_n \right\}_{n \in \N}$. Then, we have $u_n\circ T_n \converges{L^1(\nu)}  u $ and thus by (\rm{i}), we have that $B(u_n\circ T_n) \rightarrow B(u)$. To conclude the proof we notice that $B(u_n\circ T_n)=B_n(u_n)$ for every $n$. In fact, by the change of variables \eqref{ChangeOfVariables} we have that for every $c \in \R$ 
\begin{equation}
 \int_{D}|u_n(x)- c| \rd\nu_n(x) = \int_{D}| u_n\circ T_n(x) -c| \rd\nu(x)
 \label{Lemma:ChangeofVar} 
 \end{equation}
In particular we have $B_C^n(u_n)= B_C(u_n\circ T_n)$. Applying the change of variables \eqref{ChangeOfVariables}, we obtain $\mean_n(u_n) = \mean_\rho(u_n \circ T_n)$ and combining with \eqref{Lemma:ChangeofVar} we deduce that $B_R^n(u_n)= B_R(u_n\circ T_n)$. 

The proof of ${(\rm{ii})}$ is straightforward.
\end{proof}

Now we turn to the proof or Proposition \ref{prop:gamma}.
\begin{proof}
\textbf{ Liminf Inequality. }For arbitrary $u \in L^1(\nu)$ and arbitrary sequence $\left\{ u_n \right\}_{n \in \N}$ with $u_n \in L^1(\nu_n)$ and with $u_n \converges{TL^1}u$, we need to show that
$$  \liminf_{n \rightarrow \infty}E_n(u_n) \geq \sigma_\eta E(u). $$ First assume that $u \in \ind(D)$. In particular $E(u)=TV(u)$. Now, note that working along a subsequence we can assume that the liminf is actually a limit and that this limit is finite (otherwise the inequality would be trivially satisfied). This implies that for all $n$ large enough we have $E(u_n)<+\infty$, which in particular implies that $E_n(u_n)=GTV_{n,\veps_n}(u_n)$. Theorem \ref{DiscreteGamma} then implies that
$$ \liminf_{n \rightarrow \infty}E_n(u_n) = \liminf_{n \rightarrow \infty}GTV_{n, \veps_n}(u_n) \geq \sigma_\eta TV(u) = \sigma_\eta E(u).   $$
Now let as assume that $u \not \in \ind(D)$. Let us consider a stagnating sequence of transportation maps $\left\{ T_n\right\}_{n \in \N}$ between $\left\{\nu_n \right\}_{n \in \N}$ and $\nu$. Since $u_n \converges{TL^1} u$ then $u_n\circ T_n \converges{L^1(\nu)} u$. By Lemma \ref{continuity} , the set $\ind(D)$ is a closed subset of $L^1(\nu)$. We conclude that  $u_n\circ T_n \not \in \ind(D)$ for all large enough $n$. From the proof of Lemma \ref{continuity} we know that $B_n(u_n)= B(u_n\circ T_n) $ and from this fact, it is straightforward to show that $u_n \circ T_n  \not \in \ind(D)$ if and only if $u_n \not \in \ind_n(D) $. Hence, $u_n \not \in \ind_n(D)$ for all large enough $n$ and in particular $\liminf_{n \in \N} E_n(u_n) = + \infty$ which implies that the desired inequality holds in this case.

\textbf{Limsup Inequality.} We now consider $u \in L^1(\nu)$. We want to show that there exists a sequence $\left\{u_n \right\}_{n \in \N}$ with $u_n \in L^1(\nu_n)$ such that
$$  \limsup_{n \rightarrow \infty} E_n(u_n) \leq \sigma_\eta E(u). $$
Let us start by assuming that $u \not \in \ind(D)$. In this case $E(u)=+\infty$. From Theorem \ref{DiscreteGamma} we know there exists at least one sequence $\left\{u_n \right\}_{n \in \N }$ with $u_n \in L^1(\nu_n)$ such that $u_n \converges{TL^1} u$. Since $E(u) = +\infty$, the inequality is trivially satisfied in this case.

On the other hand, if $u \in \ind(D)$, we know that $u = \tilde{\1}_{A}$ for some measurable subset $A$ of $D$ with $B(\1_A) \not = 0$. By Theorem \ref{CorollaryDiscreteGamma}, there exists a sequence $\left\{ \Y_n \right\}_{n \in \N}$ with $\Y_n \subseteq \X_n$, satisfying $\1_{\Y_n} \converges{TL^1} \1_{A}$ and 
\begin{equation}
  \limsup_{n \rightarrow \infty} GTV_{n , \veps_n}( \1_{\Y_n})  \leq \sigma_\eta TV(\1_A).  
  \label{aux:prop2:0}
\end{equation}
Since $\1_{\Y_n} \converges{TL^1} \1_{A}$ Lemma \ref{continuity} implies that 
\begin{equation}
B_n(\1_{\Y_n}) \rightarrow B(\1_{A}).
\label{aux:prop2}
\end{equation}
In particular $B_n(\1_{\Y_n}) \not =0 $ for all $n$ large enough, and thus we can consider the function $u_n := \tilde{\1}_{\Y_n} \in \ind_n(D)$. From \eqref{aux:prop2} it follows that $u_n \converges{TL^1} u$ and together with \eqref{aux:prop2:0} it follows that
$$ \limsup_{n \rightarrow \infty} GTV_{n,\veps_n}(u_n) = \limsup_{n \rightarrow \infty} \frac{1}{B_n(\Y_n)} GTV_{n,\veps_n}(\1_{\Y_n}) \leq \frac{1}{B(\1_A)} \sigma_\eta TV(\one_{A})= \sigma_\eta TV(u)     $$   
Since, $u_n \in \ind_n(D)$ for all $n$ large enough, in particular we have $GTV_{n,\veps_n}(\1_{\Y_n})= E_n(\1_{\Y_n})$ and also since $u \in \ind(D)$, we have $E(u)=TV(u)$. These facts together with the previous chain of inequalities imply the result.
\end{proof}

\subsection{Compactness}
\label{subsection:compactness}
\begin{lemma}[Compactness]  \label{com}
Any subsequence of $\{ u^*_n \}_{n \geq 1}$ or $\{ u^{**}_n \}_{n \geq 1}$ of minimizers of $E_n$
(defined in \eqref{l1-min} and \eqref{l1-min2}) has a further subsequence that converges in the $TL^1$-sense.
\end{lemma}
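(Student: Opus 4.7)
My plan is to apply the compactness theorem (Theorem~\ref{compact}) to the unnormalized indicators $\one_{Y_n^*}$ rather than to $u_n^* = \tilde{\one}_{Y_n^*}$ directly, since the $L^1(\nu_n)$-norm of the normalized indicator can blow up when the partition degenerates. Once a $TL^1$-limit for the raw indicators is secured, I will promote it to a limit for $u_n^*$ and $u_n^{**}$ provided the limit set has $\nu$-mass strictly between $0$ and $1$.

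The first step is to establish a uniform energy bound $\sup_n E_n(u_n^*) < \infty$. By Lemma~\ref{lem:exist_cont} there exists an admissible test function $u = \tilde{\one}_{A_0}$ with $E(u) < \infty$, and the limsup inequality in Proposition~\ref{prop:gamma} produces $\{v_n\} \subset L^1(\nu_n)$ with $v_n \converges{TL^1} u$ and $\limsup_n E_n(v_n) \le \sigma_\eta E(u)$. Since $u_n^*$ minimizes $E_n$, $E_n(u_n^*) \le E_n(v_n)$, giving the bound; by symmetry of $B_n$ under $Y \leftrightarrow Y^c$ the same bound holds for $u_n^{**}$. Using $E_n(u_n^*) = GTV_{n,\veps_n}(\one_{Y_n^*})/B_n(\one_{Y_n^*})$ and $B_n(\one_{Y_n^*}) \le 1/2$, I then deduce $\sup_n GTV_{n,\veps_n}(\one_{Y_n^*}) < \infty$. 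Together with the trivial bound $\|\one_{Y_n^*}\|_{L^1(\nu_n)} = |Y_n^*| \le 1$, Theorem~\ref{compact} yields a $TL^1$-convergent subsequence $\one_{Y_n^*} \converges{TL^1} f$. Because $TL^1$-convergence preserves $\{0,1\}$-valuedness, $f = \one_A$ for some measurable $A \subseteq D$, and Lemma~\ref{continuity}(i) gives $B_n(\one_{Y_n^*}) \to B(\one_A)$.

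The main obstacle is to rule out the degenerate cases $\nu(A) \in \{0,1\}$, in which $B(\one_A) = 0$ and the normalization $\alpha_n := 1/B_n(\one_{Y_n^*})$ blows up. Suppose for contradiction that $\nu(A) = 0$, so $|Y_n^*| \to 0$. The uniform energy bound yields
\[
\frac{\cut(Y_n^*, Y_n^{*c})}{n^2 \veps_n^{d+1}} \;\le\; \tfrac{C}{2}\, B_n(\one_{Y_n^*}) \;=\; O\bigl(|Y_n^*|\bigr).
\]
I plan to combine this with a discrete isoperimetric inequality valid almost surely in our scaling regime, of the form
\[
\frac{\cut(Y, Y^c)}{n^2 \veps_n^{d+1}} \;\ge\; c\, |Y|^{(d-1)/d} \qquad \text{for all sufficiently small } Y \subseteq \X_n,
\]
which forces $|Y_n^*|^{-1/d}$ to stay bounded and so contradicts $|Y_n^*| \to 0$. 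Such an inequality should follow from the continuum isoperimetric inequality together with the transportation-and-mollification machinery underlying the $\Gamma$-convergence of $GTV_{n,\veps_n}$ established in \cite{GTS}; the case $\nu(A) = 1$ is symmetric by considering $Y_n^{*c}$. With $0 < \nu(A) < 1$ in hand, $\alpha_n \to 1/B(\one_A) \in (0,\infty)$, hence $u_n^* = \alpha_n \one_{Y_n^*} \converges{TL^1} \tilde{\one}_A$. Since $B_n(\one_{Y_n^*}) = B_n(\one_{Y_n^{*c}})$ forces $\alpha_n$ to serve simultaneously as the normalization constant for $u_n^{**}$, and $\one_{Y_n^{*c}} = 1 - \one_{Y_n^*} \converges{TL^1} \one_{D \setminus A}$, the same subsequence gives $u_n^{**} \converges{TL^1} \tilde{\one}_{D \setminus A}$, completing the argument for both sequences.
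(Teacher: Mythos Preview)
Your overall strategy is sound up to the point where you need to rule out the degenerate limits $\nu(A)\in\{0,1\}$. The discrete isoperimetric inequality you invoke,
\[
\frac{\cut(Y,Y^c)}{n^2\veps_n^{d+1}}\;\ge\;c\,|Y|^{(d-1)/d}\quad\text{for all small }Y\subseteq\X_n,
\]
is a genuinely nontrivial statement, and it is \emph{not} a consequence of the results in \cite{GTS}. The $\Gamma$-convergence machinery there yields asymptotic statements along sequences (a liminf bound $\liminf_n GTV_{n,\veps_n}(\one_{Y_n})\ge\sigma_\eta TV(\one_A)$ whenever $\one_{Y_n}\converges{TL^1}\one_A$), not a uniform quantitative lower bound valid for every subset of every $\X_n$. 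In your degenerate scenario $\nu(A)=0$ the liminf inequality is vacuous since $TV(\one_A)=0$, so it cannot supply the contradiction you need. Establishing the uniform discrete isoperimetric bound would require a separate concentration argument and is not available from the tools at hand.

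The paper sidesteps this obstacle entirely. Rather than passing to $\one_{Y_n^*}$, it constructs an auxiliary sequence $v_n$ that is simultaneously (i) a minimizer of $E_n$ and (ii) of unit $L^1(\nu_n)$-norm by construction. For the Cheeger balance one takes $v_n$ to be the normalized indicator of whichever of $Y_n^*,\,Y_n^{*c}$ has smaller mass; for the ratio balance one takes $v_n=u_n^*-\mean_n(u_n^*)$. Theorem~\ref{compact} then gives a $TL^1$-convergent subsequence $v_{n_k}\to v$, and because each $v_{n_k}$ minimizes $E_{n_k}$, Proposition~\ref{prop:gamma} combined with Proposition~\ref{comp_gen} forces $v$ to minimize $E$. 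In particular $v\in\ind(D)$, so $v=\tilde{\one}_A$ with $B(\one_A)>0$ automatically---no isoperimetric input is needed. From there the balance constants $B_{n_k}(\one_{Y_{n_k}^*})$ are bounded away from zero and the $L^1$-bounds for $u_{n_k}^*$ and $u_{n_k}^{**}$ follow. The idea you are missing is to exploit that the auxiliary $v_n$ are themselves minimizers, so that $\Gamma$-convergence identifies their limit as nondegenerate for free.
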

\begin{proof}
Let $u^{*}_{n},u^{**}_{n}$ denote minimizing sequences. To show that any subsequence of $u^{*}_{n}$ has a convergent subsequence, it suffices to show that both
\begin{align}
\limsup_{n \to \infty} \; GTV_{n,\veps_n}(u^{*}_n) < +\infty \label{eq:tvbound}\\
\limsup_{n \to \infty} \; \|u^{*}_n\|_{L^{1}(\nu_n)} < +\infty \label{eq:l1bound}
\end{align}
hold due to Theorem 1.2 in \cite{GTS}. From the $\Gamma$-convergence established in Proposition \ref{prop:gamma} and from the proof of Proposition \ref{comp_gen} it follows that \eqref{eq:tvbound} is satisfied for both minimizing sequences. Recall that $u^{*}_n = \one_{\Y^*_n}/B_n(\one_{\Y^*_n})$ and that $u^{**}_n = \one_{\Y^{*c}_n}/B_n(\one_{\Y^{*c}_n}),$ where $\Y^*_n$ denotes an optimal balanced cut.

To show \eqref{eq:l1bound}, consider first the balance term that corresponds to the Cheeger Cut. Define a sequence $v_n$ as follows. Set $v_n := u^{*}_n$ if $|\Y^*_n| \leq |\Y^{*c}_n|$ and $v_n = u^{**}_n$ otherwise. 
It then follows that

$$
\|v_n\|_{L^{1}(\nu_n)}  = \frac{ \min\{ |\Y^*_n|,|\Y^{*c}_n|\} }{ \min\{ |\Y^*_n|,|\Y^{*c}_n|\} } =1.
$$
Also, note that $GTV_{n,\veps_n}(v_n) = GTV_{n,\veps_n}(u^{*}_n) $. Thus \eqref{eq:tvbound} and \eqref{eq:l1bound} hold for $v_n,$ so that any subsequence of $v_n$ has a convergent subsequence in the $TL^1$-sense. Let $v_{n_k} \converges{TL^1} v$ denote a convergent subsequence. Now observe that by construction $v_{n_k}$ minimizes $E_{n_k}$ for every $k$. Thus, it follows from Proposition \ref{prop:gamma} and general properties of $\Gamma$-convergence (see Proposition \ref{comp_gen}), that $v$ minimizes $E$ and in particular $v$ is a normalized characteristic function, that is, $v = \one_{A}/B(\one_{A})$ for some $A\subseteq D$ with $B(\one_{A}) \not =0$. Since $B_{n_k}(\one_{\Y^*_{n_k}})= B_{n_k}(\one_{\Y^{*c}_{n_k}})$, $v_{n_k} \converges{TL^1} v$ implies that
$$\frac{1}{B_{n_k}(\Y^*_{n_k})} \rightarrow \frac{1}{B(A)}.$$
Therefore, for large enough $k$ we have

$$  \| u^{*}_{n_k} \|_{L^1(\nu_{n_k})} \leq \frac{1}{B_{n_k}(\Y^*_{n_k})} \leq \frac{2}{B(A)}    $$
and
$$  \| u^{**}_{n_k} \|_{L^1(\nu_{n_k})} \leq \frac{1}{B_{n_k}(\Y^{*c}_{n_k})} = \frac{1}{B_{n_k}(\Y^*_{n_k})} \leq \frac{2}{B(A)}.    $$

We conclude that $\| u^{*}_{n_k} \|_{L^1(\nu_{n_k})}$ and $\| u^{**}_{n_k} \|_{L^1(\nu_{n_k})} $ remain bounded, so that both minimizing subsequences satisfy \eqref{eq:l1bound} and \eqref{eq:tvbound} simultaneously. This yields compactness in the Cheeger Cut case.

Now consider the balance term $B(u) = B_{ {\rm R}}(u)$ that corresponds to the Ratio Cut. Define a sequence $v_{n} := u^{*}_n - \mean_{n}(u^{*}_n),$ and note that $GTV_{n,\veps_n}(v_n) = GTV_{n, \veps_n}(u^{*}_{n})$ since the total variation is invariant with respect to translation. It then follows that
$$
 \|v_n\|_{L^{1}(\nu)} =  \int_{D} |u^{*}_n(x) - \mean_{\rho}(u^{*}_n)| \rho(x) \; \rd x = B(u^{*}_n)=1.
$$
Thus the sequence $\left\{ v_n \right\}_{n \in \N}$ is precompact in $TL^1$. Let $v_{n_k} \converges{TL^1} v$ denote a convergent subsequence. Using a stagnating sequence of transportation maps $\left\{T_{n_k} \right\}_{k \in \N}$ between $\nu$ and the sequence of measures $\left\{ \nu_{n_k} \right\}_{k \in \N}$, we have that $v_{n_k} \circ T_{n_k} \converges{L^1(\nu)} v$. By passing to a further subsequence if necessary, we may assume that $v_{n_k}\circ T_{n_k}(x) \to v(x)$ for almost every $x$ in $D$.

For any such $x,$ we have that either $T_{n_k}(x) \in \Y^*_{n_k}$ or $T_{n_k}(x) \in \Y^{*c}_{n_k}$ so that either
$$
v_{n_k} \circ T_{n_k}(x) = \frac{1}{2|\Y^*_{n_k}|} \quad \text{ or } \quad v_{n_k}\circ T_{n_k}(x) = -\frac{1}{2|\Y^{*c}_{n_k}|}.
$$
Now, by continuity of the balance term, we have
$$  B(v) = \lim_{k \rightarrow \infty} B_{n_k}(v_{n_k}) = 1,   $$
and also
$$ \mean_{\rho}(v) = \lim_{k \rightarrow \infty} \mean_{n_k}(v_{n_k})=0.  $$
In particular the measure of the region in which $v$ is positive is strictly greater than zero, and likewise the measure of the region in which $v$ is negative is strictly greater than zero. It follows that both $|\Y^*_{n_k}|$ and $|\Y^{*c}_{n_k}|$ remain bounded away from zero for all $k$ sufficiently large. As a consequence, the fact that 
$$
 \| u^{*}_{n_k} \|_{L^1(\nu_{n_k})} = \frac{1}{2|\Y^{*c}_{n_k}|}, \qquad  \| u^{**}_{n_k} \|_{L^1(\nu_{n_k})} = \frac{1}{2|\Y^*_{n_k}|}, 
$$
implies that both \eqref{eq:tvbound} and \eqref{eq:l1bound} hold along a subsequence, yielding the desired compactness.
\end{proof}

\subsection{Conclusion of the proof of Theorem \ref{main}}
\label{subsec:ConclusionProof}
 
We may now turn to the final step of the proof. From Proposition \ref{comp_gen}, we know that any limit point of $\{ u^*_n \}_{n \in \N}$ ( in the $TL^1$ sense) must equal $u^*$ or $u^{**}$. As a consequence, for any subsequence ${u^*_{n_k}}$ that converges to $u^*$ we have that $\one_{ \Y_{n_k}^*} \converges{TL^1} \one_{A^*} $ by lemma \ref{continuity}, while $\one_{\Y_{n_k}^*} \converges{TL^1} \one_{{A^*}^c} $ if the subsequence converges to $u^{**}$ instead. Moreover, in the first case we would also have $\one_{ \Y_{n_k}^{*c}} \converges{TL^1} \one_{A^{*c}} $ and in the second case $\one_{\Y_{n_k}^{c}} \converges{TL^1} \one_{{A^*}} $. Thus in either case we have 
$$ \left\{ \Y_{n_k}^*, \Y_{n_k}^{*c} \right\}  \converges{TL^1}\left\{ A^*, {A^*}^c \right\} $$

Thus, for any subsequence of $\left\{ \Y_n^*, {\Y_n^{*c}} \right\}_{n \in \N}$ it is possible to obtain a  further subsequence converging to $\{A^*,{A^*}^c\}$, and thus the full sequence converges to $\{A^*,{A^*}^c\}$.

\section{Consistency of multiway balanced cuts}

Here we prove  Theorem \ref{main2}.

\label{sec:TheoremMulti-Class}
Just as what we did in the two-class case, the first step in the proof of Theorem \ref{main2} involves a reformulation of both the balanced graph-cut problem \eqref{eq:multi1:Body} and the analogous balanced domain-cut problem \eqref{eq:multi1} as equivalent minimizations defined over spaces of functions and not just spaces of partitions or sets.

We let $B_n(u_n):= \mean_n(u_n)$ for $u_n \in L^1(\nu_n)$ and $
B(u) := \mean_\rho(u)$ for $u \in L^1(\nu)$, to be the corresponding  balance terms. Given this balance terms, we let $\ind_n(D)$ and $\ind(D)$ be defined as in \eqref{ind_n} and \eqref{ind} respectively.

We can then let the sets $\Part_n(D)$ and $\Part(D)$ to consist of those collections $\mathcal{U} = (u_1,\ldots,u_{R})$ comprised of exactly $R$ disjoint, normalized indicator functions that cover $D$. The sets $\Part_n(D)$ and $\Part(D)$ are the multi-class analogues of $\ind_n(D)$ and $\ind(D)$ respectively. Specifically, we let
\begin{align}
\Part_n(D) &=  \left\{ (u_1^n,\ldots,u_R^n)\: :   u_r^n \in \ind_n(D) ,\;\int_{D} u_r^n(x) u_s^n(x) \; \rd \nu_n(x) = 0 \;\; \text{if} \;\; r \neq s \text{, } \sum_{r=1}^{R}u_r^n > 0   \right\}\\
\Part(D) &= \left\{ (u_1,\ldots,u_R)\: :   u_r \in \ind(D),\;\int_{D} u_r(x) u_s(x) \; \rd \nu(x) = 0 \;\; \text{if} \;\; r \neq s \text{, } \sum_{r=1}^{R}u_r > 0  \right\}.
\end{align}
Note for example that if  $\mathcal{U} = (u_1,\ldots,u_{R}) \in \Part(D)$, then the functions $u_r$
are normalized indicator functions, 
 $u_r = \one_{A_r}/|A_r|$ for $1\leq r \leq R$, and the orthogonality constraints 
imply that $\{A_1,\ldots,A_R\}$ is a collection of pairwise disjoint sets (up to Lebesgue-null sets). Additionally, the condition that $\sum_{r=1}^{R} u_r > 0$ holds almost everywhere implies that the sets $\{A_1,\ldots,A_R\}$ cover $D$ up to Lebesgue-null sets. 

With these definitions in hand, we may follow the same argument in the two-class case to conclude that that the minimization
\begin{equation}  \label{eq:multi_final}
\text{Minimize \;\;\;  }  E_n(\mathcal{U}_n) := \begin{cases} 
\sum^{R}_{r=1} \; GTV_{n,\veps_n}(u_r^n) & \text{ if } \mathcal{U}_n  \in  \Part_{n}(D) \\
+ \infty & \text{ otherwise }
\end{cases} \\
\end{equation}
is equivalent to the balanced graph-cut problem \eqref{eq:multi1:Body}, while the minimization
\begin{equation}  \label{eq:multi_final_domain}
\text{Minimize \;\;\;  }  E(\mathcal{U}) := \begin{cases} 
\sum^{R}_{r=1} \; TV(u_r) & \text{ if } \mathcal{U}  \in  \Part(D) \\
+ \infty & \text{ otherwise }
\end{cases} \\
\end{equation}
is equivalent to the balance domain-cut problem \eqref{eq:multi1}. 

At this stage, the proof of Theorem \ref{main2}, is completed by following the same steps as in the two-class case. In particular we want to show that $E_n$ defined in \eqref{eq:multi_final} $\Gamma$-converges in the $TL^1$-sense to $\sigma_\eta E$, where $E$ is defined in \ref{eq:multi_final_domain}. That is, we want to prove the following.
\begin{proposition}{\rm ($\Gamma$-Convergence)}\label{prop:gamma2}
Let domain $D$, measure $\nu$, kernel $\eta$, sequence $\{\veps_n\}_{n \in \N}$,
sample points $\{\x_i\}_{i \in N}$, and graph $\G_n$
 satisfy the assumptions of Theorem \ref{main}.
 Consider  functionals $E_n$ of \eqref{eq:multi_final} and $E$ of \eqref{eq:multi_final_domain}. Then
 \[ E_n \overset{\Gamma}{\longrightarrow} \sigma_n E \quad \te{ with respect to } (TL^1)^{\otimes R} \te{ metric as }  n \to \infty. \]
 That is
\begin{enumerate}
\item For {  any} $\mathcal{U} \in [L^1(D)]^R$ and {  any} sequence $\mathcal{U}_{n} \in (L^1(\nu_n))^R$ that converges to $\mathcal{U}$ in the $TL^1$ sense, 
\begin{equation}\label{eq:liminf2}
E(\mathcal{U}) \leq \liminf_{n \to \infty} \; E_{n}(\mathcal{U}_n).
\end{equation} 
\item For {  any} $\mathcal{U} \in [L^1(D)]^R$ there exists {  at least one} sequence $\mathcal{U}_{n}$ that both converges to $\mathcal{U}$ in the $TL^1$-sense and also satisfies
\begin{equation}\label{eq:limsup2}
\limsup_{n \to \infty} \; E_{n}(\mathcal{U}_n) \leq E(\mathcal{U}).
\end{equation} 
\end{enumerate}
\end{proposition}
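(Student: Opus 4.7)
The plan is to mirror the two-class argument of Proposition~\ref{prop:gamma}, splitting into the liminf and limsup inequalities, with extra care to preserve the partition constraints (orthogonality and coverage) that distinguish $\Part_n(D)$ and $\Part(D)$ from the single-indicator spaces $\ind_n(D)$ and $\ind(D)$.

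For the liminf, take $\mathcal{U}_n \to \mathcal{U}$ in $(TL^1)^{\otimes R}$ and assume $\liminf_n E_n(\mathcal{U}_n) < +\infty$, the opposite case being trivial. Passing to a subsequence realising the liminf, we have $\mathcal{U}_n \in \Part_n(D)$ for all large $n$ and $\sup_n GTV_{n,\veps_n}(u_r^n) < \infty$ for each $r$. The componentwise liminf of Proposition~\ref{prop:gamma} gives $\sigma_\eta TV(u_r) \leq \liminf_n GTV_{n,\veps_n}(u_r^n)$, and summing over $r$ yields the desired lower bound, provided that $\mathcal{U} \in \Part(D)$. To verify membership, fix a stagnating sequence of transportation maps $T_n$ from Proposition~\ref{thm:InifinityTransportEstimate}, so that $u_r^n \circ T_n \to u_r$ in $L^1(\nu)$ and, along a further subsequence, $\nu$-a.e. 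Closedness of $\ind(D)$ (Lemma~\ref{Bcont}(ii)) gives each $u_r = \tilde\one_{A_r}$ with $|A_r| > 0$; the orthogonalities $\int u_r^n u_s^n \, \rd\nu_n = 0$ pass to the limit, using the change of variables along $T_n$ and dominated convergence, to $\int u_r u_s \, \rd\nu = 0$; and coverage follows from the identity $\sum_r (\one_{Y_r^n} \circ T_n) \equiv 1$ on $D$ (a direct consequence of $T_{n\sharp}\nu = \nu_n$ together with the fact that $\{Y_r^n\}$ partitions $\X_n$), which combined with Lemma~\ref{continuity}(i) passes to the limit as $\sum_r \one_{A_r} = 1$ $\nu$-a.e., hence $\sum_r u_r > 0$ $\nu$-a.e.

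For the limsup, assume $\mathcal{U} = (\tilde\one_{A_1},\ldots,\tilde\one_{A_R}) \in \Part(D)$, since the claim is trivial otherwise. The naive idea of applying Theorem~\ref{CorollaryDiscreteGamma} to each $A_r$ separately fails, because the resulting sets $\{Y_r^n\}$ are not guaranteed to partition $\X_n$. The plan is a two-step argument: first, use the diagonal principle of Remark~\ref{DenseGamma} to reduce to a dense subclass of ``regular'' partitions whose pairwise interfaces $\partial A_r \cap \partial A_s \cap D$ are piecewise $C^1$, and along which both $|A_r|$ and $\sum_r TV(\tilde\one_{A_r})$ depend continuously on the partition. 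For such a regular partition, I will define $Y_r^n$ by assigning each $\x_i$ to the class $r$ for which the transport cell $T_n^{-1}(\x_i)$ has the largest overlap with $A_r$; this produces a bona fide partition of $\X_n$, and one checks that $\tilde\one_{Y_r^n} \to \tilde\one_{A_r}$ in $TL^1$ for each $r$. The energy bound
\[\limsup_{n\to\infty} GTV_{n,\veps_n}(\one_{Y_r^n}) \leq \sigma_\eta TV(\one_{A_r})\]
is then obtained by counting the edges of $\G_n$ crossing $\partial A_r \cap D$ inside a $\veps_n$-tubular neighbourhood, with the regularity of the interfaces producing the correct surface-tension constant $\sigma_\eta$ weighted by $\rho^2$, in parallel with the construction behind Theorem~\ref{CorollaryDiscreteGamma}.

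The main obstacle is precisely this last step: the recovery construction in \cite{GTS} is built for a single set and does not automatically respect a partition structure. The technical work alluded to by the authors is to upgrade the geometric-measure-theoretic approximation of \cite{GTS} so that all $R$ sets are approximated jointly while (i) remaining pairwise disjoint and covering $D$, (ii) keeping each balance $|A_r^k|$ uniformly bounded away from zero along the approximation, and (iii) delivering the correct asymptotic contribution on the shared interfaces, which are counted once in each of the two adjacent classes and so must come out of the edge-counting argument with the right factor. Once this joint approximation is available, Remark~\ref{DenseGamma} closes the limsup, and combining it with the liminf completes the proof of the proposition.
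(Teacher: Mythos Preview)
Your outline is correct and matches the paper's approach closely: the liminf argument is essentially identical (the paper packages the verification that $\mathcal{U}\in\Part(D)$ into a closedness lemma, Lemma~\ref{lem:continuity2}(ii), but your direct check via Fatou and the coverage identity is the same content), and for the limsup you correctly identify the two-step structure---recovery for a regular dense subclass, then density---and correctly flag the density step as the crux.

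Two points are worth noting. First, for the discrete recovery from a regular partition, the paper simply takes $Y_r^n := A_r \cap X_n$; since $\{A_r\}$ already partitions $D$, this automatically partitions $X_n$, so your ``largest overlap'' assignment is unnecessary. The energy bound then follows by comparing $GTV_{n,\veps_n}(\one_{Y_r^n})$ to the non-local total variation $\tTV_{\tilde\veps_n}(\one_{A_r})$ at a perturbed scale $\tilde\veps_n = \veps_n + O(\|Id-T_n\|_\infty)$, using piecewise smoothness only to control $\|\one_{A_r^n}-\one_{A_r}\|_{L^1}$ by a tubular-neighbourhood volume (Weyl's formula), and then invoking the known limsup for $\tTV$ from \cite{GTS}.

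Second, and more substantively, the density step you leave open is resolved in the paper by a specific and not entirely obvious construction. One first extends each $A_r$ to a set $B_r\subset\R^d$ with $B_r\cap D=A_r$, finite perimeter in $\R^d$, and no mass on $\partial D$ (so that mollification does not pick up spurious boundary contributions). One then mollifies each $\one_{B_r}$, takes superlevel sets $\{u_r^k>t_r\}$ at generic thresholds $t_r<\alpha/R$ chosen via Sard's theorem and a diagonal subsequence argument so that the resulting boundaries are smooth and pairwise $\mathcal{H}^{d-1}$-null in their intersections, and so that the perimeters converge. The threshold bound $t_r<\alpha/R$ guarantees these smooth sets still cover $D$, but they may overlap. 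The key combinatorial idea (Lemma~\ref{LemmaPermutation}) is that one can always order $R$ such sets so that subtracting the later ones from each earlier one does \emph{not} increase its weighted perimeter in $D$; the proof is a short pigeonhole on the off-diagonal boundary contributions $a_{rs}$. Without this permutation lemma the disjointification could a priori inflate the energies, which is exactly the obstruction your proposal anticipates but does not resolve.
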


\begin{remark}
We remark that all the types of convergence for vector-valued functions are to be understood as component-wise convergence in the corresponding topology. This helps us clarify the way the $TL^1$ -convergence is considered in Proposition \ref{prop:gamma2}.  
\end{remark}

Assuming Proposition \ref{prop:gamma2}, the following lemma follows in a straightforward way. We omit its proof since it follows analogous arguments to the ones used in the proof of Proposition \ref{com}.  
\begin{lemma}[Compactness]\label{lem:compactness2}
Any subsequence of $\{ \mathcal{U}^*_n \}_{n \geq 1}$ of minimizers to \eqref{eq:multi_final} has a further subsequence that converges in the $TL^1$-sense.
\end{lemma}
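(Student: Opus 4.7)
The plan is to follow the same strategy as the proof of Lemma \ref{com} in the two-class case, but observe that the situation is actually cleaner here because the balance term is simply the mean $B_n(u_n) = \mean_n(u_n)$, which automatically normalizes the $L^1$ norm of a normalized indicator function. Concretely, fix an arbitrary minimizing sequence $\mathcal{U}_n^* = (u^{*}_{1,n},\ldots,u^{*}_{R,n})$ of \eqref{eq:multi_final}, where $u^{*}_{r,n} = \one_{\Y_{r,n}^*}/B_n(\one_{\Y_{r,n}^*})$. By Theorem \ref{compact}, in order to extract a $TL^1$-convergent subsequence for each component it suffices to verify
\begin{equation*}
\sup_{n} GTV_{n,\veps_n}(u^{*}_{r,n}) < \infty \quad \text{and} \quad \sup_{n} \|u^{*}_{r,n}\|_{L^1(\nu_n)} < \infty \qquad \text{for every } r \in \{1,\ldots,R\}.
\end{equation*}

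The $L^1$ bound is immediate: since $u^{*}_{r,n} = \one_{\Y_{r,n}^*}/|\Y_{r,n}^*|$ (where $|\Y_{r,n}^*|$ is the fraction of sample points in $\Y_{r,n}^*$), a direct computation gives $\|u^{*}_{r,n}\|_{L^1(\nu_n)} = 1$ identically in $n$. For the $GTV$ bound I would compare the energy along the minimizing sequence against the energy of a recovery sequence for a fixed smooth reference partition. More precisely, pick any $\mathcal{U} \in \Part(D)$ with $E(\mathcal{U}) < \infty$ (which exists since $D$ can be partitioned into $R$ pieces with Lipschitz interfaces). By the limsup inequality of Proposition \ref{prop:gamma2}, there is a sequence $\mathcal{U}_n \to \mathcal{U}$ in $(TL^1)^R$ with $\limsup_n E_n(\mathcal{U}_n) \leq \sigma_\eta E(\mathcal{U}) < \infty$. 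Since $\mathcal{U}_n^*$ is a minimizer of $E_n$, we have $E_n(\mathcal{U}_n^*) \leq E_n(\mathcal{U}_n)$, hence
\begin{equation*}
\limsup_{n \to \infty} \sum_{r=1}^R GTV_{n,\veps_n}(u^{*}_{r,n}) = \limsup_{n \to \infty} E_n(\mathcal{U}_n^*) < \infty,
\end{equation*}
and since each summand is nonnegative the required componentwise bound follows.

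With the two bounds in hand, applying Theorem \ref{compact} to $\{u^{*}_{1,n}\}_{n \in \N}$ yields a subsequence along which $u^{*}_{1,n}$ converges in $TL^1$. Passing to a further subsequence, $u^{*}_{2,n}$ also converges, and so on; after $R$ diagonal extractions we obtain a common subsequence along which every component converges in $TL^1$, which is by definition $(TL^1)^{\otimes R}$-convergence of $\mathcal{U}_n^*$. The argument works verbatim for any subsequence of $\{\mathcal{U}_n^*\}$, which gives the claim.

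The only place requiring care is the existence of a finite-energy competitor used to upper bound $E_n(\mathcal{U}_n^*)$; this is essentially the same observation one needs in the general scheme of Proposition \ref{comp_gen}, and it is the reason no genuine obstacle arises in this lemma. Unlike the two-class Cheeger case, where one had to split into subcases according to which of $|\Y^*|, |\Y^{*c}|$ is smaller and manipulate a carefully chosen translate, here the automatic identity $\|u^{*}_{r,n}\|_{L^1(\nu_n)} = 1$ removes the only nontrivial step, so the proof really is shorter and more transparent than its two-class analogue.
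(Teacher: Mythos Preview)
Your proof is correct and follows the approach the paper alludes to (the paper omits the proof, referring to the two-class argument in Lemma~\ref{com}). Your observation that the multiway case is in fact \emph{simpler}---because here $B_n(u_n)=\mean_n(u_n)$ forces $\|u^{*}_{r,n}\|_{L^1(\nu_n)}=1$ automatically, eliminating the case-splitting and auxiliary sequences $v_n$ needed for the Cheeger and Ratio balance terms in Lemma~\ref{com}---is correct and worth making explicit.
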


Finally, due to Proposition \ref{prop:gamma2} and Lemma \ref{lem:compactness2}, the arguments presented in Subsections  \ref{OutlineTwo-Class} and \ref{subsec:ConclusionProof} can be adapted in a straightforward way to  complete the proof of Theorem \ref{main2}. So we focus on the proof of Proposition \ref{prop:gamma2}, where arguments not present in the two-class case are needed. On one hand, this is due to the presence of the orthogonality  constraints in the definition of $\mathcal{M}_n(D)$ and $\mathcal{M}(D)$, and on the other hand,  from a geometric measure theory perspective, due to the fact that an arbitrary partition of the domain $D$ into more than two sets can not be approximated by smooth partitions as multiple junctions appear when more than two sets in the partition meet.

\subsection{Proof of Proposition \ref{prop:gamma2}}  
The next lemma is the multiclass analogue of Lemmas \ref{Bcont} and \ref{continuity} combined.
\begin{lemma} \label{lem:continuity2} {\rm (i)} If $\mathcal{U}_k \to \mathcal{U}$ in $(L^1(\nu))^R$ then  $B(u^k_r) \to B(u_r)$ for all $1 \leq r \leq R$. {\rm (ii)} The set $\Part(D)$ is closed in $L^1(\nu)$. {\rm(iii)} If $\left\{\mathcal{U}_n \right\}$ is a sequence with $\mathcal{U}_n  \in (L^1(\nu_n))^R$ and $\mathcal{U}_n \converges{TL^1} \mathcal{U}$ for some $\mathcal{U} \in (L^1(\nu))^R$, then $B_n(u^n_r) \to B(u_r)$ for all $1 \leq r \leq R$. {\rm (iv)} If $u_n = \tilde{\1}_{\Y_n}$, where $\Y_n \subset \X_n$, converges to $u = \tilde{\1}_{A}$ in the $TL^1$-sense, then $\one_{\Y_n} $ converges to $\one_{A}$ in the $TL^1$-sense.
 \end{lemma}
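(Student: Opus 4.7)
Each of the four assertions reduces componentwise to its single-function analogue from Lemmas~\ref{Bcont} and~\ref{continuity}, the only new ingredient being the verification that the orthogonality and covering constraints defining $\Part(D)$ survive passage to an $L^1(\nu)$-limit. This closedness step is the main obstacle, and it is handled in part~(ii). Part~(i) is essentially free: the multiclass balance term $B(u) = \mean_\rho(u)$ is \emph{linear}, so
\[
|B(u_r^k) - B(u_r)| \leq \int_D |u_r^k(x) - u_r(x)|\,\rho(x)\,\rd x = \|u_r^k - u_r\|_{L^1(\nu)} \to 0
\]
for each $1 \leq r \leq R$.

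For part~(ii), let $\mathcal{U}_k = (u_1^k,\ldots,u_R^k) \in \Part(D)$ converge componentwise to $\mathcal{U} = (u_1,\ldots,u_R)$ in $L^1(\nu)$. Componentwise application of Lemma~\ref{Bcont}(ii) gives $u_r = \tilde{\1}_{A_r}$ with $\nu(A_r) > 0$ for every $r$. Passing to a subsequence on which every component converges $\nu$-a.e., the identities $u_r^{k_j} u_s^{k_j} \equiv 0$ force $u_r u_s = 0$ a.e., i.e.\ $\nu(A_r \cap A_s) = 0$ whenever $r \neq s$. The covering condition is more delicate: on the positive-measure set $A_r$, the limit $u_r \equiv 1/\nu(A_r)$ is strictly positive while the approximants $u_r^{k_j}$ take only the two values $0$ and $1/\nu(A_r^{k_j})$, so a.e.\ convergence forces $\nu(A_r^{k_j}) \to \nu(A_r)$, and uniqueness of the limit promotes this to the full sequence $\nu(A_r^k) \to \nu(A_r)$. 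Summing the identities $\sum_r \nu(A_r^k) = 1$ then yields $\sum_r \nu(A_r) = 1$, and combined with the already-established disjointness this gives $\nu(D \setminus \bigcup_r A_r) = 0$, so $\sum_r u_r > 0$ almost everywhere.

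Parts~(iii) and~(iv) both follow by transporting the problem to the continuum. Fix a stagnating sequence $\{T_n\}$ of transportation maps with ${T_n}_\sharp \nu = \nu_n$, which exists almost surely by Proposition~\ref{thm:InifinityTransportEstimate}. For~(iii), Proposition~\ref{prop:TLchar} gives $u_r^n \circ T_n \to u_r$ in $L^1(\nu)$, and the change of variables~\eqref{ChangeOfVariables} yields the identity $B_n(u_r^n) = \mean_n(u_r^n) = \mean_\rho(u_r^n \circ T_n) = B(u_r^n \circ T_n)$, so part~(i) applied to $u_r^n \circ T_n$ proves the claim. For~(iv), extract a subsequence on which $u_n \circ T_n \to u$ a.e., and repeat the two-value argument from part~(ii): since $u \equiv 1/B(\one_A)$ is strictly positive on $A$ and $u_n \circ T_n$ takes only the values $0$ and $1/B_n(\one_{\Y_n})$, a.e.\ convergence on $A$ forces $B_n(\one_{\Y_n}) \to B(\one_A)$ along the subsequence; since $B_n(\one_{\Y_n}) \in [0,1]$, uniqueness of the limit promotes this to the full sequence. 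Continuity of scalar multiplication in $TL^1$, which follows from Proposition~\ref{prop:TLchar} applied via the triangle inequality, then turns $\one_{\Y_n} = B_n(\one_{\Y_n})\, u_n$ and $u_n \converges{TL^1} u$ into $\one_{\Y_n} \converges{TL^1} B(\one_A)\, u = \one_A$.
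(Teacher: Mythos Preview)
Your proof is correct and follows essentially the same approach as the paper: parts (i), (iii), (iv) are reduced to their single-function analogues (with you spelling out the two-value argument for (iv) that the paper simply calls ``straightforward''), and for (ii) both you and the paper establish $u_r \in \ind(D)$ componentwise, handle orthogonality via an a.e.\ convergent subsequence, and need $\nu(A_r^k) \to \nu(A_r)$ for the covering condition. The only minor divergence is in how covering is concluded from this: the paper bounds $\sum_r u_r(x)$ below pointwise by $\min_r 1/\nu(A_r)$, whereas you sum the identities $\sum_r \nu(A_r^k) = 1$ to get $\sum_r \nu(A_r) = 1$ and combine with disjointness---both arguments are equally short and valid.
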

\begin{proof}
Statements {\rm(i)}, {\rm(iii)} and {\rm(iv)} follow directly from the proof of Proposition \ref{continuity}. 

In order to prove the second statement, suppose that a sequence $\left\{ \mathcal{U}_k \right\}_{k \in \N}$ in $\mathcal{M}(D)$ converges to some $\mathcal{U}$ in $(L^1(\nu))^R$. We need to show that $\mathcal{U} \in \mathcal{M}(D)$.  First of all note that for every $1\leq r \leq R$, $u_{r}^k \converges{L^1(\nu)} u_r$. Since $u_r^k \in \ind(D)$ for every $k \in \N$, and since $\ind(D)$ is a closed subset of $L^1(\nu)$ (by Proposition \ref{continuity}), we deduce that $u_r \in \ind(D)$ for every $r$.

The orthogonality condition follows from Fatou's lemma. In fact, working along a subsequence we can without the loss of generality assume that for every $r$, $u_r^k \rightarrow u_r $ for almost every $x$ in $D$. Hence, for $r \not = s$ we have
$$ 0 \leq \int_{D}u_r(x) u_s(x) d \nu(x) = \int_{D} \liminf_{k \rightarrow \infty} ( u_r^k(x)u_s^k(x) ) \rd \nu(x)  \leq \liminf_{k \rightarrow \infty} \int_{D} u_r^k(x)u_s^k(x) d\nu(x)=0   $$

Now let us write $u_r^k = \one_{A_r^k} / B(\one_{A_r^k})$ and $u_r = \one_{A_r^k} / B(\one_{A_r})$. As in the proof of Proposition \eqref{continuity} we must have $B(\one_{A_r^k}) \rightarrow B(\one_{A_r})$ as $k \rightarrow \infty$. Thus, for almost every $x \in D$ 
$$   \sum_{r=1}^{R}u_r(x)  = \lim_{k \rightarrow \infty}   \sum_{r=1}^{R}u_r^k(x) \geq \lim_{k \rightarrow \infty } \min_{r=1, \dots, R} \frac{1}{B(\one_{A_r^k})}   = \min_{r= 1, \dots, R} \frac{1}{B(\one_{A_r})} >0.   $$
\end{proof}

\begin{proof}[Proof of Proposition \ref{prop:gamma2}]

\textbf{Liminf inequality.} The proof of \eqref{eq:liminf2} follows the approach used in the two-class case. Let $\mathcal{U}_n \converges{TL^1} \mathcal{U}$ denote an arbitrary convergent sequence. As $\Part(D)$ is closed, if $\mathcal{U} \notin \Part(D)$ then as in the two-class case, it is easy to see that $\mathcal{U}_n \notin\Part_n(D)$ for all $n$ sufficiently large. The inequality \eqref{eq:liminf2} is then trivial in this case, as both sides of it are equal to infinity. Conversely, if $\mathcal{U} \in \Part(D)$ then we may assume that  $\mathcal{U}_{n} \in \Part_n(D)$ for all $n,$ since only those terms with $\mathcal{U}_{n} \in \Part_n(D)$ can contribute non-trivially to the limit inferior. In this case we easily have
\begin{align*}
\liminf_{n \to \infty} \; E_n(\mathcal{U}_n)  = \liminf_{n \to \infty} \; \sum^{R}_{r=1} \;\; GTV_{n, \veps_n}\left(u_r^n \right) & \geq \sum^{R}_{r=1} \;\; \liminf_{n \to \infty} \; GTV_{n, \veps_n}(u^n_r) \\
&\geq \sigma_\eta \sum^{R}_{r=1} \;\; TV(u_r) = \sigma_\eta E(\mathcal{U}).
\end{align*}
The last inequality follows from Theorem \ref{DiscreteGamma}. This establishes the first statement in Proposition \ref{prop:gamma2}. 
\nc

\textbf{Limsup inequality.} We now turn to the proof of \eqref{eq:limsup2}, which is significantly more involved than the two-class argument due to the presence of the orthogonality constraints.  
Borrowing terminology from the $\Gamma$-convergence literature, we say that $\mathcal{U} \in (L^1(\nu))^R$ has a \emph{recovery sequence} when there exists a sequence $\mathcal U_n \in (L^1(\nu_n))^R$ such that
\eqref{eq:limsup2} holds.
 To show that each $\mathcal{U} \in (L^{1}(\nu))^R$ has a recovery sequence, we first remark that due to general properties of the $\Gamma$-convergence, it is enough to verify \eqref{eq:limsup2} for $\mathcal{U}$ belonging to a dense subset of $\mathcal{M}(D)$ with respect to the energy $E$ (see Remark \ref{DenseGamma}). 
 We furthermore remark that it is enough to consider 
 $\mathcal{U} = (u_1,\ldots,u_R) \in (L^{1}(D))^R$ for which $E(\mathcal{U}) <\infty$, as the other case is trivial.
So we can consider  $\mathcal{U} \in \Part(D)$ that satisfy 
$$
\sum^{R}_{r=1} \; TV(u_{r}) < \infty.
$$ 
Let $u_{r} = \one_{A_r}/B(\one_{A_r})$ and let $c_{0} := \max \{ B(\one_{A_1}), \ldots , B(\one_{A_R}) \} $ denote the size of the largest set in the collection. The fact that $E(\mathcal{U}) < \infty$ then implies
$$
TV(\one_{A_r}) \leq c_{0} \; TV(u_r) \leq c_{0} \; \sum^{R}_{r=1} \; TV(u_{r}) < \infty,
$$
so that all sets $\{A_1,\ldots,A_{R}\}$ in the collection defining $\mathcal{U}$ have finite perimeter. Additionally because $\mathcal{U} \in \Part(D)$ implies that any two sets $A_r,A_s$ with $r \neq s$ have empty intersection up to a Lebesgue-null set, we may freely assume without the loss of generality that the sets $\left\{ A_1,\ldots,A_{R}\right\}$ are mutually disjoint.

Let us define sets with \emph{piecewise (PW) smooth} boundary to be the subsets of $\R^d$ whose boundary is a subset of finitely many open $d-1$-dimensional manifolds embedded in $\R^d$. 
We  start by constructing a recovery sequence for a $\mathcal{U}$ whose defining sets $\{A_1,\ldots,A_{R}\}$ are of the form $A_r= B_r\cap D$, where $B_r$ has piecewise smooth boundary and satisfies $|D \one_{B_r}|_{\rho^2}(D)=0$. We say that such $\mathcal{U}$ is \textit{induced by piecewise smooth sets}. We later prove that such partitions are dense among partitions of $D$ by sets of finite perimeters. \footnote{Note that unlike in the two-class case, due to "triple junctions", one cannot approximate a general partition by a partition with sets with smooth boundaries. This makes the construction more complicated.}

\textbf{Contructing a recovery sequence for $\mathcal{U}$ induced by piecewise smooth sets.} Let $\Y^n_r = A_r \cap \X_n$ denote the restriction of $A_r$ to the first $n$ data points. Now, let us consider the transportation maps $ \left\{ T_{n} \right\}_{n \in \N}$ from Proposition \ref{thm:InifinityTransportEstimate}.  We let $A_n^r$ be the set for which $\one_{A^n_r} = \one_{\Y_n^r} \circ T_n $. 

We first notice that the fact that $B_r$ has a piecewise smooth boundary in $\R^{d}$ and the fact that $||Id - T_n||_{\infty} \rightarrow 0$, imply that
\begin{align}
\| \one_{A^n_r} - \one_{A_r}\|_{L^{1}(\nu)} \leq C_{0}(B_r) \; ||Id - T_n||_{\infty},
\label{aux:EstimateLimsupMulti}
\end{align}
where $C_0(B_r)$ denotes some constant that depends on the set $B_r$. This inequality follows from the formulas for the volume of tubular neighborhoods (see \cite{Weyl}). In particular, note that by the change of variables \eqref{ChangeOfVariables} we have, $|\Y_r^n|= |A^n_r| \rightarrow |A_r|$ as $n \rightarrow \infty$, so that in particular we can assume that $|\Y_r^n| \not =0$.  We define $u^n_r := \one_{\Y^n_r}/|\Y^n_r|$ as the corresponding normalized indicator function. We claim that $\mathcal{U}_n := (u^{n}_1,\ldots,u^{n}_{R})$ furnishes the desired recovery sequence. 

To see that $\mathcal{U}_n \in \Part_n(D)$ we first note that each $u^{n}_r \in \ind_n(D)$ by construction. On the other hand, the fact that  $\{A_1,\ldots,A_{R}\}$ forms a partition of $D$ implies that $\{\Y^n_1,\ldots,\Y^n_R \}$ defines a partition of $\X_n$. As a consequence,
$$
E_{n}(\mathcal{U}_n) = \sum^{R}_{r=1} \; GTV_{n, \veps_n}( u^n_r) 
$$
by definition of the $E_n$ functionals. 


Using \eqref{aux:EstimateLimsupMulti}, we can proceed as in remark 5.1 in \cite{GTS}. In particular, we can assume that $\eta$ has the form $\eta(z) = a$ for $z < b$ and $\eta(z) = 0$ otherwise. We set  $\tilde{\veps}_n := \veps_n + \frac{2}{b}|| Id - T_n||_{\infty}$. Recall that by assumption  $|| Id - T_n||_{\infty} \ll \veps_n$, and thus $\tilde \veps_n$ is a perturbation of $\veps_n$.
Define the non-local total variation $\tTV_{\tilde{\veps}_n}$ of an integrable function $u \in L^1(\nu)$ as
\begin{align*}
\tTV_{\tilde{\veps}_n}(u) := \frac{1}{\tilde{\veps}^{d+1}_n} \int_{\D \times \D} \eta\left(\frac{|x-y|}{\tilde{\veps}_n}\right)|u(x)-u(y)| \rho(x) \rho(y)\; \rd x \rd y.
\end{align*} 
Using the definition of $\tilde{\veps}_n$, and the form of the kernel $\eta$, we deduce that for all $n\in \N$, and almost every $x,y \in D$ we have
$$ \eta\left( \frac{|T_n(x)-T_n(y)|}{\veps_n}   \right) \leq \eta \left(\frac{|x-y|}{\tilde{\veps}_n}\right). $$
This inequality an a change of variables (see \ref{ChangeOfVariables}) implies that
\begin{equation*}
\frac{\veps_n^{d+1}}{\tilde{\veps}_n^{d+1}}GTV_{n, \veps_n}( \one_{\Y^n_r})    \leq \tTV_{\tilde{\veps}_n}(\one_{A^n_r} ).
\end{equation*}

A straightforward computation shows that there exists a constant $K_0$ such that
\begin{align*}\label{eq:sich}
| \tTV_{\tilde{\veps}_n}( \one_{A^n_r}) - \tTV_{\tilde{\veps}_n}( \one_{A_r}) | \leq \frac{ K_{0} }{\tilde{\veps}_n} \| \one_{A^n_r} - \one_{A_r}\|_{L^{1}(\nu)} \leq K_0 C_0(B_r) \frac{||Id - T_n ||_\infty}{\tilde{\veps}_n}.
\end{align*}
Since $\frac{\veps_n}{\tilde{\veps}_n} \rightarrow 1$, the previous inequalities imply that
$$  \limsup_{n \in \N} GTV_{n \veps_n}(\one_{\Y^n_r}) \leq  \limsup_{n \in \N}  \tTV_{\tilde{\veps}_n}( \one_{A^n_r}) = \limsup_{n \in \N} \tTV_{\tilde{\veps}_n}( \one_{A_r}).  $$

Finally, from remark 4.3 in  \cite{GTS} we deduce that
\begin{align*}
\limsup_{n \to \infty} \; \tTV_{\tilde{\veps}_n}( \one_{A_r^n}) \leq \sigma_\eta TV(\one_{A_r}),
\end{align*}
and thus we conclude that  $\limsup_{n \rightarrow \infty} \; GTV_{n,\veps_n}( \one_{A^r}) \leq \sigma_\eta TV( \one_{A_r})$. As a consequence we have
$$
\limsup_{n \to \infty} \; GTV_{n,\veps_n}(u^{n}_r) = \limsup_{n \to \infty} \; \frac{ GTV_{n,\veps_n}( \one_{\Y^n_r}) }{ B_n(\one_{\Y^n_r}) } \leq  \sigma_\eta \frac{ TV( \one_{A_r}) }{ B(\one_{A_r}) }
$$
for each $r$, by continuity of the balance term. From the previous computations we conclude that $E_n(\mathcal{U}_n) \to E(\mathcal{U})$, and from \ref{aux:EstimateLimsupMulti}, we deduce that $\mathcal{U}_n \to \mathcal{U}$ in the $TL^1$-sense, so that $\mathcal{U}_{n}$ does furnish the desired recovery sequence.

\textbf{Density.} To establish Proposition \ref{prop:gamma2}, we show that for any given $ \mathcal{U}=(\tilde{\one}_{A_1},\ldots,\tilde{\one}_{A_{R}})$ where each of the sets $A_r$ has finite perimeter, there exists a sequence   $\left\{  \mathcal{U}_m=(\tilde{\one}_{A_1^m}, \dots, \tilde{\one}_{A_R^m}) \right\}_{m \in \N}$, where each of the $\mathcal{U}_m$ is induced by piecewise smooth sets, and such that  for every $r \in \left\{1, \dots,R \right\}$
$$\one_{A_r^m } \converges{L^1(\nu)} \one_{A_r}, $$
and 
$$  \lim_{m \rightarrow \infty}  TV(\one_{A_r^m};D) = TV(\one_{A_r};D). $$

Note that in fact, by establishing the existence of such approximating sequence, it immediately follows that $\mathcal{U}_m \rightarrow \mathcal{U}$ in $(L^1(\nu))^R$ and that $\lim_{m \rightarrow \infty} E(\mathcal{U}_m)  = E(\mathcal{U})$ ( by continuity of the balance terms).  We provide the construction of the approximating sequence $\left\{ \mathcal{U}_m\right\}_{m \in \N}$  through the sequence of three lemmas presented below.
\begin{lemma}
Let $\{A_1, \dots, A_R\}$ denote a collection of open and bounded sets with smooth boundary in $\R^d$ that satisfy
\begin{equation}
 \mathcal{H}^{d-1}( \partial A_r \cap \partial A_s ) = 0 \:, \: \forall r \not =s.
 \label{HypothesisBoundaries}
 \end{equation}   
Let $D$ denote an open and bounded set. Then there exists a permutation $\pi: \left\{1, \dots, R \right\} \rightarrow \left\{ 1, \dots, R \right\}$ such that
$$  TV(\one_{A_{\pi(r)} \setminus \bigcup_{s=r+1}^{R} A_{\pi(s)}} ; D ) \leq TV(\one_{A_{\pi(r)}}; D), \: \forall r \in \left\{ 1, \dots, R \right\} .  $$
\label{LemmaPermutation}
\end{lemma}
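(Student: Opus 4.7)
The plan is to proceed by induction on $R$. The base case $R=1$ is trivial, since $\bigcup_{s=2}^{1} A_{\pi(s)} = \emptyset$. For the inductive step, writing $U_k := \bigcup_{j\neq k} A_j$, it suffices to exhibit some index $k_0 \in \{1,\ldots,R\}$ with
\[ TV(\one_{A_{k_0} \setminus U_{k_0}}; D) \leq TV(\one_{A_{k_0}}; D); \]
setting $\pi(1) := k_0$, the remaining collection $\{A_k\}_{k \neq k_0}$ still satisfies \eqref{HypothesisBoundaries}, and the induction hypothesis supplies $\pi(2),\ldots,\pi(R)$.

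To extract such a $k_0$, I would establish the averaged inequality $\sum_{k=1}^R \bigl[TV(\one_{A_k}; D) - TV(\one_{A_k \setminus U_k}; D)\bigr] \geq 0$ and invoke pigeonhole. The smoothness of the $A_k$ combined with \eqref{HypothesisBoundaries} yields the decomposition, valid modulo $\mathcal{H}^{d-1}$-null sets,
\[ \partial(A_k \setminus U_k) \cap D \;\sim\; \bigl(\partial A_k \setminus \overline{U_k}\bigr) \cup \bigl(\partial U_k \cap A_k\bigr), \]
so that \eqref{TVsmoothSets} reduces each summand to
\[ TV(\one_{A_k}; D) - TV(\one_{A_k \setminus U_k}; D) = \int_{\partial A_k \cap U_k \cap D} \rho^2 \, d\mathcal{H}^{d-1} - \int_{\partial U_k \cap A_k \cap D} \rho^2 \, d\mathcal{H}^{d-1}. \]

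Summing over $k$ is then a multiplicity bookkeeping on the surface $S := \bigcup_j \partial A_j$. By \eqref{HypothesisBoundaries}, for $\mathcal{H}^{d-1}$-a.e.\ $x \in S$ there is a unique $j(x)$ with $x \in \partial A_{j(x)}$; set $I(x) := \{k \neq j(x) : x \in A_k\}$. A check shows that $x$ contributes to $\sum_k \int_{\partial A_k \cap U_k \cap D} \rho^2$ iff $|I(x)| \geq 1$ (always via the single term $k = j(x)$), and contributes to $\sum_k \int_{\partial U_k \cap A_k \cap D} \rho^2$ iff $|I(x)| = 1$, in the term $k \in I(x)$: indeed, if $|I(x)| \geq 2$ and $k \in I(x)$, then $x$ lies in the open set $A_{j'}$ for some $j' \in I(x) \setminus \{k\}$ distinct from $j(x)$, so $x$ lies in the interior of $U_k$ and cannot be on $\partial U_k$. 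Subtracting gives
\[ \sum_{k=1}^R \bigl[TV(\one_{A_k}; D) - TV(\one_{A_k \setminus U_k}; D)\bigr] = \int_{\{x \in S \cap D \,:\, |I(x)| \geq 2\}} \rho^2 \, d\mathcal{H}^{d-1} \geq 0, \]
which closes the argument.

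The main technical point will be rigorously justifying the boundary decomposition displayed above, since $U_k$ is merely a finite union of smooth sets and not itself smooth; this is exactly where \eqref{HypothesisBoundaries} does the essential work, guaranteeing that the ``corner locus'' $\partial A_k \cap \partial U_k$ is $\mathcal{H}^{d-1}$-negligible so that the two boundary contributions decouple. Once that point is secured, the rest is elementary counting and induction.
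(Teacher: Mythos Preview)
Your proposal is correct and takes essentially the same approach as the paper: induction on $R$, with the inductive step secured by showing that the sum $\sum_k \bigl[TV(\one_{A_k};D) - TV(\one_{A_k\setminus U_k};D)\bigr]$ is nonnegative and then invoking pigeonhole. The only cosmetic difference is that the paper packages the bookkeeping via the quantities $a_{rs} := \int_{\partial A_r \cap (A_s \setminus \bigcup_{k\neq r,s} A_k)\cap D} \rho^2\,d\mathcal{H}^{d-1}$ and argues by contradiction (if every index failed, summing would give $\sum_{r}\sum_{s\neq r} a_{sr} > \sum_{r}\sum_{s\neq r} a_{rs}$), whereas you carry out the same double-counting directly via the pointwise multiplicity function $|I(x)|$.
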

\begin{proof}
The proof is by induction on $R$. \textbf{Base case:} Note that if $R=1$ there is nothing to prove. \textbf{Inductive Step:} Suppose that the result holds when considering any $R-1$ sets as described in the statement. Let $A_1, \dots, A_R$ be a collection of sets as in the statement. By the induction hypothesis it is enough to show that we can find $r \in \left\{ 1, \dots, R \right\}$ such that 
         \begin{equation}
TV(\one_{A_{r} \setminus \bigcup_{s \not = r } A_{s}} ;D ) \leq TV(\one_{A_{r}};D).  
\label{IneqLemma1}
\end{equation} 
To simplify notation, denote by $\Gamma_r$ the set $\partial A_r$ and define $a_{rs}$ as the quantity
$$a_{rs}:=   \int_{\Gamma_r \cap ( A_s \setminus \bigcup_{k\not =r, k \not = s} A_k )\cap D} \rho^2(x)  \;\rd \mathcal{H}^{d-1}(x). $$
%
%
%
Hypothesis \eqref{HypothesisBoundaries} and \eqref{TVsmoothSets} imply that the equality 
\begin{equation}
TV( \one_{A_r \setminus \bigcup _{s \not =r} A_s} ; D ) = \int_{\Gamma_r \cap ( \bigcup_{k \not =r} A_k   )^c \cap D} \rho^2(x) \;\rd\mathcal{H}^{d-1}(x) + \sum_{s :\:s \not = r} a_{sr}
\label{SecondIneqLemma1}  
\end{equation}
holds for every $r \in \left\{1, \dots, R \right\},$ as does the inequality
\begin{equation}
TV(\one_{A_r}; D ) \geq  \int_{\Gamma_r \cap ( \bigcup_{k \not =r} A_k   )^c \cap D} \rho^2(x) \;\rd\mathcal{H}^{d-1} + \sum_{s :\: s \not =r}a_{rs}.
\label{FirstIneqLemma1}
\end{equation}
If $  TV(\one_{A_{r} \setminus \bigcup_{s \not = r } A_{s}} ; D ) > TV(\one_{A_{r}}; D)$ for every $r$ then \eqref{FirstIneqLemma1} and \eqref{SecondIneqLemma1} would imply that
$$ \sum_{s :\:s \not = r} a_{sr} > \sum_{ s  :\:s \not =r} a_{rs}, \: \: \forall r,     $$   
which after summing over $r$ would imply 
$$  \sum_{r=1}^{R} \sum_{s :\:s \not = r}   a_{sr}  > \sum_{r=1}^{R} \sum_{s :\:s \not = r}   a_{rs} = \sum_{r=1}^{R} \sum_{s :\:s \not = r}   a_{sr}.  $$
This would be a contradiction. Hence there exists at least one $r$ for which \eqref{IneqLemma1} holds.
\end{proof}

\begin{lemma}
Let $D$ denote an open, bounded domain in $\Rd$ with Lipschitz boundary and let $(B_{1},\ldots,B_{R})$ denote a collection of $R$ bounded and mutually disjoint subsets of $\R^d$ that satisfy
\begin{align*}
(\mathrm{i}) \;\; TV(\one_{B_r};\Rd) < +\infty \quad \text{,} \quad
(\mathrm{ii}) \;\; |D \one_{B_r}|_{\rho^2}(\partial D) = 0 \quad \text{and} \quad
(\mathrm{iii}) \;\; D \subseteq \cup_{r=1}^{R} B_r .
\end{align*}
Then there exists a sequence of mutually disjoint sets $\{A^{m}_1,\ldots,A^{m}_{R}\}$ with piecewise smooth boundaries which cover $D$ and satisfy
\begin{equation}\label{eq:result1}
\one_{A^{m}_{r}} \to_{L^{1}(\Rd)} \one_{B_r} \quad \text{and} \quad \lim_{m \to \infty} \; TV(\one_{A^m_r};D) = TV(\one_{B_r};D)
\end{equation}
for all $1\leq r \leq R$.
\label{Lemma8}
\end{lemma}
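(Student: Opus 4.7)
My plan is to approximate each $B_r$ individually by an open set with smooth boundary that \emph{contains} $B_r$ up to a Lebesgue-null set, and then to resolve the resulting overlaps via Lemma \ref{LemmaPermutation}. The outer-inclusion property is what guarantees that the disjointified sets still cover $D$, sidestepping what would otherwise be a delicate perimeter-balancing problem for a ``leftover'' region.

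The first step is to build, for each $r$, an open bounded set $C_r^m$ with smooth boundary such that $B_r \subseteq C_r^m$ (modulo null sets), $|C_r^m \setminus B_r| \to 0$, and $TV(\one_{C_r^m}; D) \to TV(\one_{B_r}; D)$ as $m \to \infty$. Such sets arise as super-level sets $C_r^m := \{\one_{B_r} \ast \varphi_m \geq t_m\}$ of the mollified indicator, where $\varphi_m$ is a standard mollifier of scale $1/m$ and $t_m \to 0^+$ is chosen via Sard's theorem so that $\partial C_r^m$ is a smooth hypersurface. Hypothesis (ii) and Reshetnyak-type continuity (Portmanteau) of the weighted total-variation measures localize the TV convergence from $\R^d$ to $D$, and a further small generic perturbation (again via Sard) enforces the transversality $\mathcal{H}^{d-1}(\partial C_r^m \cap \partial C_s^m) = 0$ for $r \neq s$ required to apply Lemma \ref{LemmaPermutation}.

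The second step applies Lemma \ref{LemmaPermutation} to $\{C_1^m,\ldots,C_R^m\}$, yielding a permutation $\pi_m$ and disjoint sets
$$
\widetilde A_k^m := C_{\pi_m(k)}^m \setminus \bigcup_{s=k+1}^R C_{\pi_m(s)}^m, \qquad 1 \leq k \leq R,
$$
each with piecewise smooth boundary (as a Boolean combination of smooth sets meeting transversely) and satisfying $TV(\one_{\widetilde A_k^m}; D) \leq TV(\one_{C_{\pi_m(k)}^m}; D)$. Because $C_r^m \supseteq B_r$ for every $r$, the union $\bigcup_k \widetilde A_k^m = \bigcup_r C_r^m$ already contains $\bigcup_r B_r \supseteq D$ by hypothesis (iii), so there is no leftover to absorb; I then relabel by $A_r^m := \widetilde A_{\pi_m^{-1}(r)}^m$. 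Since $|C_r^m \setminus B_r| \to 0$ and the mutual disjointness of the $B_r$ gives $|B_r \cap C_s^m| \leq |C_s^m \setminus B_s| \to 0$ for $s \neq r$, a short set-theoretic calculation shows $|A_r^m \triangle B_r| \to 0$, i.e.\ $\one_{A_r^m} \to \one_{B_r}$ in $L^1(\R^d)$. The bound from Lemma \ref{LemmaPermutation} together with the first-step convergence gives
$$
\limsup_{m \to \infty} TV(\one_{A_r^m}; D) \leq \lim_{m \to \infty} TV(\one_{C_r^m}; D) = TV(\one_{B_r}; D),
$$
while the lower semicontinuity \eqref{TVlsc} of the weighted TV under $L^1$-convergence supplies the matching $\liminf$ bound, so the full limit equals $TV(\one_{B_r}; D)$ for every $r$.

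The main obstacle I foresee is the first step: while outer smooth approximation of a single set of finite perimeter is classical, arranging simultaneously the outer inclusion $B_r \subseteq C_r^m$, the localized TV convergence on $D$ (where hypothesis (ii) enters crucially), and the transversality needed for Lemma \ref{LemmaPermutation} to produce piecewise smooth output, all require careful use of Sard's theorem together with weak-$*$ continuity of the weighted perimeter measures. The outer-inclusion property is precisely what averts the more serious difficulty of absorbing a nontrivial leftover region into one of the final sets, which would otherwise corrupt the matching $TV$ upper bound.
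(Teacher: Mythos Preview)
Your overall architecture --- mollify, pass to smooth super-level sets, apply Lemma~\ref{LemmaPermutation} to disjointify, and close with lower semicontinuity --- is exactly the paper's. The gap is in your first step. For $C_r^m=\{\one_{B_r}\ast\varphi_m\ge t_m\}$ with any $t_m>0$, the inclusion $B_r\subseteq C_r^m$ modulo null sets is \emph{false} in general: near thin features of $B_r$ (a wedge of small opening angle, or a connected component of diameter below $1/m$) the mollified indicator $u_r^m$ can take values arbitrarily close to $0$ at points of $B_r$ of full density, so $|B_r\setminus C_r^m|$ is typically positive. You therefore only get $|B_r\setminus C_r^m|\to 0$, and your covering argument ``$\bigcup_r C_r^m\supseteq\bigcup_r B_r\supseteq D$'' breaks. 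A second, related issue is that sending $t_m\to 0$ destroys the coarea/Fatou mechanism that pins $\limsup_m TV(\one_{C_r^m};D)\le TV(\one_{B_r};D)$: that mechanism selects good thresholds on a full-measure subset of a \emph{fixed} interval, not thresholds tending to zero.

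The paper repairs both problems at once by abandoning outer inclusion. It observes that for $x\in D$,
\[
\sum_{r=1}^R u_r^m(x)=(J_m\ast\one_{\cup_r B_r})(x)\ \ge\ (J_m\ast\one_D)(x)\ \ge\ \alpha>0,
\]
the last bound coming from the interior cone condition of the Lipschitz boundary of $D$ --- the only place that hypothesis is used, and you never invoke it. Choosing all thresholds $t_r\in(0,\alpha/R)$ then forces, for every $x\in D$, that some $u_r^m(x)\ge\alpha/R>t_r$, so the super-level sets cover $D$ \emph{exactly} with no inclusion relation needed between $C_r^m$ and $B_r$. With the thresholds confined to the fixed interval $(0,\alpha/R)$, the coarea/Fatou/subsequence selection for $TV$ convergence and the Sard/transversality selection go through as you sketch. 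If you insist on the outer-inclusion route, you would have to invoke a genuine strict outer smooth approximation theorem (of Schmidt type) rather than the super-level set construction, and then run the Reshetnyak/Portmanteau localization you mention; that is viable, but it is not what your proposed construction delivers.
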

\begin{proof}
First of all note that $TV(\one_{B_r};\Rd)$ and $|D \one_{B_r}|_{\rho^2} (\partial D)$ are defined considering $\rho$ as a function from $\R^d$ into $\R$. We are using the extension considered when we introduced the weighted total variation at the beginning of subsection \ref{sec:TV}. Given that $\rho^2: \R^d \rightarrow (0,\infty)$ is lower semi-continuous and bounded below and above by positive constants then, it belongs to the class of weights considered in \cite{Baldi} where the weighted total variation is studied. 

Let $\{\gamma_k\}_{k \in \N}$ denote some sequence of positive reals converging to zero and
\begin{equation*}
J_{k}(x) := \frac{1}{\gamma^{d}_{k}}J\left( \frac{|x|}{\gamma_k} \right)\;  J \geq 0, \; J \in C^{\infty}_{c}( [0,1] ), \; \; \int_{\R^{d}} J(x) \; \rd x = 1
\end{equation*}
a corresponding sequence of positive, radially symmetric mollifiers. Let $D_{k} := \{ x \in \Rd : \mathrm{dist}(x,D) < \gamma_k \}$ denote the open $\gamma_k$-neighborhood of the domain $D$. For each $k \in \mathbb{N}$ and each $B_{r}$ in the collection let
\begin{equation*}
u^{k}_{r} := J_{k} * \textbf{1}_{B_r}
\end{equation*}
denote a smoothed version of the characteristic function.

For any test function $\Phi \in C^{1}_{c}(D: \R^d)$ with $|\Phi(x)| \leq \rho^2(x),$ we have 
\begin{equation*}
\int_{D} u^{k}_{r} \; \mathrm{div}( \Phi(x) ) \; \rd x =- \int_{D_{k} } \one_{B_r} \; \mathrm{div} (J_k * \Phi(y))  \; \rd y \leq |D\textbf{1}_{B_r}|_{\rho^2}(D_{k}).
\end{equation*}
The equality follows from the symmetry of $J_k$ and the fact that $J_k * \Phi$ has support within $D_k$ while the inequality follows from the fact that $|J_k * \Phi| \leq \rho^2$ so it produces an admissible test function in the definition of the total variation. As a consequence,
\begin{equation*}
\limsup_{ k \to \infty} \; TV(u^{k}_r;D) \leq \limsup_{ k \to \infty} \; |D\one_{B_r}|_{\rho^2}(D_{k}) =  |D\one_{B_r}|_{\rho^2}(\overline{D}) = 
|D\one_{B_r}|_{\rho^2}(D)
\end{equation*}
due to the second assumption in the statement of the lemma. The fact that $u^{k}_{r} \to_{L^{1}(\Rd)} \one_{B_r}$ combines with the lower-semicontinuity of the total variation to imply
\begin{equation*}
TV(\one_{B_r};D) \leq \liminf_{k \to \infty} \; TV(u^{k}_r ; D) \leq \limsup_{k \to \infty} \; TV(u^{k}_r;D) \leq TV(\one_{B_r};D).
\end{equation*}
In other words, these sequences satisfy

\begin{equation*}
u^{k}_{r} \to_{L^1(\Rd)} \one_{B_r}, \qquad TV(u^{k}_r ; D) \to TV(\one_{B_r};D),  \qquad 0 \leq u^{k}_{r}(x) \leq 1 \;\; \forall x \in \R^{d}.
\end{equation*}

The $(u^k_1,\ldots,u^{k}_R)$ also satisfy one additional property that will prove useful: there exists a constant $\alpha > 0$ so that
$$
\Sigma^{k}(x) := \sum^{R}_{r=1} u^{k}_{r}(x) \geq \alpha > 0 \quad \text{for all} \quad x \in D.
$$
To see this, note that the fact that $D$ is an open and bounded set with Lipschitz boundary implies that there exists a cone $C \subseteq \R^d$ with non-empty interior, a family of rotations $\left\{ R_{x} \right\}_{x \in D}$ and $\zeta>0$ such that for every $x \in D,$
$$  x + R_x(C \cap B(0,\zeta))  \subseteq D. $$

The fact that $J$ is radially symmetric then implies that for every $x \in D$,

\begin{align*}
\int_{D} J_k(x-y)dy  \geq \int_{x + R_x(C \cap B(0,\zeta)) } J_k(x-y) dy = \\
\int_{C \cap B(0,\zeta)} J_k(y)dy = \int_{C \cap B(0,\frac{\zeta}{\gamma_k})} J(y)dy  \geq \alpha  >0
\end{align*}

for some positive constant $\alpha$. The summation $\Sigma^{k}(x)$ of all $u^{k}_{r}$ therefore satisfies the pointwise estimate
\begin{equation*}
\Sigma^{k}(x) := \sum^{R}_{r=1} u^{k}_{r}(x) = \int_{\Rd} J_k(x-y) \sum^{R}_{r=1} \one_{B_r}(y) \; \rd y \geq  \int_{D} J_k(x-y)dy \geq \alpha
\end{equation*}
for all $x \in D$ as claimed.

\textbf{Step 1:}
Now, for each $u^{k}_{r}$ and each $t \in (0,1)$ consider the superlevel set
\begin{equation*}
B^{k}_{r}(t) := \left\{ u^{k}_{r} > t \right\}.
\end{equation*}
The first claim is that, for any fixed $t$ in $(0,1),$ the characteristic function $\one_{ B^{k}_{r}(t)}$ converges in $L^{1}(\R^{d})$ to the characteristic function of the original set. To see this, note that
\begin{equation*}
B^{k}_{r}(t) \setminus B_{r} \subset \left\{ |u^{k}_r - \one_{B_r}| \geq t \right\}.
\end{equation*}
By Chebyshev's/Markov's inequality, if $\mathcal{L}_{d}$ denotes Lebesgue measure in $\Rd$ then
\begin{equation*}
\mathcal{L}_{d}( B^{k}_{r}(t) \setminus B_{r} ) \leq \mathcal{L}_{d} \left( \left\{ |u^{k}_r - \one_{B_r}| > t \right\} \right) \leq \frac{1}{t} \| u^{k}_{r} - \one_{B_r} \|_{L^{1}} \to 0.
\end{equation*}
In a similar fashion,
\begin{equation*}
\mathcal{L}_{d}( B_r \setminus B^{k}_{r}(t) ) \leq \mathcal{L}_{d}\left( \left\{ |u^{k}_r - \one_{B_r}| \geq (1-t) \right\} \right) \leq \frac{1}{1-t} \| u^{k}_{r} - \one_{B_r} \|_{L^{1}} \to 0.
\end{equation*}
As a consequence, it follows that
\begin{equation*}
\int_{\Rd} |\one_{B^{k}_{r}(t)} - \one_{B_r}| \; \rd x = \mathcal{L}_{d}( B^{k}_{r}(t) \setminus B_{r} ) + \mathcal{L}_{d}( B \setminus B^{k}_{r}(t) ) \to 0
\end{equation*}
as claimed.

\textbf{Step 2:}
The next claim is that there exists a set $\mathcal{T} \subset (0,1)$ of full Lebesgue measure with the following property: if $t \in \mathcal{T}$ then $B^{k}_{r}(t)$ has a smooth boundary for all $k$ and all sets $(B^{k}_1(t),\ldots,B^{k}_{R}(t))$ in the collection. To see this, note Sard's lemma (see for example \cite{Leoni}) implies that for any fixed $k \in \mathbb{N}$ the set $B^{k}_{r}(t)$ has smooth boundary up to an exceptional set $\mathcal{T}_{k,r} \subset (0,1)$ of Lebesgue measure zero. Now define the set $\mathcal{T}$ as
\begin{equation*}
\mathcal{T} = (0,1) \setminus \bigcup^{\infty}_{k=1} \bigcup^{R}_{r=1} \; \mathcal{T}_{k,r}.
\end{equation*}
Note that $\mathcal{T}$ has full measure since a countable union of Lebesgue-null sets has measure zero. If $t \in \mathcal{T}$ then it does not lie in any of the exceptional sets, meaning that for each $k$ and each $r$ the set $B^{k}_{r}(t)$ has a smooth boundary.

\textbf{Step 3:}
We use a diagonal argument to construct an approximating sequence of partitions that are not necessarily disjoint, but satisfy the hypotheses of Lemma \eqref{HypothesisBoundaries}. 

For the set $B_1$, Step 1 and lower semi-continuity of the total variation imply that for all $t \in (0,1)$ 
\begin{equation*}
TV(\one_{B_1};D) \leq \liminf_{k \to \infty} \;TV(\one_{B^{k}_1(t)};D).
\end{equation*}
On the other hand, Fatou's lemma combines with the co-area formula to imply
\begin{align*}
\int^{1}_{0} \liminf_{k \to \infty} \;TV(\one_{B^{k}_1(t)};D) \; \rd t \leq \lim_{k \to \infty} \int^{1}_{0} TV(\one_{B^{k}_1(t)};D) \; \rd t = \lim_{k \to \infty} \; TV(u^{k}_{1};D) = TV(\one_{B_1};D).
\end{align*}
In other words,
\begin{equation*}
TV(\one_{B_1}; D) \leq \liminf_{k \to \infty} \; TV(\one_{B^{k}_1(t)};D) \qquad \text{and} \qquad \int^{1}_{0} \liminf_{k \to \infty} \; TV(\one_{B^{k}_1(t)};D) \; \rd t = TV(\one_{B_1};D),
\end{equation*}
which implies
\begin{equation*}
\liminf_{k \to \infty} \;TV(\one_{B^{k}_1(t)};D) = TV(\one_{B_1};D)
\end{equation*}
almost everywhere. In particular, there exists a $t_{1} \in \mathcal{T}$ with $0 < t_{1} < \alpha/R$ and a subsequence $\{k_m\}_{m \in \N}$ with the property that
\begin{equation}\label{eq:nice_boundary}
\partial B^{k_m}_{1}(t_1) \; \text{is smooth} \; \forall  m, \quad  \lim_{m \to \infty} \; TV(\one_{B^{k_{m}}_1(t_1)};D) =  TV(\one_{B_1};D),  \quad \one_{B^{k_{m}}_1(t_1)} \converges{L^{1}(\nu)} \one_{B_1}.
\end{equation}

We now pass to the set $B_2$. As $\partial B^{k_m}_{1}(t_1)$ is smooth and bounded for all $m$, it has zero Lebesgue measure for all $m$ in particular. As $u^{k_m}_2$ is smooth, lemma 2.95 in \cite{Fusco} implies that
$$
\mathcal{H}^{d-1}\left( \partial B^{k_m}_{1}(t_1) \cap \partial B^{k_m}_{2}(t) \right) = 0
$$
for almost every $t \in (0,1)$. Let $\mathcal{T}_{2,m}$ denote the $m^{ {\rm th}}$ exceptional set for which this property does not hold. Define the set
$$
\mathcal{T}_{2} := \mathcal{T} \setminus \bigcup^{\infty}_{m=1} \mathcal{T}_{2,m},
$$
which has full Lebesgue measure. By definition, if $t \in \mathcal{T}_{2}$ then $\partial B^{k_m}_{2}(t)$ is smooth for all $m$ and
$$
\mathcal{H}^{d-1}\left( \partial B^{k_m}_{1}(t_1) \cap \partial B^{k_m}_{2}(t) \right) = 0
$$
for all $m$ as well. Along the subsequence $\{k_m\},$ the lower semi-continuity property still holds,
\begin{equation*}
TV(\one_{B_2};D) \leq \liminf_{k \to \infty} \;TV(\one_{B^{n_k}_2(t)};D),
\end{equation*}
as does the argument based on Fatou's lemma and the co-area formula. In particular, there exists a further subsequence $\{k_{m_l}\}_{l \in\N}$ and a $t_{2} \in \mathcal{T}_{2}$ with $0 < t_{2} < \alpha/R$ so that \eqref{eq:nice_boundary} holds along this subsequence. The analogous properties hold for the sets $\{B^{k_{m_l}}_{2}(t_2)\}$ as well. Moreover, the relation
$$
\mathcal{H}^{d-1}\left( \partial B^{k_{m_l}}_{1}(t_1) \cap \partial B^{k_{m_l}}_{2}(t_2) \right) = 0
$$
also holds along this subsequence. By extracting $(R-2)$ more subsequences in this way, we obtain a subsequence  taht we denote simply by $k_m$ of the original sequence together with a sequence of sets $B^{k_m}_{r}(t_r)$ with $0 < t_{r} < \alpha/R$ that satisfy
\begin{align}\label{eq:ppppp}
&\partial B^{k_m}_{r}(t_r) \; \text{is smooth} \; \forall  m, \quad  \lim_{m \to \infty} \; TV(\one_{B^{k_{m}}_r(t_r)};D) =  TV(\one_{B_r};D),  \quad \one_{B^{k_{m}}_r(t_r)} \converges{L^{1}(\nu)} \one_{B_r}, \nonumber \\
&\mathcal{H}^{d-1}\left( \partial B^{k_{m}}_{r}(t_r) \cap \partial B^{k_m}_{s}(t_s) \right) = 0
\end{align}
for all $m$ and all $r \neq s$.

\textbf{Step 4:}
We now use the sets constructed in the previous step and lemma \ref{LemmaPermutation} to complete the proof. Let $B^{m}_{r} := B^{k_m}_{r}(t_r)$. We claim that the sets $(B^{m}_{1},\ldots,B^{m}_{R})$ cover $D$ as well. To see this, suppose there exists 
$$ x \in D \setminus \left( \bigcup^{R}_{r=1} B^{m}_r \right). $$ 
This would imply that $u^{k_m}_{r}(x) \leq t_{r}$ for all $r$ by definition. In turn,
$$
\Sigma^{k_m}(x) \leq \sum^{R}_{r=1} t_{r} < \alpha,
$$
which contradicts the estimate on $\Sigma^{k_m}$ obtained earlier. Due to \eqref{eq:ppppp} and Lemma \ref{LemmaPermutation}, for each $m \in \N$ there exists a permutation $\pi_{m} : \{1,\ldots,R\} \to \{1,\ldots,R\}$ with the property that
$$  
TV \left(\one_{A^{m}_{r}} ; D \right) \leq TV(\one_{B^{m}_{r}}; D)
$$
for all $1 \leq r \leq R$, where $A^{m}_{r}$ denotes the set
$$
A^{m}_{r} := B^{m}_{r} \setminus \bigcup_{ s = \pi_m^{-1}(r)+1}^{R} B^{m}_{\pi_m(s) }.
$$
Each $A^{m}_{r}$ has a piecewise smooth boundary for all $m$ due to the fact that each $B^{m}_{r}$ has a smooth boundary. The disjointness of $(B_{1},\ldots,B_{R})$ combines with the $L^{1}$-convergence of $\one_{B^{m}_r}$ to $\one_{B_r}$ to show that 
$$\one_{A^{m}_r} \to_{L^{1}(\Rd)} \one_{B_r}$$
as well. This combines with lower semi-continuity of the total variation to imply
\begin{align*}
TV(\one_{B_r};D)  & \leq \liminf_{m \rightarrow \infty}; TV \left(\one_{A^{m}_{r}} ; D \right)  \\ 
&\leq \limsup_{m \rightarrow \infty}\; TV \left(\one_{A^{m}_{r}} ; D \right) \leq \limsup_{m \rightarrow \infty}\; TV(\one_{B^{m}_{r}}; D) = TV(\one_{B_r};D).
\end{align*}
Finally, noting that
$$
D \subset \bigcup^{R}_{r=1} B^{m}_{r} = \bigcup^{R}_{r=1} A^{m}_{r}
$$ 
and that the $A^{m}_{r}$ are pairwise disjoint yields the claim.
\end{proof}

To complete the construction, and therefore to conclude the proof of theorem 2, we need to verify the hypotheses $(\mathrm{i}-\mathrm{ii})$ of the previous lemma. This is the content of our final lemma.
\begin{lemma}
Let $D $ be an open and bounded domain with Lipschitz boundary and let $\{A_1,\ldots,A_R\}$ denote a disjoint collection of sets that satisfy
\begin{equation*}
A_{r} \subset D \quad \text{and} \quad TV(\one_{A_r};D) < \infty.
\end{equation*}
Then, there exists a disjoint collection of bounded sets $(B_1,\ldots,B_R)$ that satisfy $B_r \cap D = A_r$ together with the properties
\begin{equation*}
(\mathrm{i}) \;\; TV(\one_{B_r};\Rd) < +\infty \quad \text{and} \quad (\mathrm{ii}) \;\; |D \one_{B_r}|_{\rho^2}(\partial D) = 0.
\end{equation*}
\end{lemma}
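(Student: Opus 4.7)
The plan is to extend each $A_r$ outside $D$ by reflecting across $\partial D$ in a small neighborhood, so that the extended sets $B_r$ have matching traces on both sides of $\partial D$ and therefore carry no $\mathcal{H}^{d-1}$-mass there.

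First, I would use the Lipschitz regularity of $\partial D$ to produce an open neighborhood $U$ of $\partial D$ together with a bi-Lipschitz involution $\Phi : U \to U$ that fixes $\partial D$ pointwise and swaps $U \cap D$ with $U \setminus \overline{D}$. A standard construction covers $\partial D$ by finitely many rotated boxes in which $\partial D$ is the graph of a Lipschitz function, defines the reflection across that graph in each box, and glues these local reflections via a bi-Lipschitz tubular-neighborhood chart, which is classical for Lipschitz hypersurfaces.

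Next, I would set $B_r := A_r \cup \Phi(A_r \cap U)$ for each $r$. The conditions $B_r \cap D = A_r$, boundedness of $B_r$, and the mutual disjointness of the $\{B_r\}$ follow directly: the reflected pieces lie in $U \setminus \overline D$, which is disjoint from $D$; $\Phi$ is a bijection, so it preserves the disjointness of $\{A_r \cap U\}$; and both $U$ and the $A_r$ are bounded. For property $(\mathrm{i})$, I would decompose
$$|D\one_{B_r}|(\Rd) = |D\one_{B_r}|(D) + |D\one_{B_r}|(\partial D) + |D\one_{B_r}|(\Rd \setminus \overline D).$$
The first term equals $TV(\one_{A_r}; D) < \infty$, the third term is at most $C\, TV(\one_{A_r}; U \cap D)$ by the change-of-variables formula for BV functions under bi-Lipschitz maps, and the middle term is handled in $(\mathrm{ii})$.

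For property $(\mathrm{ii})$, since $\rho$ is bounded below and above by positive constants, it suffices to show $|D\one_{B_r}|(\partial D) = 0$. By construction $\one_{B_r}(x) = \one_{B_r}(\Phi(x))$ for $x \in U$; because $\Phi$ is a bi-Lipschitz reflection fixing $\partial D$, this identity forces the interior and exterior traces of $\one_{B_r}$ on $\partial D$ to coincide $\mathcal{H}^{d-1}$-almost everywhere. The standard BV trace theorem then yields $|D\one_{B_r}|(\partial D) = 0$, completing the argument.

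The main technical obstacle is the construction of the global bi-Lipschitz involution $\Phi$: producing the local reflection across each Lipschitz boundary chart is straightforward, but assembling them into a single globally defined involution of a neighborhood of $\partial D$ that remains bi-Lipschitz requires some care. Once $\Phi$ is in hand, the verification of the listed properties reduces to standard BV-trace and change-of-variables computations together with an elementary disjointness argument.
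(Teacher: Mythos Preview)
Your reflection argument is correct and is, in substance, exactly what the paper invokes: the paper's entire proof is the one-line citation ``follows from Remark 3.43 in Ambrosio--Fusco--Pallara,'' which is precisely the extension-by-reflection construction for $BV$ functions across a Lipschitz boundary that you spell out. The underlying mechanism (local reflection across Lipschitz graphs, glued via a bi-Lipschitz tubular chart, yielding an extension with matching interior and exterior traces and hence no $|Du|$-mass on $\partial D$) is the same in both cases.

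Your explicit construction does buy one thing the bare citation leaves implicit: the disjointness of the extensions. Because you use a \emph{single} bi-Lipschitz involution $\Phi$ for all $r$ simultaneously, the reflected pieces $\Phi(A_r\cap U)$ inherit the disjointness of the $A_r\cap U$ from the bijectivity of $\Phi$, and since they lie in $U\setminus\overline D$ they cannot meet any $A_s\subset D$. The cited remark in Ambrosio--Fusco--Pallara treats a single function, so strictly speaking one has to observe that the extension operator it produces is of this ``common reflection'' type to conclude disjointness; your write-up makes that observation explicit. The technical caveat you flag (patching the local reflections into a global bi-Lipschitz involution of a neighborhood of $\partial D$) is real but standard, and is exactly the work hidden in the cited remark.
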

The proof follows from remark 3.43 in \cite{Fusco} (which with minimal modifications applies to 
total variation with weight $\rho^2$).
\end{proof}

\section{Numerical Experiments}
\label{sec:numerics}

We now present  numerical experiments  to provide a concrete demonstration and visualization of the theoretical results developed in this paper. These experiments focus on elucidating when and how minimizers of the graph-based Cheeger Cut problem,
\begin{equation}
u^{*}_{n} \in \underset{ u \in L^{1}(\nu_n) }{\mathrm{argmin} } \;\; E_{n}(u) \qquad \text{with} \qquad B_{n}(u) := \min_{c \in \mathbb{R}} \; \frac1{n} \sum^{n}_{i=1} |u(\x_i) - c|, 
\end{equation}
converge in the appropriate sense to a minimizer of the continuum Cheeger Cut problem
\begin{equation}
u^{*} \in \underset{ u \in L^{1}(\nu) }{\mathrm{argmin} } \;\; E(u) \qquad \text{with} \qquad B(u) := \min_{c \in \mathbb{R}} \int_{D}  |u(x) - c| \; \rd x.
\end{equation}
We always take $\rho(x) := 1/\mathrm{vol}(D)$ as the constant density. The data points $X_n := \{\x_1,\ldots,\x_n\}$ therefore represent i.i.d. samples from the uniform distribution. We consider the following two rectangular domains
$$
D_{1} := (0,1) \times (0,4) \qquad \text{and} \qquad D_{2} := \left(0,1\right) \times (0,1.5)
$$
in our experiments. We may easily compute the optimal continuum Cheeger Cut for these domains. The characteristic function
$$
\one_{A_{1}}(x) \qquad \text{for} \qquad A_1 := \left\{ (x,y) \in D_{1} : y > 2 \right\},
$$ 
when appropriately normalized, provides a minimizer $u^{*}_{1} \in L^{1}(\nu)$ of the continuum Cheeger Cut in the former case, while the characteristic function 
$$
\one_{A_{2}}(x) \qquad \text{for} \qquad A_{2} := \left\{ (x,y) \in D_{2} : y > 0.75 \right\}
$$
analogously furnishes a minimizer $u^{*}_{2} \in L^{1}(\nu)$ in the latter case. Figure \ref{fig:Illustration} provides an illustration of a sequence of discrete partitions, computed from the graph-based Cheeger Cut problem, to the optimal continuum Cheeger Cut.

\begin{figure}[t!]
\centering
\includegraphics[width=6in]{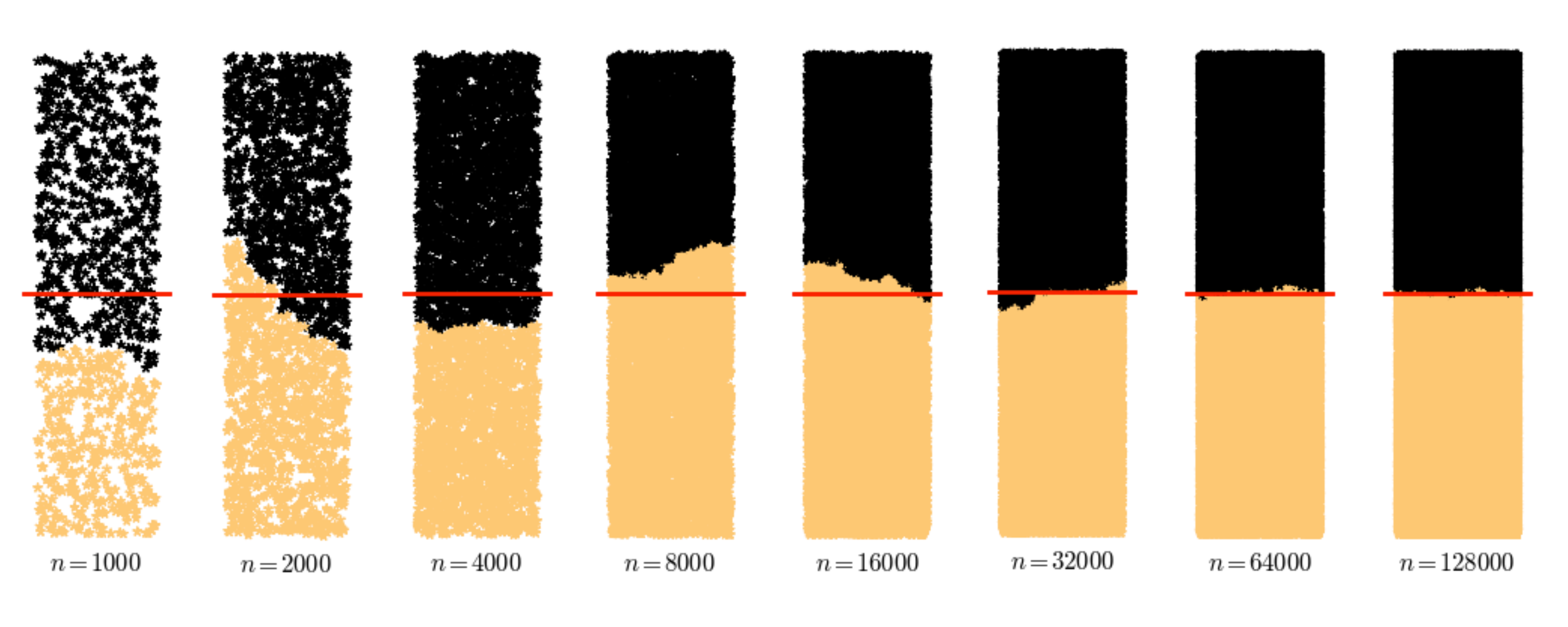}
\caption{Visualization of the convergence process. Each figure depicts a computed optimal partition $Y^*_n$ (in black) of one random realization of the random geometric graph $\mathcal{G}_n = (X_n,W_n)$ for each $k \in \{0,1,\ldots,7\}$, where $n =1000 \times 2^{k}$, $\veps = n^{-0.3}$  and the domain considered is $D_1$. Note that the scaling of $\veps$ with respect to $n$ falls within the context of our theoretical results. The red line indicates the optimal cut, i.e. the boundary of the set $A_1 := \left\{ (x,y) \in D_{1} : y > 2 \right\},$ at the continuum level. }
\label{fig:Illustration}
\end{figure}

Each of our experiments utilize the nearest neighbor kernel $\eta(z) = \one_{\{ |z|\leq 1\} }$ for the computation of the similarity weights,
$$
w_{i,j} = \one_{\{ \|\x_i  - \x_j\| \leq \veps_n\} },
$$
so that the graphs $\mathcal{G}_{n} = (X_n,W_n)$ correspond to $\veps_n$-nearest neighbor graphs (a.k.a. random geometric graphs).  We use the steepest descent algorithm     from \cite{BLUV12} to solve the graph-based Cheeger Cut problem on these graphs. We initialized the algorithm with the ``ground-truth'' partition $\Y^{i}_n := A_{i} \cap X_n,$ and we terminated the algorithm once three consecutive iterates show $0 \%$ change in the corresponding partition of the graph. We let $Y_n^*$ denote the partition of $\mathcal{G}_{n}$ returned by the algorithm, which we view as the ``optimal'' solution of the graph-based Cheeger Cut problem. We quantify the error between the optimal continuum partition $A_i \subsetneq D_{i}$ and the $n^{ {\rm th}}$ optimal graph-based partition $Y^{*}_n$ of $\mathcal{G}_n$ simply by using the percentage of misclassified data points, i.e.
\begin{equation}\label{eq:errr}
e_n = \frac1{n} \sum^{n}_{i=1} | \one_{Y_n^i}(\x_i) - \one_{Y^{*}_n}(\x_i)|.
\end{equation}

If $T_n(x)$ denotes a sequence of transportation maps between $\nu_n$ and $\nu$ that satisfy $||Id - T_n||_{\infty} = o(1),$ then by the change of variables \eqref{ChangeOfVariables}
$$
e_n = \int_{D} |\one_{A_i} \circ T_n(x) - \one_{Y^n_*} \circ T_n(x)| \; \rd x.
$$ 
By the triangle inequality, we therefore obtain
\begin{align*}
\|\one_{A_i} - \one_{Y^*_n} \circ T_n\|_{L^{1}(\nu)} &:= \int_{D} |\one_{A_i}(x) - \one_{Y^*_n} \circ T_n(x)| \; \rd x \\
&\leq e_n + \int_{D} |\one_{A_i}(x) - \one_{A_i} \circ T_n(x)| \; \rd x \leq  e_n + O\left(||Id - T_n||_{\infty} \right).
\end{align*}
The last inequality follows since each $A_i$ has a piecewise smooth boundary. In this way, if $||Id - T_n||_{\infty} = o(1)$ 
then verifying $e_n = o(1)$ suffices to show that $TL^{1}$ convergence of minimizers holds. Under this assumption, i.e. $||Id - T_n||_{\infty} = o(1)$, a similar argument shows that having $e_n = o(1)$ is equivalent to $TL^{1}$ convergence in the context of our experiments.

\begin{figure*}[t!]
\centering
\subfigure[$\veps_n = n^{-0.3}$]{\includegraphics[width=3in]{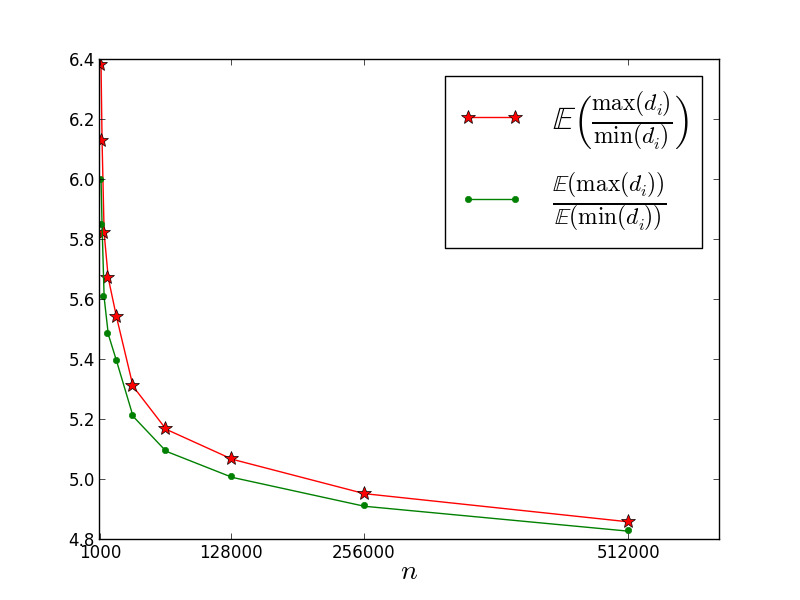}} \nolinebreak \subfigure[$\veps_n = 2(\log n/(\pi n))^{1/2}$]{ \includegraphics[width=3in]{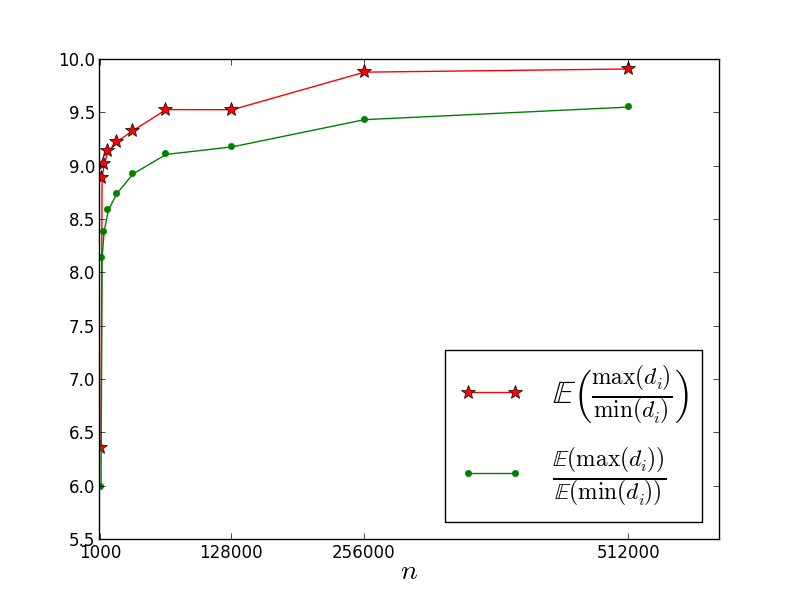} }
\caption{Graph regularity. We work with the domain $D_2$. For each scaling of $\veps_n$ with $n$, the corresponding plot depicts two measures of regularity for the sequence of random geometric graphs. The first measure (in red) is the average $\mathbb{E}(\mathrm{max}(d_i)/\mathrm{min}(d_i)),$ i.e. the average ratio of the maximal degree $\mathrm{max}(d_i)$ of $\mathcal{G}_n$ to the minimal degree. For each $n$, the average is computed over $1,440$ independent graph realizations. The second measure (in green) corresponds to the ratio of the average maximal degree to the average minimal degree, computed over $1,440$ independent trials as before. The graphs with $\veps_n = n^{-0.3}$ become increasingly regular while the graphs with $\veps_n = 2(\log n/(\pi n))^{1/2}$ become increasingly irregular.}
\label{fig:degree}
\end{figure*}

To check convergence, and to explore the issues related to Remark \eqref{Non-Optimald=2}, we perform exhaustive numerical experiments for three distinct scalings of $\veps_n$ with respect to the total number of sample points on the domain $D_2$\nc. Specifically, we consider the scalings
$$\veps_n= n^{-0.3}, \qquad \veps_n = 2\left( \frac{\log n}{\pi n} \right)^{1/2}, \quad \text{and} \quad \veps_n = \left( \frac{\log n}{\pi n} \right)^{1/2}.$$
These scalings correspond to three distinct types of random geometric graphs. The first scaling falls well within the acceptable bounds for $\veps_n$ covered by our consistency theorems. In particular, $\mathcal{G}_n$ is almost surely connected in this regime. The second scaling also gives rise to a sequence $\mathcal{G}_n$ of connected random geometric graphs (see \cite{GuptaKumar}, \cite{PenroseBook}). However, the geometric graphs $\mathcal{G}_n$ exhibit rather different structural properties in this case; if $ \veps_n= n^{-0.3}$ then the graphs $\mathcal{G}_n$ become increasingly regular as $n \to \infty,$ while if $\pi \veps^2_n = 2(\log n)/n$ then the graphs $\mathcal{G}_n$ become increasingly irregular. See Figure \ref{fig:degree} for an illustration. The final scaling corresponds to a scaling bellow the connectivity threshold of random geometric graphs (c.f. \cite{GuptaKumar}, \cite{PenroseBook}). The graphs $\mathcal{G}_n$ are almost surely disconnected under this scaling. However, in this regime each $\mathcal{G}_n$ has a ``giant component'' (i.e. a connected subgraph $\mathcal{H}_n$ of $\mathcal{G}_n$) that contains all but a small handful of vertices (c.f. Figure \ref{fig:plot2} at left).

We designed our experiments to explore the extent to which connectivity, and connectivity alone, is responsible for consistency of balanced cuts. The first scaling $\veps_n = n^{-0.3}$ serves as a benchmark or control. It falls within the context of our consistency theorems, and so provides a means of determining the ``typical'' behavior of balanced cut algorithms when consistency holds. The second scaling, which falls outside the realm of our consistency results, tests whether connected graphs with different structural properties still lead to consistent results. The final scaling probes the realm where connectivity fails, but in a mild and easily correctible way. As the theory outlined above indicates, if we pose the balance cut minimization over the full graph $\mathcal{G}_n$ then we can no longer expect consistency to hold. These graphs pose no practical difficulty, however, as we may simply extract the giant component $\mathcal{H}_n$ of each $\mathcal{G}_n$ and then minimize the balanced cut over this connected subgraph. We simply assign each vertex in $\mathcal{G}_n \setminus \mathcal{H}_n$ to one of the two classes uniformly at random. Our last experiment explores whether consistency might still hold using this modified approach.

\begin{table*}\centering
\ra{1.3}
\begin{tabular}{@{}lcccccccc@{} }\toprule
$n = $ & $1k$ & $2k$ & $4k$ & $8k$ & $16k$ & $32k$ & $64k$ & $128k$ \\ \midrule
$\veps_n = n^{-0.3}:$\\
$\;\;\;\;\;\;\;\;\;\;\mathbb{E}(e_n)$ & $.0778$ & $.0609$ & $.0471$ & $.0397$ & $.0321$ & $.0238$ & $.0205$ & $.0161$ \\
$\;\;\;\;\;\;\;\;\;\;$Trials & $20063$ & $4800$ & $1200$ & $1536$ & $1536$ & $1008$ & $192$ & $156$ \\
$\veps_n = 2(\log n/(\pi n))^{1/2}:$\\
$\;\;\;\;\;\;\;\;\;\;\mathbb{E}(e_n)$ & $.0717$ & $.0605$ & $.0507$ & $.0431$ & $.0366$ & $.0309$ & $\times$ & $\times$ \\
$\;\;\;\;\;\;\;\;\;\;$Trials & $10080$ & $10080$ & $4032$ & $1008$ & $1008$ & $304$ & $\times$ & $\times$ \\
$\veps_n = (\log n/(\pi n))^{1/2}:$\\
$\;\;\;\;\;\;\;\;\;\;\mathbb{E}(e_n)$ & $.3243$ & $.1977$  &  $.1203$  & $.0891$ & $.0672$  & $.0545$ & $.0442$ & $\times$  \\
$\;\;\;\;\;\;\;\;\;\;$Trials &  $2896$ & $16128$ & $8064$  & $2016$ & $1008$   & $592$  & $80$ & $\times$  \\
\bottomrule
\end{tabular}
\caption{Average error $\mathbb{E}(e_n)$ between partitions. For each $n$ and each scaling of $\veps_n,$ we obtained an estimate of the average error $\mathbb{E}(e_n)$ by computing the mean of \eqref{eq:errr} over the indicated number of independent trials. Figure \ref{fig:plot} provides a corresponding error plot.}
\label{tab:main_table}
\end{table*}

\begin{figure*}[t!]
\centering
\subfigure[$\veps_n = n^{-0.3}$]{\includegraphics[width=3in]{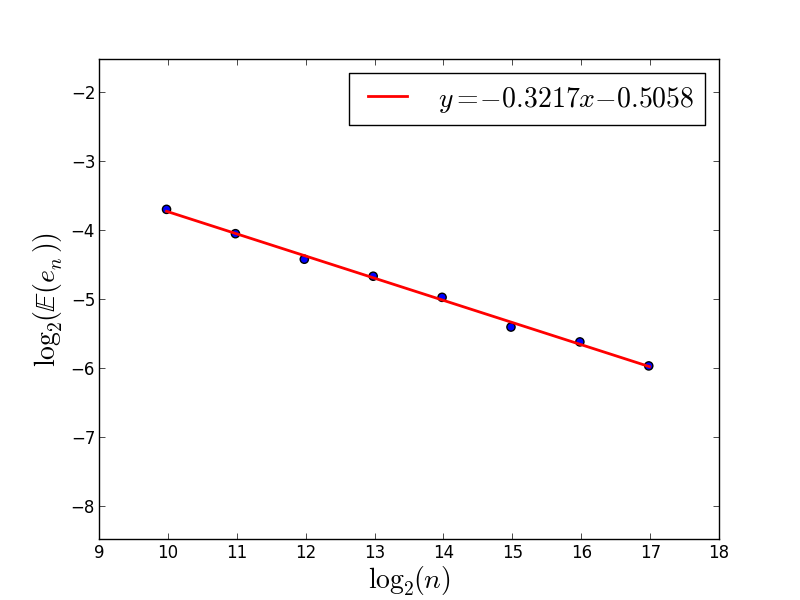}} \nolinebreak
\subfigure[$\veps_n = 2(\log n/(\pi n))^{1/2}$]{\includegraphics[width=3in]{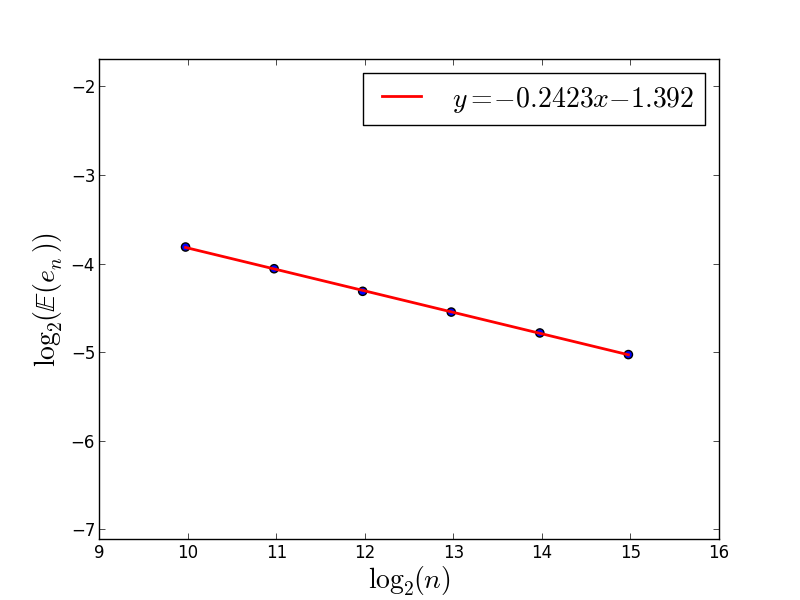}}
\caption{Log-log plot of the expected errors computed in Table \ref{tab:main_table}, together with a corresponding linear approximation.}
\label{fig:plot}
\end{figure*}

Table \ref{tab:main_table}, Figure \ref{fig:plot} and Figure \ref{fig:plot2} report the results of these experiments. In all cases, we measure error by using the expected number of misclassified points \eqref{eq:errr} averaged over the number of trials indicated in Table \ref{tab:main_table}. We used a smaller number of trials for large $n$ (or no trial at all, indicated by a $\times$) simply due to the overwhelming computational burden.     The measure of error considered in these experiments, taken alone, does not suffice to show convergence in the almost sure sense as provided by our consistency theorems. It does, however, indicate consistency in the weaker sense of convergence in probability (via Markov's inequality). The algorithm we use to optimize the discrete Cheeger cut also relies upon a non-convex minimization \cite{BLUV12}, so we cannot say with certainty that the corresponding computed optimizers are global. Instead, initializing the algorithm with the ``ground truth'' partition biases the algorithm toward the correct cut. If the algorithm were to fail under these circumstances, it would provide strong numerical evidence \emph{against} consistency. 

The results appear rather similar regardless of whether $\veps_n$ lies in the strongly connected ($\veps_n = n^{-0.3}$), weakly connected ($\veps_n  = 2(\log n/(\pi n))^{1/2}$) or weakly disconnected ($\veps_n  = (\log n/(\pi n))^{1/2}$) regimes. Indeed, in each case the error $\mathcal{E}(e_n)$ decays to zero with a polynomial rate. In other words, the varying structural properties of the random geometric graphs in these regimes do not seem to  play much of a role. While certainly not conclusive evidence, it seems reasonable to conjecture that consistency should hold, perhaps in the weaker probabilistic sense, for $\veps_n$ as small as the critical scaling for connectivity. We leave a further exploration of this for future research.

\begin{figure*}[t!]
\centering
\subfigure[]{\includegraphics[width=3in]{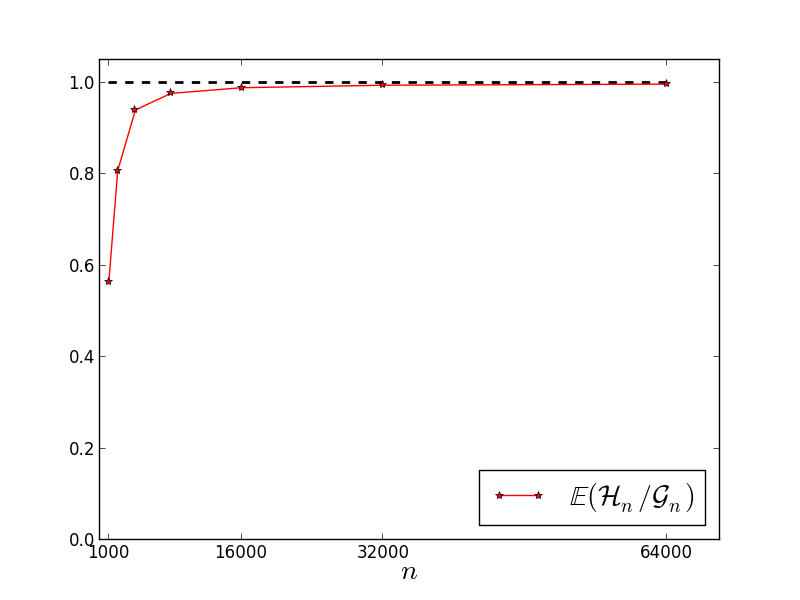}}\nolinebreak
\subfigure[]{\includegraphics[width=3in]{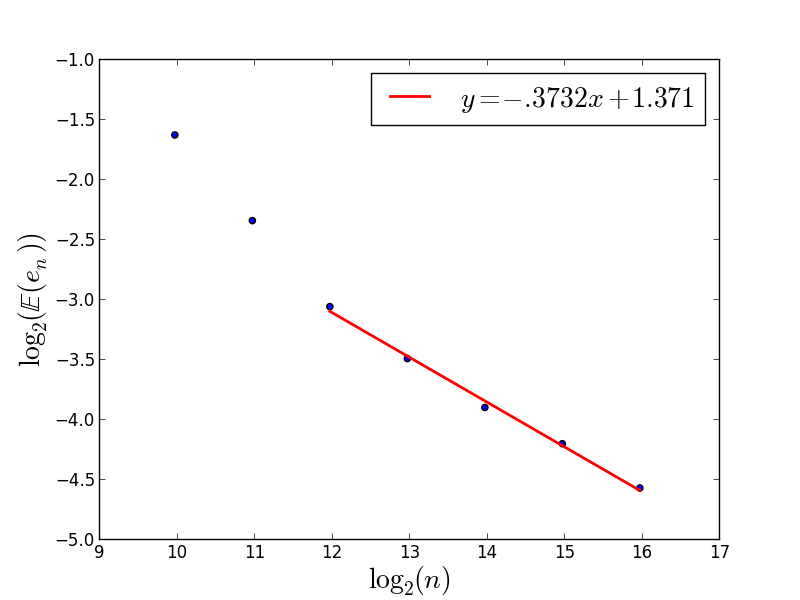}}
\caption{Convergence in the disconnected regime $\veps_n = (\log n/(\pi n))^{1/2}$. At left: the expected fraction of vertices that lie in the giant component $\mathcal{H}_n$ of the disconnected random geometric graph $\mathcal{G}_n$. At right: a log-log plot of the expected errors computed in Table \ref{tab:main_table}, together with a corresponding linear approximation for $n$ large. }
\label{fig:plot2}
\end{figure*}

\subsection*{Acknowledgements}
The authors are grateful to ICERM, where part of the research was done during the research cluster \emph{Geometric analysis methods for graph algorithms}.
DS and NGT are grateful to  NSF (grant DMS-1211760) for its support. JvB was supported by NSF grant DMS 1312344. TL was supported by NSF (grant DMS-1414396).
The authors would like to thank the Center for Nonlinear Analysis of the Carnegie Mellon University for its support.


\end{document}